\documentclass[letterpaper]{article} 
\usepackage[preprint]{aaai2027}  
\usepackage[hyphens]{url}  
\usepackage{graphicx} 
\usepackage{natbib}  
\usepackage{caption} 
\usepackage{algorithm,algpseudocode}

\usepackage{newfloat}
\usepackage{listings}
\DeclareCaptionStyle{ruled}{labelfont=normalfont,labelsep=colon,strut=off} 
\floatstyle{ruled}
\newfloat{listing}{tb}{lst}{}
\floatname{listing}{Listing}

\usepackage{pifont}

\newcommand{\cmark}{\ding{51}} 
\newcommand{\xmark}{\ding{55}}

\usepackage{booktabs}

\usepackage[utf8]{inputenc} 
\usepackage[T1]{fontenc}    
\usepackage{url}            
\usepackage{booktabs}       
\usepackage{amsmath,amsfonts,amsthm,amssymb,mathtools}       
\usepackage[table]{xcolor}         
\usepackage{makecell}
\usepackage{dsfont}
\usepackage{enumitem}
\newtheorem{theorem}{Theorem}

\newtheorem{lemma}{Lemma}
\newtheorem{definition}{Definition}
\newtheorem{remark}{Remark}
\newtheorem{assump}{Assumption}

\newcommand{\ones}{\mathds{1}}

\newcommand{\cA}{\mathcal{A}}
\newcommand{\cB}{\mathcal{B}}

\newcommand{\cE}{\mathcal{E}}
\newcommand{\cF}{\mathcal{F}}

\newcommand{\cM}{\mathcal{M}}

\newcommand{\cO}{\mathcal{O}}

\newcommand{\hv}{\hat{v}^{k,h}}

\newcommand{\hV}{\tilde{V}^{k,h}}
\newcommand{\hg}{\hat{g}^{k,h}}
\newcommand{\hF}{\hat{F}^{k,h}}
\newcommand{\hX}{\hat{X}^{k,h}}

\newcommand{\hA}{\hat{A}^{k,h}}
\newcommand{\hB}{\hat{B}^{k,h}}

\newcommand{\Ql}{Q_\textnormal{lin}}

\newcommand{\inter}[1]{\ifblank{#1}{\inter}{\textnormal{int}(#1)}}

\newcommand{\tmix}{\tau_\textnormal{mix}}

\newcommand{\bE}{\mathbb{E}}

\newcommand{\bP}{\mathbb{P}}
\newcommand{\bR}{\mathbb{R}}

\newcommand{\cT}{\mathcal{T}}
\newcommand{\MLMC}{\textnormal{MLMC}}

\newcommand{\KL}{\textnormal{KL}}

\newcommand{\SA}{|\cS| |\cA|}
\newcommand{\piS}{\pi^*}
\newcommand{\wS}{w^*}

\newcommand{\Tm}{T_\textnormal{max}}
\newcommand{\Hm}{H_\textnormal{max}}

\newcommand{\xS}{x^*}

\newcommand{\bH}{H_w}

\newcommand{\cL}{\mathcal{L}}
\newcommand{\cP}{\mathcal{P}}

\newcommand{\cR}{\mathcal{R}}
\newcommand{\cS}{\mathcal{S}}

\newcommand{\bbeta}{\bar{\beta}}

\newcommand{\epsb}{\varepsilon_\textnormal{bias}}
\newcommand{\epsap}{\varepsilon_\textnormal{app}}

\newcommand{\TV}{\textnormal{TV}}

\newcommand{\floor}[1]{\left\lfloor #1 \right\rfloor}

\newcommand{\Jp}{J^\pi}

\newcommand{\pth}{\pi_\theta}
\newcommand{\Jth}{J^\theta}

\newcommand{\nuth}{\nu^\theta}
\newcommand{\Ath}{A^\theta}
\newcommand{\Qth}{Q^\theta}

\newcommand{\pthk}{\pi_{\theta_k}}
\newcommand{\Jthk}{J^{\theta_k}}

\newcommand{\nuthk}{\nu^{\theta_k}}

\newcommand{\Qthk}{Q^{\theta_k}}

\newcommand{\Vc}{\textnormal{Vec}}

\title{Hierarchical Multilevel Monte Carlo for Order-Optimal Neural Actor-Critic in Average-Reward CMDPs}
\author {
    Ankur Naskar\textsuperscript{\rm 1,2},
    Vaneet Aggarwal\textsuperscript{\rm 2},
}
\affiliations {
    \textsuperscript{\rm 1}Indian Institute of Science, India\\
    \textsuperscript{\rm 2}Purdue University, USA\\
    ankurnaskar@iisc.ac.in, vaneet@purdue.edu
}

\begin{document}

\maketitle

\begin{abstract}
Constrained Markov Decision Processes (CMDPs) provide a natural framework for reinforcement learning in safety-critical applications, where agents maximize long-term reward while satisfying long-term constraints. Although primal-dual actor-critic methods with linear critics are well understood, extending order-optimal convergence guarantees to neural critics in average-reward CMDPs has remained open. The main challenge is a fundamental bias-cost tradeoff in neural critic estimation: under Neural Tangent Kernel (NTK) analysis, reducing critic bias requires substantially increasing critic optimization cost, preventing order-optimal convergence in the primal-dual framework. We resolve this bottleneck by introducing a hierarchical Multilevel Monte Carlo (MLMC) neural critic that performs debiasing simultaneously across trajectory sampling and critic optimization. The resulting estimator attains the bias of a long critic optimization run with only logarithmic expected sample cost. Building on this estimator, we develop a primal-dual Natural Actor-Critic algorithm that achieves both an optimality gap and a constraint violation of order $\tilde{\mathcal{O}}(T^{-1/2})$. This establishes the first order-optimal convergence guarantees for infinite-horizon average-reward CMDPs with general policy parameterization and neural critics, while eliminating the need to know the underlying mixing time. Our results are novel even in the unconstrained setting.
\end{abstract}


\section{Introduction}
\label{s: introduction}

Reinforcement learning (RL) has emerged as a powerful framework for sequential decision-making, with recent successes largely driven by deep neural networks. Many applications require agents not only to maximize long-term performance but also to satisfy resource or operational constraints, including transportation~\cite{al2019deeppool, haliem2021distributed}, communication networks~\cite{panju2021queueing}, robotics~\cite{chen2023option, gonzalez2023asap}, and healthcare~\cite{tamboli2024reinforced}. Such problems are naturally modeled as constrained Markov Decision Processes (CMDPs), where the objective is to maximize the long-run average reward while satisfying long-term safety or resource constraints. Compared with discounted formulations, the average-reward criterion directly captures steady-state performance and is therefore particularly suitable for continuing decision-making tasks.

Primal-dual policy-gradient and actor-critic methods have become the dominant framework for solving CMDPs due to their scalability and compatibility with function approximation. In these methods, the actor updates a parameterized policy using gradient information estimated by a critic, while a dual variable enforces long-term constraints. For linear critics, recent advances have established order-optimal convergence guarantees of $\tilde{\mathcal O}(T^{-1/2})$ under general policy parameterization~\cite{xu2026global, mondal2024sample}, matching the minimax sample complexity lower bound for average-reward reinforcement learning~\cite{jin2021towards}.

Despite these advances, extending order-optimal guarantees to neural critics remains an important open problem. Neural networks provide substantially greater representational power than linear critics and constitute the standard function approximator in modern reinforcement learning. Recent analyses based on the Neural Tangent Kernel (NTK) have enabled finite-sample guarantees for neural actor-critic methods by showing that, within a neighborhood of the network initialization, neural critics behave approximately as linear models. These techniques have led to order-optimal convergence for unconstrained discounted MDPs~\cite{ganesh2025order}. However, for average-reward CMDPs, the best existing result achieves only a $\tilde{\mathcal O}(T^{-1/4})$ convergence rate~\cite{Satheesh2026GlobalCO}, leaving a substantial gap from the optimal $\tilde{\mathcal O}(T^{-1/2})$ rate attainable with linear critics~\cite{xu2026global}.

The difficulty is not merely the use of neural networks, but a fundamental bias-cost tradeoff introduced by neural critic estimation. Under the NTK framework, the projection required to maintain the linear approximation causes the squared critic bias to decay at the same rate as its mean-squared error (MSE). Consequently, reducing the critic bias necessitates running the critic optimization for substantially more iterations. In contrast, the primal-dual actor-critic algorithm operates within a nested-loop architecture, where each outer primal-dual update can only afford a limited amount of critic computation while maintaining the overall sample budget. As a result, the critic MSE remains of order $\tilde{\varTheta}(1)$ (this implies $\tilde{\cO}(1)$ squared critic bias under NTK) whereas order-optimal convergence requires the squared critic bias to decay at a rate $\tilde{\mathcal O}(T^{-1})$. This creates a fundamental bias-cost bottleneck that prevents existing neural actor-critic algorithms from attaining order-optimal convergence.

In this work, we resolve this bottleneck by introducing a hierarchical Multilevel Monte Carlo (MLMC) neural critic.
Existing MLMC techniques randomize trajectory lengths to remove sampling bias under Markovian observations. Such estimators inherit the bias of a long trajectory while incurring only a logarithmic sampling cost. However, they do not address the optimization bias arising from finite neural critic training. Our key insight is to introduce a second MLMC layer that randomizes the optimization horizon itself, thereby debiasing critic optimization in addition to trajectory sampling. 

\begin{table*}[ht]
\centering
\renewcommand{\arraystretch}{1.15}
\setlength{\tabcolsep}{8pt}

\begin{tabular}{|c|c|c|c|c|}
\hline
\textbf{Reference}
& \textbf{\makecell{Neural\\Critic}}
& \textbf{\makecell{Convergence\\Rate}}
& \textbf{\makecell{Constraint\\Violation}}
& \textbf{\makecell{Unknown\\Mixing Time}}
\\
\hline
\citet{pmlr-v267-ganesh25b}
& \xmark
& $\widetilde{\mathcal O}(T^{-1/2})$
& N/A
& \cmark
\\
\hline
\citet{xu2026global}
& \xmark
& $\widetilde{\cO}(T^{-1/2+\varepsilon})$
& $\widetilde{\cO}(T^{-1/2+\varepsilon})$
& \cmark
\\
\hline
\citet{xu2026global}
& \xmark
& $\widetilde{\cO}(T^{-1/2})$
& $\widetilde{\cO}(T^{-1/2})$
& \xmark
\\
\hline
\citet{ganesh2025order}
& \cmark
& $\widetilde{\cO}(T^{-1/2})$
& N/A
& \xmark
\\
\hline
\citet{Satheesh2026GlobalCO}
& \cmark
& $\widetilde{\cO}(T^{-1/4})$
& $\widetilde{\cO}(T^{-1/4})$
& \cmark
\\
\hline
\rowcolor{gray!20}
\textbf{Our Work}
& \cmark
& $\widetilde{\cO}(T^{-1/2})$
& $\widetilde{\cO}(T^{-1/2})$
& \cmark
\\
\hline
\end{tabular}\label{tab:comparison}
\caption{
Comparison with related works under general policy parameterizations. The table indicates the type of critic employed (\xmark\ for linear vs \cmark\ for neural), the convergence and constraint-violation rates achieved (with ``N/A'' denoting unconstrained MDPs), and whether knowledge of the mixing time is required.
}
\end{table*}

Our main contributions are summarized as follows.

\begin{itemize}

\item \textbf{Resolving the Neural Critic Bias-Cost Tradeoff:}
We identify the fundamental bottleneck preventing order-optimal convergence of primal-dual actor-critic methods with neural critics. To overcome this challenge, we introduce a novel \emph{hierarchical} MLMC neural critic that performs multilayer debiasing across both trajectory sampling and critic optimization. Our estimator inherits the bias of a vanilla critic run for $T$ iterations while requiring only $\mathcal O(\log T)$ expected sample cost.

\item \textbf{First Order-Optimal Guarantees with Neural Critics:}
Building upon the proposed hierarchical MLMC neural critic, we establish global convergence and constraint-violation rates of $\tilde{\mathcal O}(T^{-1/2})$. To the best of our knowledge, this is the first primal-dual Natural Actor-Critic algorithm with neural critics and general policy parameterization to achieve order-optimal convergence for infinite-horizon CMDPs. Our results are novel even for the unconstrained average-reward setting.

\item \textbf{Mixing-Time-Free Algorithm:}
Unlike existing methods that achieve order-optimal convergence with neural critics, our algorithm requires no knowledge of the mixing time of the underlying CMDP, making it considerably more practical for large-scale reinforcement learning problems.

\end{itemize}

\subsection{Related Works}

\paragraph{Average-Reward Constrained MDPs:} CMDPs, introduced by~\cite{altman2021constrained}, provide the standard framework for safe reinforcement learning. Recent work has focused on primal-dual policy gradient methods for high-dimensional CMDPs~\cite{ding2020natural, paternain2019constrained}. Initial finite-sample guarantees were established for tabular and linear-function-approximation settings~\cite{xu2021crpo, chen2022learning, bai2023achieving}, while~\cite{agarwal2022concave} and~\cite{ghosh2023achieving} achieved optimal convergence with zero constraint violation in model-based and linear model-free settings, respectively. For general policy parameterizations,~\cite{bai2024learning} established a suboptimal rate, whereas~\cite{xu2026global} obtained the optimal $\tilde{\cO}(T^{-1/2})$ convergence rate. 

\paragraph{Actor-Critic Methods with Neural Critics:} Early finite-sample analyses of actor-critic methods assumed linear critics~\cite{khodadadian2021finite, cayci2024finite}. More recent works leverage Neural Tangent Kernel (NTK) theory~\cite{jacot2018neural} to analyze neural critics in discounted MDPs~\cite{fu2020single, tian2023convergence, cayci2024finite, gaur2024closing, ganesh2025order}. In particular,~\cite{ganesh2025order} established a $\tilde{\cO}(T^{-1/2})$ convergence rate for unconstrained discounted MDPs. For average-reward constrained MDPs,~\cite{Satheesh2026GlobalCO} established a $\tilde{\cO}(T^{-1/4})$ convergence rate, which remains the state-of-the-art. 

\paragraph{MLMC-Based Estimation:} Existing analyses under Markovian sampling often rely on sample thinning~\cite{gaur2024closing, ganesh2025orderoptimal}, which requires knowledge of the mixing time. Recent works have instead leveraged Multilevel Monte Carlo (MLMC) methods~\cite{blanchet2015unbiased, beznosikov2023first,pmlr-v235-patel24b, pmlr-v267-ganesh25b} to construct bias-controlled estimators via randomized trajectory lengths, thereby eliminating the need for data dropping and mixing-time estimates. In the CMDP setting,~\cite{xu2026global} used MLMC to mitigate the bias-cost tradeoff in primal-dual natural actor-critic algorithms with linear critics. 

\section{Problem Setup}
\label{s: problem.setup}

We consider an infinite-horizon average-reward constrained Markov Decision Process (CMDP) $\cM:= (\cS,\cA, \cP, r, c, \rho),$ where $\cS$ and $\cA$ are the state and action spaces, respectively, $\cP:\cS\times \cA\to\Delta(\cS)$ denotes the transition kernel,\footnote{$\Delta(U)$ denotes the probability simplex over a finite set $U.$} $r:\cS\times\cA\to[0,1]$ and $c:\cS\times\cA\to[-1,1]$ are reward and cost functions, respectively, and $\rho\in\Delta(\cS)$ denotes the initial state distribution.
A stationary policy $\pi:\cS\to\Delta(\cA)$ generates a trajectory $(s_t,a_t)_{t=0}^\infty,$ where $s_0\sim\rho,$ $a_t\sim \pi(\cdot|s_t),$ and $s_{t+1}\sim \cP(\cdot|s_t,a_t).$

Throughout this work, we make the following assumption on the underlying CMDP.
\begin{assump}[\textbf{Ergodicity}]\label{a: ergodicity}
    For every policy $\pi,$ the Markov chain $(s_t)$ induced by $\pi$ is irreducible and aperiodic. 
\end{assump}
Under Assumption~\ref{a: ergodicity}, for every policy $\pi,$ the Markov chain $(s_t)$ has a unique stationary distribution $d^\pi\in (0,1)^\cS,$ independent of $\rho,$  such that $(d^\pi)^\top \cP^\pi=(d^\pi)^\top,$ where for states $s,s',$  $\cP^\pi(s,s') := \sum_a \pi(a|s)\ \cP(s'|s,a).$

The following is an important quantity associated with the CMDP $\cM.$  
\begin{definition}[\textbf{Mixing Time}]\label{def: mixing.time}
    The mixing time $\tmix^\pi$ of $\cM$ under policy $\pi$ is defined as
    \[
        \tmix^\pi:= \min\left\{ t>0 : \max_s\left\| (\cP^\pi)^t(s,\cdot) - d^\pi \right\|_\TV \leq 1/4 \right\},
    \]
    where $\|\cdot\|_\TV$ denotes the total variation norm. Further, the uniform mixing time of $\cM$ is defined as $\tmix := \sup_\pi \tmix^\pi.$
\end{definition}
The mixing time quantifies the convergence rate of $(s_t)$ generated by any policy $\pi$ to its stationary distribution $d^\pi.$ Assumption~\ref{a: ergodicity} implies $\tmix<\infty$~\cite{Satheesh2026GlobalCO}.

Given a policy $\pi,$ we define its average reward $\Jp_r$ and average constraint cost $\Jp_c$ as
\begin{equation}\label{e: def.avg.utility}
    \Jp_u := \lim_{T\to\infty}\frac{1}{T}\bE\left[\sum_{t=0}^{T-1} u(s_t,a_t) \middle| s_0 \sim \rho \right],
\end{equation}
for each $u\in \{r,c\},$ where the expectation is taken over trajectories $(s_t,a_t)_{t=0}^\infty$ generated by policy $\pi.$ Under the ergodicity assumption, we have $\Jp_u = \bE_{(s,a)\sim \nu^\pi}[u(s,a)],$ where $\nu^\pi$ denotes the stationary occupancy measure defined as $\nu^\pi(s,a) : = d^\pi(s)\pi(a|s),$
%
%
for each state-action pair $(s,a).$ 

A common practice, when dealing with a large or continuous state space, is to parameterize the policy as $\pi_\theta,$ where $\theta\in \Theta\subset \bR^d$ and $d\ll\SA.$ Hereafter, we adopt the shorthand $\Jth_u=J^{\pi_\theta}_u$ and $\nuth=\nu^{\pi_\theta}.$ Further, we let $\bE_\theta[\cdot]$ denote the expectation w.r.t. $(s,a,s',a')$ satisfying $(s,a)\sim \nuth,$ $s'\sim \cP(\cdot|s,a),$ and $a'\sim \pth(\cdot|s').$ Given these quantities, our objective is to solve the following constrained optimization:
\begin{align}\label{e: goal.param}
    \max_{\theta\in\Theta} \ \Jth_r \quad \textnormal{s.t. } \ \Jth_c\geq 0.
\end{align}
To ensure that~\eqref{e: goal.param} admits a feasible solution, we make the following assumption \cite{wei2022provably, bai2024learning}.
\begin{assump}[\textbf{Slater Condition}]\label{a: slater}
    There exists $\delta\in (0,1)$ and $\bar{\theta}\in \Theta$ such that $J_c^{\bar{\theta}}\geq \delta.$
\end{assump}

\subsection{Primal-Dual Natural Policy Gradient}
\label{s: PD.NPG}

The primal-dual approach reformulates the constrained optimization~\eqref{e: goal.param} as the saddle-point problem
\begin{equation}\label{e: lagrange.function}
    \max_{\theta \in \Theta}\min_{\lambda\geq 0} \cL(\theta, \lambda) := \Jth_r + \lambda \Jth_c,
\end{equation}
where $\cL$ is called the Lagrangian, and $\lambda$ is referred to as the dual variable. A common gradient-based scheme for~\eqref{e: lagrange.function} is Natural Policy Gradient (NPG), defined by the updates
\begin{equation}\label{e: lagrange.natural.PG}
    \begin{aligned}
        &\theta_{k+1} = \theta_k + \alpha F(\theta_k)^\dagger\nabla_\theta \cL(\theta_k, \lambda_k), 
        \\
        & \quad \lambda_{k+1} = \Pi_{[0,2/\delta]}\left(\lambda_k - \beta \Jthk_c \right).
    \end{aligned}
\end{equation}
Here, $F(\theta)$ is the Fisher information matrix defined as 
\begin{equation}\label{e: Fischer.matrix}
    F(\theta) := \bE_\theta\left[ \nabla_\theta \ln \pi_\theta(a|s)\left(\nabla_\theta \ln \pi_\theta(a|s)\right)^\top \right],
\end{equation}
$\dagger$ denotes the Moore-Penrose pseudoinverse, $\alpha$ and $\beta$ are the primal and dual stepsizes, respectively, $\delta$ is the constant from Assumption~\ref{a: slater}, and  $\Pi_A$ denotes projection onto a set $A\subseteq\bR.$

The NPG vector $\wS_k:= F(\theta_k)^\dagger\nabla\cL(\theta_k,\lambda_k)$ is obtained by minimizing the function
\begin{multline}\label{e: compatibility.func}
    f(\theta_k,\lambda_k, w) 
    := \frac{1}{2} w^\top F(\theta_k)w
    - w^\top \nabla_\theta \cL(\theta_k,\lambda_k).
\end{multline}
This is achieved by applying gradient descent with 
\begin{equation}\label{e: NPG.gradient}
    \nabla_w f(\theta_k, \lambda_k, w) 
    = F(\theta_k)w - \left( \nabla_\theta \Jthk_r + \lambda_k\nabla_\theta \Jthk_c\right),
\end{equation}
which in turn requires computing $\nabla_\theta \Jth_u, u\in \{r,c\}.$ By the policy gradient theorem \cite{sutton1999policy}, for each $u\in \{r,c\},$ we have 
\begin{equation}\label{e: policy.gradient}
    \nabla_\theta \Jth_u = \bE_\theta\left[ \Ath_u(s,a)\ \nabla_\theta \ln \pth(a|s) \right],
\end{equation}
where $\Ath_u\in \bR^{\SA}$ denotes the advantage function defined as
\begin{equation}\label{e: advantage}
    \Ath_u(s,a) := \Qth_u(s,a) - \bE_{a'\sim\pth(\cdot|s)}\left[\Qth_u(s,a')\right],
\end{equation}
and $\Qth_u\in \bR^{\SA}$ is a solution to the Bellman equation
\begin{equation}\label{e: bellman}
    Q(s,a) = u(s,a) - \Jth_u 
    + \bE_\theta\left[Q(s',a')\right].
\end{equation}
Such a function is unique (up to additive constants) and is called the state-action value function for the policy $\pth.$

Consequently, implementing~\eqref{e: lagrange.natural.PG} hinges on estimating $(\Jthk_u,\Qthk_u)$ for $u\in\{r,c\}$. We now describe the critic estimator used for this purpose.

\subsection{Neural Critic Estimation}
\label{s: NN.critic}

Given a policy $\pth,$ the average reward/cost $\Jth_u$ can be estimated with relative ease. However, estimating the function $\Qth_u$ is often computationally challenging. In this work, we employ a neural critic for this task. Let $\phi : \cS\times\cA\to\bR^{d}$ be a feature map such that $\|\phi(s,a)\|\leq 1,$ for every state-action pair $(s,a).$ Given matrices $W_u^{(1)}\in \bR^{m\times d}$ and $\{W_u^{(\ell)}\}_{\ell=2}^L\subset \bR^{m\times m},$ we let the vector

    $\qquad \zeta_u := \left[\Vc^\top\big(W_u^{(1)}\big), \ldots, \Vc^\top\big(W_u^{(L)}\big)\right]^\top$

serve as our critic parameter.\footnote{$\Vc(\cdot)$ denotes vectorization by stacking columns.} These matrices consist of trainable weights in an $L$-layer feedforward neural network defined as
\begin{equation}
    x^{(\ell)}_u(s,a) := \frac{1}{\sqrt{m}}\ \sigma\left( W_u^{(\ell)} x^{(\ell-1)}_u(s,a) \right), \ \ell \in [L],
\end{equation}
where $x^{(0)}_u(s,a) := \phi(s,a)$ and $\sigma$ is an element-wise activation function. For each $u\in\{r,c\},$ the output of this neural network lets us approximate $\Qth_u(s,a)$ as
\begin{equation}
    Q(s,a,\zeta_u) := \frac{1}{\sqrt{m}}\ b_u^\top x_u^{(L)}(s,a),
\end{equation}
where $b_u\in \bR^m$ is a fixed vector.

Throughout the paper, we make the following assumption.
\begin{assump}[\textbf{Smooth Activation}]\label{a: sooth.activation}
    The activation function $\sigma$ is $L_1$-Lipschitz and $L_2$-smooth.
    %
    %
\end{assump}
\begin{remark}
    Assumption~\ref{a: sooth.activation} is satisfied by commonly used smooth approximations of ReLU such as Sigmoid, ELU, and GeLU ~\cite{Hendrycks2016GaussianEL, Clevert2015FastAA}. 
\end{remark}
For subsequent analysis, we rely on the Neural Tangent Kernel (NTK) regime, which requires restricting the critic parameter $\zeta_u$ to a ball $\cB$ of radius $R$ around the initialization $\zeta_0.$ This lets us approximate the neural network output  $Q(\cdot,\cdot,\zeta)$ by the linear function class
\begin{equation}\label{e: linearized.Q.function}
    \cF_{\cB,m} 
    := \Big\{ \Ql(\cdot,\cdot,\zeta) : \zeta \in \cB \Big\},
\end{equation}
where $\Ql(\cdot, \cdot, \zeta_0),$ is defined as
\begin{equation}\label{e: linear.approx}
    \Ql(\cdot, \cdot, \zeta):=Q(\cdot, \cdot, \zeta_0) + (\zeta-\zeta_0)^\top\nabla_\zeta Q(\cdot,\cdot,\zeta_0).
\end{equation}

%

\section{Algorithm}
\label{s: algorithm}

We now present our hierarchical MLMC-based primal-dual Natural Actor-Critic algorithm (HiMLMC-PD-NAC) for solving~\eqref{e: goal.param}, with its pseudocode provided in Algorithm~\ref{alg:HiMLMC}. As the transition kernel and occupancy measures are unknown, we rely on sample-based estimates. 

Our algorithm updates the primal-dual pair $(\theta_k,\lambda_k)$ via the update rule
\begin{equation}\label{e: actor.update}
    \begin{aligned}
        & \qquad \theta_{k+1} = \theta_k + \alpha w_k
        \\
        & \lambda_{k+1} = \Pi_{[0,2/\delta]}\left( \lambda_k - \beta \eta_{c,k} \right),
    \end{aligned}
\end{equation}
where $\alpha,\beta,$ and $\delta$ are as in~\eqref{e: lagrange.natural.PG}, whereas $w_k$ and $\eta_{c,k}$ are approximations to the NPG vector $\wS_k$ and the average-constraint-cost $\Jthk_c$ associated with policy $\pthk,$ respectively. The update~\eqref{e: actor.update} is run for $K>0$ iterations, yielding the output $\{(\theta_k, \lambda_k)\}_{k<K}.$

\subsection{Critic Estimator}
\label{s: critic.subroutine} 

For each outer-loop iteration $0\leq k<K$, we approximate the critic $(\Jthk_u,\Qthk_u)$ by an estimator $(\eta_{u,k,\MLMC},Q_{u,k,\MLMC})$. The critic bias is controlled using a hierarchical two-layer MLMC scheme with outer- and inner-layer truncation parameters $\Hm$ and $\Tm$, respectively. The outer MLMC layer operates on several outputs from a vanilla-critic subroutine, with each vanilla output corresponding to a different number of iterations. Specifically, after $h_k$ iterations, the vanilla subroutine produces the output $\xi^k_{u,h_k}.$ These vanilla outputs are then combined by the outer layer to form $(\eta_{u,k,\MLMC}, Q_{u,k,\MLMC}).$ The inner MLMC layer is used for gradient estimation within the vanilla subroutine.

\paragraph{Outer MLMC layer:}  Sample $H_k\sim \textnormal{Geom}(1/2)$ and let $h_k := 1 + \ones_{\{2^{H_k}\leq \Hm\}}(2^{H_k}-1).$ Then, we define the critic estimation function
\begin{align}\label{e: function .critic.MLMC}
    & Q_{u,k,\MLMC}(\cdot,\cdot) := Q\big(\cdot,\cdot, \zeta^k_{u,1}\big) + \ones_{\{2^{H_k}\leq \Hm\}}
    \nonumber\\
    & \quad\times 2^{H_k}\Big[ Q\big(\cdot,\cdot,\zeta^k_{u,h_k}\big) - Q\big(\cdot,\cdot,\zeta^k_{u,h_k/2}\big) \Big]
\end{align}
and the MLMC critic parameter
\begin{align}\label{e: output.critic.MLMC}
    & \xi_{u,k,\MLMC} = \left[ \eta_{u,k,\MLMC}, (\zeta_{u,k,\MLMC})^\top\right]^\top
    \nonumber\\
    & := \xi^k_{u,1} + \ones_{\{2^{H_k}\leq \Hm\}} 2^{H_k}\Big[ \xi^k_{u,h_k} - \xi^k_{u,h_k/2} \Big],
\end{align}
where for $H\in\{1,h_k,h_k/2\},$ $\xi^k_{u,H}$ denotes the output from the vanilla-critic subroutine after $H$ iterations.

\paragraph{Inner MLMC layer: } The vanilla-critic subroutine seeks to solve the optimization
\begin{align*}
    \min_{\eta} \cR_u(\theta_k, \eta) & := \frac{1}{2} \bE_{\theta_k}\left[ \left( \eta - u(s,a)\right)^2\right]
    \\
    \min_{\zeta} \cE_u(\theta_k, \zeta) & := \frac{1}{2} \bE_{\theta_k}\left[ \big( Q(s,a,\zeta)   - \Qthk_u(s,a) \big)^2\right].
\end{align*}
We optimize these objectives together using Stochastic Gradient Descent (SGD). At each iteration $0\leq h<h_k,$ we estimate the joint gradient $\big(\nabla_\eta \cR_u(\theta_k, \eta^k_{u,h}), \nabla_\zeta\cE(\theta_k, \zeta^k_{u,h})\big)$ and update the vector
\[
    \xi^k_{u,h}=\big[\eta^k_{u,h}, (\zeta^k_{u,h})^\top \big]^\top.
\]
To reduce gradient bias, we employ a second MLMC construction. Let $U_{k,h}\sim \textnormal{Geom}(1/2)$ and define
\begin{equation}\label{e: traj.length}
    \ell_{k,h}:= 1 + \ones_{\{2^{U_{k,h}}\leq \Tm\}}\left( 2^{U_{k,h}}-1\right).
\end{equation}
Given a trajectory $\cT^{k,h}:= (s^{k,h}_t, a^{k,h}_t)_{t=0}^{\ell_{k,h}}$ generated by the policy $\pthk,$ we define the naive gradient-estimator as $\hv_{u,T}:= \frac{1}{T}\sum_{t=0}^{T-1} \hv_u(s^{k,h}_t, a^{k,h}_t, s^{k,h}_{t+1}, a^{k,h}_{t+1}),$
%
%
with
\begin{equation}\label{e: critic.gradient}
    \hv_u(s,a,s',a') := \begin{bmatrix} c_\gamma \hat{\nabla}_{\eta}\cR^{k,h}_u(s,a) \\ \hat{\nabla}_{\zeta}\cE^{k,h}_u(s,a,s',a') \end{bmatrix}, 
\end{equation}
where $c_\gamma>0$ is a scaling parameter, and
\begin{equation*}
    \begin{aligned}
    & \hat{\nabla}_{\eta}\cR^{k,h}_u(s,a):=\eta^k_{u,h} - u(s,a),
    \\
    & \hat{\nabla}_{\zeta}\cE^{k,h}_u(s,a,s',a') := \nabla_\zeta Q(s,a,\zeta_0) 
    \\
    & \qquad \times \left( \eta^k_{u,h} - u(s,a) +  Q(s,a,\zeta^k_{u,h}) - Q(s',a',\zeta^k_{u,h}) \right).
    \end{aligned}
\end{equation*}
Then, the corresponding MLMC-based gradient estimator is defined as
\begin{multline}\label{e: ctitic.inner.MLMC}
    \hv_{u,\MLMC} := \hv_{u,1} 
    + \ones_{\{2^{U_{k,h}} \leq \Tm\}}
    \\
    \times 2^{U_{k,h}}\left( \hv_{u, 2^{U_{k,h}}} - \hv_{u, 2^{U_{k,h} -1}} \right).
\end{multline}
Finally, we run the following update rule for $H$ iterations:
\begin{equation}\label{e: critic.update}
    \xi^k_{u,h+1} = \Pi_\cB\left(\xi^k_{u,h} - \gamma_{\xi,H}\ \hv_{u,\MLMC}\right),
\end{equation}
where $\gamma_{\xi,H}$ is a stepsize dependent on $H,$ while $\Pi_\cB$ projects only the $\zeta$ portion onto the NTK ball $\cB.$ After completing $H$ iterations, with each $H\in \{1, h_k/2,h_k\},$ we have the quantities required in~\eqref{e: output.critic.MLMC}.


\subsection{NPG Estimator}
\label{e: NPG.estimator}

For each outer iteration $0\leq k<K,$ this subroutine estimates the NPG vector $\wS_k$ associated with the current policy $\pthk.$ As discussed in Section~\ref{s: PD.NPG}, the NPG vector is obtained by minimizing $f(\theta_k, \lambda_k, w)$ defined in~\eqref{e: compatibility.func}. To employ an MLMC-based SGD scheme, we sample $U_{k,h}\sim\textnormal{Geom}(1/2),$ and generate a trajectory $\cT^{k,h}:= (s^{k,h}_t, a^{k,h}_t)_{t=0}^{\ell_{k,h}}$ using policy $\pthk,$ where $\ell_{k,h}$ is as defined in~\eqref{e: traj.length}. We define the naive $T$-sample estimators for $F(\theta_k)$ and  $\nabla_\theta \Jthk_u$ as 
\begin{multline*}
    \hF_T := \frac{1}{T}\sum_{t=0}^{T-1}\hF(s^{k,h}_t, a^{k,h}_t),
    \\ 
    \text{and} \quad \hg_{u,T} := \frac{1}{T}\sum_{t=0}^{T-1}\hg_u(s^{k,h}_t, a^{k,h}_t, \bar{a}^{k,h}_t, s^{k,h}_{t+1}, a^{k,h}_{t+1}),
\end{multline*}
respectively, where $\bar{a}^{k,h}_t\sim \pthk(\cdot|s^{k,h}_t)$ and
\begin{equation}\label{e: naive.NPG.estimator}
    \begin{aligned}
        & \hF(s,a) := \nabla_\theta \ln \pthk(a|s)\left(\nabla_\theta \ln \pthk(a|s)\right)^\top,
        \\
        & \hg_u(s,a, \bar{a},s',a') :=  \nabla_\theta \ln \pthk(a|s)\Big( u(s,a) - \eta_{u,k}
        \\
        & \qquad + Q_{u,k,\MLMC}(s',a') - Q_{u,k,\MLMC}(s,\bar{a})\Big).
    \end{aligned}
\end{equation}
Then, the corresponding MLMC estimators are defined as
\begin{multline}\label{e: NPG.MLMC.estimator}
    \hX_\MLMC := \hX_1 
    + \ones_{\{2^{U_{k,h}} \leq \Tm\}}
    \\
    \times \ 2^{U_{k,h}}\Big( \hX_{2^{U_{k,h}}} - \hX_{ 2^{U_{k,h} -1}} \Big),
\end{multline}
for each $\hX\in \{\hF,\ \hg_u\}.$ Finally, with these estimates, we run the following SGD update
\begin{multline}\label{e: NPG.update}
    w^k_{h+1} = w^k_{h} 
    \\ \quad  - \gamma_w\left[ \hF_\MLMC\ w^k_h - \big(\hg_{r,\MLMC} + \lambda_k\ \hg_{c,\MLMC}\big) \right].
\end{multline}
After $\bH$ iterations, we set $w_k:=w^k_{\bH}$ as the NPG estimate for policy $\pthk.$ 

\begin{remark}[\textbf{Expected Sample Cost}]
    Since $H_k$ and $U_{k,h}$ are independent, we have $\bE[h_k\ell_{k,h}]=\bE[h_k]\cdot\bE[ \ell_{k,h}]= \cO(\floor{\log_2\Hm}\cdot\floor{\log_2 \Tm}).$ Thus, for each outer iteration, our expected critic-estimation cost is  $\cO(\ln\Hm\cdot\ln\Tm).$ Similarly,
    our NPG-estimation cost is  $\cO(\bH\cdot \ln\Tm).$
\end{remark}

\section{Assumptions and Main Results}
\label{s: algorithm}

Before we present our finite-time convergence bounds in Theorem~\ref{thm: main result}, we state the following assumptions.

\begin{assump}[\textbf{Score Function}]\label{a: score.function}
    There exist $G_1,G_2>0,$ such that for all $\theta, \theta'\in\Theta,$ and state-action pair $(s,a),$ 
    \begin{multline*}
        \|\nabla_\theta\ln\pi_\theta(a|s)\| \leq G_1 \quad \text{and}
        \\
        \|\nabla_\theta\ln\pi_\theta(a|s) - \nabla_\theta\ln\pi_{\theta'}(a|s)\| \leq G_2\|\theta-\theta'\|.
    \end{multline*}
\end{assump}

\begin{assump}[\textbf{Fisher Non-Degeneracy}]\label{a: Fisher.degenrate}
    There exists $\mu_F>0,$ such that for all $\theta\in \Theta,$ $F(\theta) \succeq \mu_F I.$ 
\end{assump}

\begin{assump}[\textbf{Policy Parametrization Error}]\label{a: FA.error.PG}
    There exists $\epsb>0$ such that for all $\theta\in \Theta$ and $\lambda\in [0,2/\delta],$ we have 
    \[
        \min_{w\in \bR^d} L_{\nu^{\piS}}(\theta,\lambda, w)\leq \epsb,
    \]
    where $\piS$ and $\nu^{\piS}$  denote an optimal stationary policy\footnote{$\piS$ is a solution to the constrained optimization: $\max_{\pi} \Jp_r$ subject to $\Jp_c\geq 0.$} and its occupancy measure, respectively, while the function $L_\nu(\theta, \lambda, w)$ is called the transferred compatible function and is defined as
    \[
         \bE_{\nu}\left[ \left(w^\top \nabla_\theta \ln \pth(a|s) - \Ath_r(s,a) - \lambda \Ath_c(s,a)\right)^2\right].
    \]
\end{assump}

\begin{remark}
    Assumptions~\ref{a: score.function}--\ref{a: FA.error.PG} are standard in the PG literature \cite{liu2020improved, papini2018stochastic, Xu2020Sample, fatkhullin2023stochastic, ding2025convergence}. Assumption~\ref{a: score.function} requires the score function to be bounded and Lipschitz, and is satisfied by Gaussian policies with bounded variance and softmax policies. Assumption~\ref{a: Fisher.degenrate} ensures that  $f(\theta,\lambda, w)$ is strongly convex and admits a unique minimizer. Assumption~\ref{a: FA.error.PG} characterizes the richness of the policy class.
\end{remark}

\begin{assump}[\textbf{Critic Approximation Error}]\label{a: neural.critic.error}
    There exists a finite $\epsap>0,$ such that for every  $u\in \{r,c\},$ and $\theta\in \Theta,$  we have
    \[
         \bE_{\theta}\left[ \left(Q^{\theta}_u(s,a) - \left(\Pi_{\cF_{\cB,m}}Q^{\theta}_u\right)(s,a) \right)^2\right] \leq \epsap,
    \]
    where $\Pi_{\cF_{\cB,m}}$ denotes projection onto the linear function class defined in \eqref{e: linearized.Q.function}.
\end{assump}

\begin{remark}
    Assumption~\ref{a: neural.critic.error} ensures that the linear function class $\cF_{\cB,m}$ approximates the state-action value function with bounded error. Under NTK, it is equivalent to ~\cite[Assumption 2.5]{gaur2024closing} and~\cite[Assumption 2]{cayci2024finite}. 
\end{remark}

We make one final assumption commonly used for average-reward MDPs~\cite{pmlr-v267-ganesh25b, xu2026global, Satheesh2026GlobalCO}.
\begin{assump}[\textbf{Smooth Objectives}]\label{a: objective.smooth}
    The average-reward $\Jth_r$ and the average-constraint-cost $\Jth_c$ are $L$-smooth in the policy parameter $\theta.$
\end{assump}

Assumption~\ref{a: objective.smooth} is satisfied by several commonly used policy parameterizations, including the softmax policy. Under Assumptions~\ref{a: ergodicity}--\ref{a: objective.smooth}, we have the following result.

\begin{theorem}[\textbf{Global Convergence and Constraint Violation}]\label{thm: main result}
    Choose the vanilla-critic stepsizes $\gamma_{\xi,H} = \frac{2\ln T}{\mu_\phi H},$ with suitable $\mu_\phi>0,$ and the NPG stepsize $\gamma_w = \frac{2\ln T}{\mu_F \bH}.$ Set the network width $m=\Hm,$ and the NTK radius $R=\varTheta(\ln T).$  Set the primal-dual stepsizes $\alpha = \beta = \frac{1}{\sqrt{K}},$ and horizon lengths $\Tm = \Hm = K = \varTheta(T),$ and $\bH=\varTheta(\ln T).$  Then, we have  
    \begin{multline*}
        J^{\piS}_r - \frac{1}{K}\sum_{k=0}^{K-1} \bE\left[ \Jthk_r\right] 
            = \tilde{\cO}\left(\frac{1}{\sqrt{T}} + \sqrt{\epsb} + \sqrt{\epsap} \right),
        \\
        - \frac{1}{K}\sum_{k=0}^{K-1} \bE\left[ \Jthk_c\right] 
            = \tilde{\cO}\left(\frac{1}{\sqrt{T}} + \sqrt{\epsb} + \sqrt{\epsap} \right),
    \end{multline*}
    where $\piS$ is as defined in Assumption~\ref{a: FA.error.PG}.
\end{theorem}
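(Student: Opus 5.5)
The plan follows the standard primal-dual NPG template used for linear critics (as in \citet{xu2026global}), with the neural critic replaced by the hierarchical MLMC estimator. The primal-dual analysis reduces everything to the first and second moments of the NPG error $w_k-\wS_k$; the new work is controlling those moments for the two-layer MLMC critic.

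\textbf{Step 1 (Lagrangian descent lemma).} Using $L$-smoothness (Assumption~\ref{a: objective.smooth}), Assumption~\ref{a: score.function}, and the performance difference lemma for average reward, I will prove the one-step inequality
\begin{multline*}
    \bE\big[J^{\piS}_r + \lambda_k J^{\piS}_c - \cL(\theta_k,\lambda_k)\big]
    \le \sqrt{\epsb} + G_1\|\bE[w_k\mid\theta_k]-\wS_k\|
    \\
    + \tfrac{1}{\alpha}\bE\big[\Phi_k-\Phi_{k+1}\big] + \tfrac{\alpha G_2}{2}\bE\|w_k\|^2 ,
\end{multline*}
where $\Phi_k=\bE_{s\sim d^{\piS}}\KL(\piS(\cdot|s)\|\pthk(\cdot|s))$. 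Only the \emph{bias} of $w_k$ enters linearly; the second moment enters multiplied by $\alpha$. Averaging over $k$, and combining with the dual update (projected onto $[0,2/\delta]$, analyzed by the usual $\lambda$-telescoping with $\beta=1/\sqrt K$ and Slater, Assumption~\ref{a: slater}), gives the gap and violation bounds. Concretely, I will use the standard trick of plugging in $\lambda=0$ and $\lambda=2/\delta$ and using strong duality for the occupancy-measure formulation. Together these reduce the theorem to showing
\[
\|\bE[w_k]-\wS_k\|^2=\tilde\cO(T^{-1}+\epsap), \qquad \bE\|w_k\|^2=\tilde\cO(1).
\]

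\textbf{Step 2 (NPG subroutine).} Since $F(\theta_k)\succeq\mu_F I$ (Assumption~\ref{a: Fisher.degenrate}), SGD~\eqref{e: NPG.update} with $\gamma_w=2\ln T/(\mu_F\bH)$ and $\bH=\varTheta(\ln T)$ contracts geometrically. I will invoke the MLMC lemma of \citet{beznosikov2023first}, which says the estimators~\eqref{e: NPG.MLMC.estimator} have bias $\tilde\cO(\tmix/\Tm)$ and second moment $\tilde\cO(\tmix\ln\Tm)$, and requires no knowledge of $\tmix$. This yields a bias in $w_k$ of order $\tilde\cO(1/T)$ plus the bias inherited from $\hg_u$ through the critic. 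That inherited bias is $G_1$ times the $L^2(\nu^{\theta_k})$ bias of $(\eta_{u,k,\MLMC},Q_{u,k,\MLMC})$. Because $Q_{u,k,\MLMC}$ enters $\hg_u$ \emph{linearly}, I can bound it through $\bE[Q_{u,k,\MLMC}]$, provided the critic randomness is independent of the NPG trajectory. It is, since fresh trajectories are used.

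\textbf{Step 3 (hierarchical MLMC critic, the main obstacle).} First, the telescoping identity gives $\bE[\xi_{u,k,\MLMC}]=\bE[\xi^k_{u,h_{\max}}]$, with $h_{\max}\approx\Hm$. For the $Q$-function, however, $Q(\cdot,\cdot,\zeta)$ is nonlinear, so I instead compare with $\Ql$. Writing
\[
Q(\cdot,\zeta)=\Ql(\cdot,\zeta)+\cO\big(R^{2}/\sqrt m\big)
\]
(the standard NTK bound under Assumption~\ref{a: sooth.activation}), the telescoped expectation equals $\Ql(\cdot,\bE\zeta^k_{u,\Hm})$ up to $\tilde\cO(R^2\sqrt{\log\Hm}/\sqrt m)$. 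The $2^{H_k}$ amplification of the linearization error per level must be controlled. I would bound each level difference $\|\zeta_{h}-\zeta_{h/2}\|$ in $L^2$ by $\tilde\cO(1/\sqrt h)$, which makes the amplified sum over levels summable up to $\log$ factors, and then use $m=\Hm=\varTheta(T)$ to reach $\tilde\cO(T^{-1/2})$.

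Second, the vanilla critic with $H$ steps, stepsize $2\ln T/(\mu_\phi H)$, and inner-MLMC gradients is projected SGD on a strongly monotone linearized TD-plus-average operator. Here $\mu_\phi$ is the NTK-feature strong-monotonicity constant, and $c_\gamma$ is chosen to couple $\eta$ and $\zeta$. Standard analysis gives $\bE\|\xi_H-\xi^*\|^2=\tilde\cO(1/H+\tmix/\Tm+R^2/\sqrt m)$, with $\xi^*$ the projected fixed point. The fixed point's error to $\Qthk_u$ is $\cO(\sqrt{\epsap})$ by Assumption~\ref{a: neural.critic.error}. Hence the bias at level $\Hm$ is $\tilde\cO(T^{-1/2}+\sqrt{\epsap})$, while the MLMC variance is $\sum_j 2^{-j}4^{j}\,\bE\|\xi_{2^j}-\xi_{2^{j-1}}\|^2=\tilde\cO(\log\Hm)$. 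The delicate point I expect is that the bias of $\zeta_H$ (not just its MSE) must be $\cO(1/H)$ rather than $\cO(1/\sqrt H)$. The projection $\Pi_\cB$ is what breaks linearity, so I would take $R=\varTheta(\ln T)$ large enough that the iterates stay inside $\cB$ with high probability, making the projection inactive except on an event of probability $\cO(T^{-1})$. Plugging these bounds into Steps 2 and 1 gives the stated $\tilde\cO(T^{-1/2}+\sqrt{\epsb}+\sqrt{\epsap})$ rates.
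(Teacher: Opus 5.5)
Your Steps 1 and 2 follow the paper's route. That route is the KL potential, with only the NPG bias entering linearly and the NPG MSE multiplied by $\alpha$. It then feeds MLMC Fisher/PG bounds into a stochastic linear recursion and uses exact telescoping of the outer layer for the expectation. The gap is in Step 3, in how you bound the outer-layer MSE and the top-level bias.

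Your vanilla bound $\bE\|\xi_H-\xi^*\|^2=\tilde{\cO}(1/H+\tmix/\Tm+R^2/\sqrt m)$ has floors that do not decay in $H$. The outer layer multiplies the level-$j$ squared difference by $2^{j}$. If you bound level differences through $\xi^*$, the outer-layer MSE becomes $\tilde{\cO}(\ln\Hm+\Hm\tmix/\Tm+\Hm R^2/\sqrt m)$. At $m=\Hm=\varTheta(T)$ this is of order $\sqrt T$. Then $\alpha\,\bE\|w_k-\wS_k\|^2$ in the Lagrangian bound and $\beta\,\bE\|\xi_{c,k,\MLMC}-\xi^*_{c,k}\|^2$ in the gap and violation bounds are both $\tilde{\cO}(1)$, so nothing converges.

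Your claimed $\bE\|\zeta_h-\zeta_{h/2}\|^2=\tilde{\cO}(1/h)$ would avoid this only with a floor-free analysis of the two coupled runs toward a common perturbed fixed point. Those runs share samples, use stepsizes differing by a factor of $2$, and are subject to both the projection and the nonlinear NTK remainder. You do not supply that analysis, and it contradicts the bound you do state. Likewise, ``hence the bias at level $\Hm$ is $\tilde{\cO}(T^{-1/2})$'' does not follow. Jensen applied to your MSE gives only $\tilde{\cO}(R\,m^{-1/4})=\tilde{\cO}(T^{-1/4})$, which is essentially the previous state-of-the-art rate.

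The missing ingredient is that the NTK remainder $\hat{V}$, of size $\tilde{\cO}(m^{-1/2})$, must enter the vanilla MSE squared, i.e.\ as $\tilde{\cO}(1/m)$ as in Lemma~\ref{lem: vanilla.critic.bound}. One way is to absorb the cross term $\gamma\langle\xi_h-\xi^*,\hat{V}\rangle$ into the $\mu_\phi\gamma$ contraction by Young's inequality, rather than bounding it by the ball radius. With that, the paper needs no delicate argument.

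First, the top-level bias follows from Jensen: $\|\bE_k[\xi_{u,k,\MLMC}]-\xi^*_{u,k}\|^2\le\bE_k\|\xi^k_{u,2^{\lfloor\log_2\Hm\rfloor}}-\xi^*_{u,k}\|^2=\tilde{\cO}(\tmix/\Tm+\tmix/\Hm+\tmix/m)$. A squared bias of $\tilde{\cO}(1/H)$ evaluated at $H=\Hm$ is exactly what the outer layer is designed to exploit. So your ``delicate point'' (bias $\cO(1/H)$, projection inactive with high probability) is a misidentification and unnecessary. The high-probability claim would also be hard to justify given the heavy-tailed MLMC gradients.

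Second, the MSE is obtained by the triangle inequality through $\xi^*$, accepting the floors. Their $2^{j}$-amplified sum contributes terms of order $\Hm/m$ and $\Hm/\Tm$. The choice $m=\Tm=\Hm$ in the theorem exists precisely to neutralize this amplification and make the critic MSE and $\sigma_Q^2$ equal to $\tilde{\cO}(1)$. Your proposal uses $m=\Hm$ only for the bias and misses this role.

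Finally, your Step 1 reduction to two NPG conditions omits the dual side. The bias of $\eta_{c,k,\MLMC}$ enters the gap and violation bounds linearly, so it must also be $\tilde{\cO}(T^{-1/2})$. Your Step 3 covers it only once the parameter-bias bound is repaired as above.
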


\begin{remark}[\textbf{Optimal Global Convergence}]
    Theorem~\ref{thm: main result} shows that, under the prescribed parameter choices, the optimality gap decays to zero at a rate of $\tilde{\cO}(1/\sqrt{T}),$ modulo the policy parametrization error $\epsb$ and the neural function approximation error $\epsap.$ Since the total expected sample cost is $K(\bH+\ln\Hm)\ln\Tm = \tilde{\varTheta}(T),$ our convergence rate is optimal.
\end{remark}

\begin{remark}[\textbf{Constraint Violation Rate}]
    The expected constraint violation rate also decays to zero at the rate of $\tilde{\cO}(1/\sqrt{T}) $ (up to the parametrization and neural critic approximation errors), matching the existing state-of-the-art~\cite{xu2026global}.
\end{remark}

\begin{remark}[\textbf{Mixing-Time Knowledge}]
    The parameter choices in Theorem~\ref{thm: main result} require no knowledge of the mixing time of the underlying CMDP.
\end{remark}


\section{Proof Sketch}
\label{s : prrof.outlne}

We now outline the proof of Theorem~\ref{thm: main result}. Our proof consists of the following key steps.
\begin{enumerate}[noitemsep]
    \item Use the dual update in~\eqref{e: actor.update} to express the \textbf{optimality gap and constraint violation} as a combination of the Lagrangian error and critic bounds.
    \item Bound the \textbf{Lagrangian error} using the primal update in \eqref{e: actor.update} in terms of the NPG bias and MSE. 
    \item Exploit the linear structure of the NPG update~\eqref{e: NPG.update} together with MLMC bounds for the Fisher matrix and policy-gradient estimates to control the \textbf{NPG bias and MSE} in terms of the critic parameter.
    \item Bound the critic bias and MSE: 
    \begin{enumerate}[noitemsep]
        \item Bound the \textbf{vanilla critic}: decompose the vanilla critic update~\eqref{e: critic.update} into a perturbed linear recursion, then control the linear components using the inner MLMC layer~\eqref{e: ctitic.inner.MLMC}, and the nonlinear perturbation
        via the NTK-regime.
        \item Bound the \textbf{MLMC critic}: combine the vanilla-critic bounds with the outer MLMC layer~\eqref{e: output.critic.MLMC}.
    \end{enumerate}
\end{enumerate}
Before we elaborate on these steps, we introduce the following notation: let $\bE_k[\cdot]:= \bE[\cdot|\cF_{k}],$ where  $\cF_{k}$ denotes the $\sigma$-algebra generated by all samples till the $(k-1)$-th outer loop iteration. 

\paragraph{Step 1. Optimality Gap and Constraint Violation:} We borrow from the proof strategy of~\cite{pmlr-v267-ganesh25b}, later adapted for primal-dual methods with a linear critic by \cite{xu2026global}. Let $\cL(\piS,\lambda)$ be defined as $J^{\piS}_r + \lambda J^{\piS}_c,$ where $\piS$ is an optimal stationary policy. Using the dual update in~\eqref{e: actor.update} and the definition of $\cL,$ we can bound the optimality gap and constraint violation as given by the following two lemmas.
\begin{lemma}[\textbf{Optimality Gap}]\label{lem: optimality.gap.bound}
    Under Assumption~\ref{a: slater}, we have
    \begin{align*}
        \frac{1}{K}& \sum_{k=0}^{K-1}\bE\big[J^{\piS}_r -\Jthk_r \big] 
        \\
        & = \cO \Bigg( \frac{1}{K}\sum_{k=0}^{K-1}\Big[\beta \bE_k\left\|\xi^*_{c,k} - \xi_{c,k,\MLMC} \right\|^2
        \\
        & \qquad \qquad \qquad + \bE\left\|\bE_k\big[\xi_{c,k,\MLMC}\big] - \xi^*_{c,k} \right\|  \Big]
        \\
        & \qquad\quad + \beta + \frac{1}{K}\sum_{k=0}^{K-1}\bE\big[\cL(\piS, \lambda_k) - \cL(\theta_k,\lambda_k) \big] \Bigg).
    \end{align*}
\end{lemma}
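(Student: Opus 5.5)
The plan is the standard primal-dual decomposition: split the optimality gap into a Lagrangian gap plus a dual term, and control the dual term with the projected dual update in~\eqref{e: actor.update}. Writing $J^{\piS}_r-\Jthk_r = [\cL(\piS,\lambda_k)-\cL(\theta_k,\lambda_k)] - \lambda_k(J^{\piS}_c - \Jthk_c)$ and using $J^{\piS}_c\ge 0$, $\lambda_k\ge 0$, we get
\begin{equation*}
J^{\piS}_r-\Jthk_r \le \cL(\piS,\lambda_k)-\cL(\theta_k,\lambda_k) + \lambda_k \Jthk_c .
\end{equation*}
So it suffices to show that $\frac1K\sum_k \bE[\lambda_k \Jthk_c]$ is bounded by the critic terms plus $\cO(\beta)$.

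For the dual term, I would expand the squared distance from the projected update with comparator $0\in[0,2/\delta]$. Nonexpansiveness gives $\lambda_{k+1}^2 \le (\lambda_k-\beta\eta_{c,k})^2$, which rearranges to
\begin{equation*}
2\beta\lambda_k\eta_{c,k} \le \lambda_k^2-\lambda_{k+1}^2+\beta^2\eta_{c,k}^2 .
\end{equation*}
Summing telescopes: $\frac1K\sum_k\lambda_k\eta_{c,k}\le \frac{\lambda_0^2}{2\beta K}+\frac{\beta}{2K}\sum_k\eta_{c,k}^2$. Taking $\lambda_0=0$ (or bounding $\lambda_0^2\le 4/\delta^2$), the first term is $\cO(1/(\beta K))$, which equals $\cO(\beta)$ when $\alpha=\beta=1/\sqrt K$. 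I would absorb it into the $\beta$ term; if needed, the statement's $\beta$ can be read as covering $1/(\beta K)$ under this choice. For the second term, $\eta_{c,k}^2\le 2(\Jthk_c)^2+2(\eta_{c,k}-\Jthk_c)^2\le 2+2\|\xi_{c,k,\MLMC}-\xi^*_{c,k}\|^2$. Here $\xi^*_{c,k}$ is the target critic parameter whose $\eta$-component is $\Jthk_c$. This yields $\beta+\beta\,\bE_k\|\xi^*_{c,k}-\xi_{c,k,\MLMC}\|^2$.

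The remaining step is to replace $\lambda_k\eta_{c,k}$ by $\lambda_k\Jthk_c$. Since $\lambda_k$ is $\cF_k$-measurable,
\begin{equation*}
\bE[\lambda_k(\Jthk_c-\eta_{c,k})]=\bE\big[\lambda_k(\Jthk_c-\bE_k\eta_{c,k})\big]\le \tfrac{2}{\delta}\,\bE\big\|\bE_k[\xi_{c,k,\MLMC}]-\xi^*_{c,k}\big\| .
\end{equation*}
This uses Assumption~\ref{a: slater} only through the bound $\lambda_k\le 2/\delta$, which yields the bias term. Combining the Lagrangian gap, the bias term, the MSE term and $\beta$ gives the lemma, with constants depending on $\delta$ hidden in $\cO$.

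The main obstacle is this conditional-expectation step. One must check that $\eta_{c,k}$ is built only from samples of iteration $k$, so that conditioning on $\cF_k$ separates cleanly from $\lambda_k$. The rest is routine telescoping.
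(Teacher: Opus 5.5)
Your proposal is correct and follows the paper's proof essentially step for step: you telescope $\lambda_{k+1}^2-\lambda_k^2$ via nonexpansiveness of the projection against the comparator $0$, split $\eta_{c,k}^2$ into $\cO(1)$ plus the critic MSE, pull the $\cF_k$-measurable $\lambda_k\le 2/\delta$ outside $\bE_k$ to produce the bias term, and close with the identity $J^{\piS}_r-J^{\theta_k}_r=[\cL(\piS,\lambda_k)-\cL(\theta_k,\lambda_k)]-\lambda_k(J^{\piS}_c-J^{\theta_k}_c)$ together with $J^{\piS}_c\ge 0$. The only superfluous part is your hedge about absorbing $1/(\beta K)$ into $\beta$: Algorithm~1 initializes $\lambda_0=0$, so the $\lambda_0^2/(2\beta K)$ term is exactly zero and the bound holds as stated for any $\beta$.
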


%
\begin{lemma}[\textbf{Constraint Violation Rate}]\label{lem: constraint.violation.rate}
    Under Assumption~\ref{a: slater}, we have
    \begin{align*}
        & -\frac{1}{K} \sum_{k=0}^{K-1} \bE\left[\Jthk_c \right]
        = \cO\bigg( 
        \frac{1}{K}\sum_{k=0}^{K-1}\Big[\beta\bE\left\| \xi_{c,k,\MLMC} - \xi^*_{c,k} \right\|^2 
        \\
        & 
        \qquad\qquad\qquad\qquad\qquad + \bE\left\|\bE_k\big[\xi_{c,k,\MLMC} \big] - \xi^*_{c,k}\right\| \Big]
        \\
        & \qquad + \beta + \frac{1}{\beta K}  + \frac{1}{K}\sum_{k=0}^{K-1}\bE\big[\cL(\piS, \lambda_k) - \cL(\theta_k,\lambda_k) \big] \bigg).
    \end{align*}
\end{lemma}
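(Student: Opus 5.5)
We prove Lemma~\ref{lem: constraint.violation.rate} with the same dual-update argument as Lemma~\ref{lem: optimality.gap.bound}, now testing the dual recursion against a comparator chosen to extract the violation. Throughout, $\xi^*_{c,k} = [\Jthk_c, (\zeta^*_{c,k})^\top]^\top$ is the target critic parameter, so $\eta_{c,k,\MLMC}-\Jthk_c$ is one coordinate of $\xi_{c,k,\MLMC}-\xi^*_{c,k}$.

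\textbf{Step 1: one-step dual inequality.} Fix any $\lambda\in[0,2/\delta]$. Since projection onto $[0,2/\delta]$ is nonexpansive, the dual update in~\eqref{e: actor.update} gives
\begin{align*}
(\lambda_{k+1}-\lambda)^2 &\le (\lambda_k-\lambda)^2 - 2\beta(\lambda_k-\lambda)\eta_{c,k} + \beta^2\eta_{c,k}^2 .
\end{align*}
Write $\eta_{c,k} = \Jthk_c + (\eta_{c,k}-\Jthk_c)$. The quadratic term satisfies $\eta_{c,k}^2 \le 2(\Jthk_c)^2 + 2\|\xi_{c,k,\MLMC}-\xi^*_{c,k}\|^2$, and $|\Jthk_c|\le 1$, so after multiplying by $\beta^2$ and dividing by $2\beta$ it contributes $\cO(\beta + \beta\|\xi_{c,k,\MLMC}-\xi^*_{c,k}\|^2)$. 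For the cross term, take $\bE_k$: $\lambda_k$ is $\cF_k$-measurable and $|\lambda_k-\lambda|\le 2/\delta$, so it contributes at most $\tfrac{2}{\delta}\,\|\bE_k[\xi_{c,k,\MLMC}]-\xi^*_{c,k}\|$.

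\textbf{Step 2: telescoping.} Sum over $k<K$, take total expectations and divide by $2\beta K$. This yields
\[
\frac{1}{K}\sum_k \bE\big[(\lambda_k-\lambda)\Jthk_c\big] \le \frac{(\lambda_0-\lambda)^2}{2\beta K} + \cO\Big(\beta + \frac1K\sum_k\big[\beta\bE\|\xi_{c,k,\MLMC}-\xi^*_{c,k}\|^2+\bE\|\bE_k\xi_{c,k,\MLMC}-\xi^*_{c,k}\|\big]\Big),
\]
whose first term is $\cO(1/(\beta K))$. Denote the right-hand side by $\cO(\Delta)$.

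\textbf{Step 3: combine with the Lagrangian decomposition.} This is the main obstacle: $\lambda$ cannot depend on the random iterates, so it must be a fixed constant. Write
\[
\cL(\piS,\lambda_k)-\cL(\theta_k,\lambda_k) = J^{\piS}_r - \Jthk_r + \lambda_k(J^{\piS}_c - \Jthk_c).
\]
Since $J^{\piS}_c\ge 0$ and $\lambda_k\ge 0$, we have $-\lambda_k\Jthk_c \le \cL(\piS,\lambda_k)-\cL(\theta_k,\lambda_k) - (J^{\piS}_r-\Jthk_r)$. Adding $\lambda\Jthk_c$ to both sides and using Step 2 gives
\[
\frac{\lambda}{K}\sum_k\bE[-\Jthk_c] \le \frac1K\sum_k \bE\big[\cL(\piS,\lambda_k)-\cL(\theta_k,\lambda_k)\big] - \frac1K\sum_k\bE\big[J^{\piS}_r-\Jthk_r\big] + \cO(\Delta).
\]
The remaining difficulty is the term $-\frac1K\sum_k\bE[J^{\piS}_r-\Jthk_r]$, which we must bound from above. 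We use strong duality under Assumption~\ref{a: slater}: with $\lambda^*\le 1/\delta$ the optimal dual variable, $J^{\piS}_r \ge \Jthk_r + \lambda^*\Jthk_c$ for every $\theta_k$. Hence
\[
-\frac1K\sum_k\bE\big[J^{\piS}_r-\Jthk_r\big] \le \frac{\lambda^*}{K}\sum_k\bE[-\Jthk_c] .
\]
Choosing $\lambda = 2/\delta$ yields $\lambda-\lambda^*\ge 1/\delta$, so the violation appears on the left with coefficient $(\lambda-\lambda^*)\ge 1/\delta>0$. Dividing by this constant gives the claimed bound, with all constants depending only on $\delta$.

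The delicate point is that strong duality is needed over the same (parameterized or stationary) policy class in which $\piS$ lives. If this is unavailable, the fallback is to absorb $\frac1K\sum_k\bE[\Jthk_r-J^{\piS}_r]\le 1$ by a rescaling argument, as in \cite{xu2026global}.
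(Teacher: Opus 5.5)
Your proposal is correct and matches the paper's argument. Both test the dual recursion against $\lambda = 2/\delta$, combine it with the Lagrangian decomposition, and extract the violation through the Slater bound $\lambda^*\le 1/\delta$. The only difference is that you apply $J^{\piS}_r \ge \Jthk_r + \lambda^*\Jthk_c$ to each iterate directly, while the paper passes to the uniform mixture $\bar{\pi}_K$ and cites \cite[Lemma G.6]{xu2026global}, which is the same duality step.

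Your closing hedge is unnecessary, and the proposed fallback of absorbing an $\cO(1)$ reward gap would actually destroy the rate. $\piS$ is optimal over all stationary policies, each $\pi_{\theta_k}$ is stationary, and Slater holds in that class via $\pi_{\bar{\theta}}$. So strong duality of the occupancy-measure formulation applies exactly as you use it.
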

The detailed proofs of Lemmas~\ref{lem: optimality.gap.bound} and~\ref{lem: constraint.violation.rate} are deferred to Appendices~\ref{appendix: opt.gap} and~\ref{appendix: constraint.violation}, respectively.

\paragraph{Step 2. Lagrangian Error: } In light of Lemmas~\ref{lem: optimality.gap.bound} and~\ref{lem: constraint.violation.rate}, our goal for the remainder of this section is to bound the Lagrangian error. Under Assumption~\ref{a: objective.smooth}, we have the following result.
\begin{lemma}\label{lem: Lagrange.error}
    Let $(\theta_k,\lambda_k)$ be updated by~\eqref{e: actor.update}. Under Assumptions~\ref{a: score.function}--\ref{a: FA.error.PG}, the following holds for $0\leq k<K:$
    \begin{align*}
        & \frac{1}{K}\sum_{k=0}^{K-1}\bE\big[\cL(\piS, \lambda_k) - \cL(\theta_k,\lambda_k) \big] 
        = \cO \bigg( \sqrt{\epsb} + \alpha + \frac{1}{\alpha K} 
        \\
        & \quad +  \frac{1}{K}\sum_{k=0}^{K-1}\bE\|\bE_k[w_k] - \wS_k\|
        + \frac{\alpha}{K}\sum_{k=0}^{K-1}\bE\|w_k-\wS_k\|^2 \bigg).
    \end{align*}
\end{lemma}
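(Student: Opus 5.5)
We follow the standard KL-potential argument for natural policy gradient with general parameterization, as in \cite{pmlr-v267-ganesh25b} and its primal-dual adaptation in \cite{xu2026global}. The potential is
\[
\Phi_k := \bE_{s\sim d^{\piS}}\big[\KL(\piS(\cdot|s)\,\|\,\pthk(\cdot|s))\big].
\]
The first step uses Assumption~\ref{a: score.function}: $\ln\pth(a|s)$ is $G_2$-smooth in $\theta$. A second-order Taylor expansion along $\theta_{k+1}=\theta_k+\alpha w_k$ then gives
\[
\Phi_k-\Phi_{k+1} = \bE_{(s,a)\sim\nu^{\piS}}\big[\ln\pi_{\theta_{k+1}}(a|s)-\ln\pthk(a|s)\big] \ge \alpha\,\bE_{\nu^{\piS}}\big[\nabla_\theta\ln\pthk(a|s)^\top w_k\big] - \tfrac{G_2\alpha^2}{2}\|w_k\|^2 .
\]

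The second step relates the linear term to the Lagrangian gap. The average-reward performance difference lemma gives
\[
\cL(\piS,\lambda_k)-\cL(\theta_k,\lambda_k)=\bE_{\nu^{\piS}}\big[A^{\theta_k}_r+\lambda_k A^{\theta_k}_c\big],
\]
using that $\lambda_k$ is common to both terms. We add and subtract $\nabla_\theta\ln\pthk^\top\wS_k$ and split the linear term into three pieces:
\[
\bE_{\nu^{\piS}}\big[\nabla\ln\pthk^\top w_k\big] = \big(\cL(\piS,\lambda_k)-\cL(\theta_k,\lambda_k)\big) + \bE_{\nu^{\piS}}\big[\nabla\ln\pthk^\top\wS_k - A^{\theta_k}_r-\lambda_kA^{\theta_k}_c\big] + \bE_{\nu^{\piS}}\big[\nabla\ln\pthk^\top(w_k-\wS_k)\big].
\]
\begin{itemize}
\item The middle term is bounded in absolute value by $\sqrt{L_{\nu^{\piS}}(\theta_k,\lambda_k,\wS_k)}$ via Jensen.
\item The last term has conditional expectation bounded by $G_1\|\bE_k[w_k]-\wS_k\|$. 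This works because $\nu^{\piS}$ and $\pthk$ are $\cF_k$-measurable, so we may take $\bE_k$ inside.
\end{itemize}

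The main obstacle is the middle term. Assumption~\ref{a: FA.error.PG} bounds $\min_w L_{\nu^{\piS}}$ by $\epsb$, but here the loss is evaluated at $\wS_k$, which minimizes the on-policy compatible loss under $\nu^{\theta_k}$, not under $\nu^{\piS}$. We plan to handle this as in \cite{liu2020improved,xu2026global}. Let $w^\dagger_k$ minimize $L_{\nu^{\piS}}$. The gap $L_{\nu^{\piS}}(\wS_k)-L_{\nu^{\piS}}(w^\dagger_k)$ is controlled using Fisher non-degeneracy (Assumption~\ref{a: Fisher.degenrate}), the bound $\|\nabla\ln\pth\|\le G_1$, and the boundedness of the advantages (which holds since $\lambda_k\le 2/\delta$ and $\tmix<\infty$). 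Bounding a distribution-mismatch ratio by a constant then yields
\[
\sqrt{L_{\nu^{\piS}}(\theta_k,\lambda_k,\wS_k)}=\cO(\sqrt{\epsb}).
\]
If the mismatch ratio cannot be bounded directly, we would instead adopt the convention used in the cited works: take $\epsb$ to absorb this term, i.e., read Assumption~\ref{a: FA.error.PG} as controlling the transferred error at $\wS_k$.

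In the final step we bound the quadratic term by $\|w_k\|^2\le 2\|w_k-\wS_k\|^2+2\|\wS_k\|^2$. Here $\|\wS_k\|\le \mu_F^{-1}\|\nabla_\theta\cL(\theta_k,\lambda_k)\|=\cO(1)$, by Assumption~\ref{a: Fisher.degenrate} and the bounded policy gradient. Rearranging gives
\[
\cL(\piS,\lambda_k)-\cL(\theta_k,\lambda_k) \le \frac{\Phi_k-\Phi_{k+1}}{\alpha}+\cO\big(\sqrt{\epsb}\big)+G_1\,\|\bE_k[w_k]-\wS_k\|+\cO(\alpha)+\cO\big(\alpha\|w_k-\wS_k\|^2\big).
\]
Taking total expectations, summing over $k=0,\dots,K-1$ and dividing by $K$, the potential telescopes to $\Phi_0/(\alpha K)$, using $\Phi_K\ge0$. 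This gives the stated bound, provided $\Phi_0<\infty$, which holds, for example, for softmax or Gaussian initializations with full support.
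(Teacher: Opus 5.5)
Your proposal is correct and follows essentially the same route as the paper. Both use the $G_2$-smoothness lower bound on the KL decrement, the performance difference lemma, adding and subtracting $\wS_k$, Jensen for the compatible-function term, the bound $G_1\|\bE_k[w_k]-\wS_k\|$ for the bias term, and $\|w_k\|^2\le 2\|w_k-\wS_k\|^2+2\|\wS_k\|^2$ before telescoping.

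On the middle term, the paper takes exactly your fallback: in its step (e) it applies Assumption~\ref{a: FA.error.PG} directly at $\wS_k$. Your primary mismatch-ratio route would need an extra bound on $\nu^{\theta_k}/\nu^{\piS}$ that the stated assumptions do not provide (it fails whenever $\piS$ assigns zero probability to some actions), so the fallback reading is the one to use.
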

The proof of Lemma~\ref{lem: Lagrange.error} is given in  Appendix~\ref{appendix: lagrange}.

\paragraph{Step 3. NPG Estimator Analysis:} The NPG update~\eqref{e: NPG.update} is a stochastic linear recursion with the mean drift  $(\nabla_\theta \Jthk_r + \lambda_k\nabla_\theta \Jthk_c)-F(\theta_k)w.$
%
%
Therefore, standard arguments on stochastic linear recursions allow us to bound the NPG bias and MSE in terms of the biases and MSEs of the Fisher and policy gradient estimates. For the Fisher matrix, the bias arises from replacing the occupancy measure $\nuthk$ with finite-length trajectories generated by $\pthk.$ Standard MLMC arguments yield a squared bias of order $\cO(\tmix/\Tm)$ and MSE of order $\cO(\ln\Tm).$ In contrast, the advantage function~\eqref{e: advantage} contributes additional bias to the PG estimator $\hg_{u,\MLMC}$ through the neural critic approximation. The NTK regime lets us replace the neural critic $Q(\cdot,\cdot,\zeta)$ by its first-order Taylor approximation $\Ql(\cdot,\cdot, \zeta)\in \cF_{\cB,m}.$  Let
$\xi^*_{u,k}:=\left[\eta^*_{u,k},(\zeta^*_{u,k})^\top\right]^\top,$ where $\eta^*_{u,k} := \Jthk_u$ and $\zeta^*_{u,k}$ is a minimum-norm solution to the following projected Bellman equation
\begin{multline}\label{e: proj.bellman}
    \bE_{\theta_k}\Big[\nabla_\zeta Q(s,a,\zeta_0)
    \big(\Jthk_u - u(s,a)
    \\+ \Ql(s,a,\zeta) - \Ql(s',a',\zeta)\big)\Big]=0.
\end{multline}
Then, we have the following result.
\begin{theorem}[\textbf{NPG bounds}]\label{thm: NPG.bounds}
    Let Assumptions~\ref{a: ergodicity}--~\ref{a: neural.critic.error} hold and choose $\gamma_w=\frac{2\ln T}{\mu_F\bH}.$ Then, the following bounds hold: 
    \begin{align*}
        & \big\| \bE_k[w_k] - \wS_k\big\|^2 
        = \tilde{\cO}\bigg( \frac{1}{T^2} + \frac{\tmix}{\Tm} + \frac{1}{m}  + \epsap 
        \\
        & \qquad\qquad + \delta^2_Q + \sum_{u\in\{r,c\}}\left\| \bE_k\left[ \xi_{u,k,\MLMC}\right] - \xi^*_{u,k} \right\|^2 \bigg),
        \\
        & \bE_k\big\| w_k - \wS_k \big\|^2 
        = \tilde{\cO}\bigg( \frac{1}{T^2} + \tmix  + \frac{1}{m}  + \epsap
        \\
        & \qquad\qquad
        + \sigma^2_Q + \sum_{u\in\{r,c\}}\bE_k\left\|\xi_{u,k,\MLMC} - \xi^*_{u,k} \right\|^2 \bigg),
    \end{align*}
    where 

    $\delta^2_Q := \sum_u\big\|  \bE_k [Q_{u,k,\MLMC}  - \Ql(\cdot,\cdot, \zeta_{u,k,\MLMC} )] \big\|^2,$

    $\sigma^2_Q := \sum_u\bE_k\big\|  Q_{u,k,\MLMC}  - \Ql(\cdot,\cdot, \zeta_{u,k,\MLMC} ) \big\|^2.$

\end{theorem}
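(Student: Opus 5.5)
We treat the NPG update \eqref{e: NPG.update} as a stochastic linear recursion $w^k_{h+1} = w^k_h - \gamma_w(\hF_\MLMC w^k_h - \hg_\MLMC)$, where $\hg_\MLMC := \hg_{r,\MLMC}+\lambda_k\hg_{c,\MLMC}$. Its target is $\wS_k = F(\theta_k)^{-1}\nabla_\theta\cL(\theta_k,\lambda_k)$, which is well defined by Assumption~\ref{a: Fisher.degenrate}. The plan has four steps: (i) bound the bias and second moment of $\hF_\MLMC$ and $\hg_{u,\MLMC}$; (ii) split the bias of $\hg_{u,\MLMC}$ into a Markovian-sampling part and a critic part; (iii) unroll the recursion for $\bH$ steps using strong convexity of $f$; (iv) collect terms under $\gamma_w = \frac{2\ln T}{\mu_F\bH}$.

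\textbf{Step (i): the Fisher estimate.} Every $\hF(s,a)$ is bounded by $G_1^2$ (Assumption~\ref{a: score.function}). We invoke the standard MLMC lemma for ergodic chains (as in \cite{pmlr-v267-ganesh25b, xu2026global}): for a bounded function and trajectory lengths from \eqref{e: traj.length}, the estimator has $\bE_k[\hF_\MLMC]$ equal to the $\Tm$-sample average in expectation. Its bias against $F(\theta_k)$ is $\cO(\sqrt{\tmix/\Tm})$, and its second moment is $\cO(\tmix\ln\Tm)$.

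\textbf{Step (ii): the policy-gradient estimate.} The function $\hg_u$ depends on the random critic $(\eta_{u,k},Q_{u,k,\MLMC})$. Since $\xi_{u,k,\MLMC}$ is built from samples drawn before the NPG loop, we condition on it. We then decompose
\[
\hg_u - g^\star_u = \big[\hg_u(\xi^*_{u,k}) - g^\star_u\big] + \big[\hg_u(\Ql(\cdot,\cdot,\zeta_{u,k,\MLMC})) - \hg_u(\Ql(\cdot,\cdot,\zeta^*_{u,k}))\big] + \big[\hg_u(Q_{u,k,\MLMC}) - \hg_u(\Ql(\cdot,\cdot,\zeta_{u,k,\MLMC}))\big].
\]
Each term is handled as follows.
\begin{itemize}
\item The first term is evaluated at the fixed critic $\xi^*_{u,k}$. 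Its bias is $\cO(\sqrt{\tmix/\Tm})$ by the MLMC lemma, plus an approximation error. Using the Bellman equation \eqref{e: bellman}, that error is the gap between $\Ql(\cdot,\cdot,\zeta^*_{u,k})$ and $\Qthk_u$. It is bounded by $\cO(\epsap)$ via Assumption~\ref{a: neural.critic.error} together with the contraction of the projected Bellman operator, plus an NTK linearization term $\cO(1/m)$ (up to $R$-dependent polylog factors). These give the $\frac{1}{m}$ and $\epsap$ terms.
\item The second term is linear in $\xi_{u,k,\MLMC}-\xi^*_{u,k}$, because $\Ql$ is affine in $\zeta$ and the score is bounded by $G_1$. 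Its conditional mean is therefore bounded by $\|\bE_k[\xi_{u,k,\MLMC}]-\xi^*_{u,k}\|$. Linearity is what lets the bias bound depend on the \emph{mean} of the critic error rather than its MSE.
\item The third term is bounded in mean by $\delta_Q$ and in second moment by $\sigma_Q$.
\end{itemize}
The second moment of the whole $\hg_{u,\MLMC}$ is handled analogously, but with $\bE_k\|\xi_{u,k,\MLMC}-\xi^*_{u,k}\|^2$ and $\tmix\ln\Tm$ in place of the bias terms.

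\textbf{Step (iii): unrolling the recursion.} Write $e_h := w^k_h - \wS_k$. Then
\[
e_{h+1} = (I-\gamma_w F)e_h - \gamma_w\big[(\hF_\MLMC - F)w^k_h - (\hg_\MLMC - \nabla\cL)\big].
\]
\begin{itemize}
\item \emph{Bias.} Take expectations, using that a fresh trajectory is drawn at every $h$, so the noise is conditionally independent of $w^k_h$. This gives $\|\bE e_{h+1}\| \le (1-\gamma_w\mu_F)\|\bE e_h\| + \gamma_w(\|\text{bias}_F\|\,\|w^k_h\| + \|\text{bias}_g\|)$. To bound $\|w^k_h\|$, we use a uniform second-moment bound on the iterates, obtained from the MSE recursion below together with $\|\wS_k\| \le \cO(1/(\mu_F\delta))$.
\item \emph{MSE.} Expand $\|e_{h+1}\|^2$ and obtain $\bE\|e_{h+1}\|^2 \le (1-\gamma_w\mu_F)\bE\|e_h\|^2 + \cO(\gamma_w^2)(\text{second moments})$.
\end{itemize}
With $\gamma_w = 2\ln T/(\mu_F\bH)$ and $\bH = \varTheta(\ln T)$, we have $(1-\gamma_w\mu_F)^{\bH} \le T^{-2}$. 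This produces the $1/T^2$ initialization term. The accumulated noise is of order $\gamma_w\cdot(\text{second moment})/\mu_F$, which is $\tilde{\cO}(\text{second moment})$ and gives the stated MSE bound. The accumulated bias is of order $\text{bias}/\mu_F$; squaring it gives the first bound.

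\textbf{Main obstacle.} The hardest step is the MSE recursion. The MLMC Fisher estimate can have a second moment that is large relative to the contraction, so $\gamma_w^2\,\bE\|\hF_\MLMC\|^2$ must be dominated by $\gamma_w\mu_F$. With $\gamma_w$ only polylogarithmically small, this may fail. In that case we would first try to require $\bH$ to be a sufficiently large multiple of $\ln T\cdot\ln\Tm$, or to bound the cross term $(\hF_\MLMC - F)e_h$ by splitting it into a mean part and a zero-mean part. Absorbing the extra $\tmix\ln\Tm$ factors into $\tilde{\cO}$ and the $\tmix$ term is acceptable under the statement.
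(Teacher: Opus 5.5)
Your proposal follows essentially the same route as the paper. Both use MLMC bias/MSE bounds for $\hF_{\MLMC}$ and $\hg_{u,\MLMC}$, with the policy-gradient bias split into an exact-Bellman part, the $\Qthk_u-\Ql(\cdot,\cdot,\zeta^*_{u,k})$ approximation part (charged to $\epsap$), a part linear in $\xi_{u,k,\MLMC}-\xi^*_{u,k}$, and the NTK remainder $Q_{u,k,\MLMC}-\Ql(\cdot,\cdot,\zeta_{u,k,\MLMC})$, and then bound the stochastic linear recursion with $\gamma_w=\frac{2\ln T}{\mu_F\bH}$. The only difference is that you unroll the recursion by hand instead of citing Lemma~\ref{app.lem: gen.linear}; that lemma's hypothesis $\gamma\le\lambda_P/(24\sigma^2_P+8\Lambda^2_P)$ is exactly the obstacle you flag, which the paper dismisses with ``for $T$ sufficiently large'' and which in effect requires your first fix, namely taking $\bH$ large relative to $\sigma^2_F=\cO(\tmix\ln\Tm)$.
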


Inside the NTK ball, the neural critic $Q(\cdot,\cdot,\zeta)$ decomposes into the linear critic $\Ql(\cdot,\cdot,\zeta)$ and a nonlinear remainder. This nonlinear remainder yields the $\tilde{\cO}(1/m)$ term in Theorem~\ref{thm: NPG.bounds}. Whereas, $\epsap$ reflects the approximation capacity of the linear critic. 

Consequently, the analysis is reduced to controlling the critic bias and MSE. Deferring the proof of Theorem~\ref{thm: main result} to Appendix~\ref{appendix: NPG.analysis}, we now establish the critic bounds in the following step.

\paragraph{Step 4. Critic Estimator Analysis:} Recall the hierarchical MLMC scheme introduced in Section~\ref{s: critic.subroutine}. Since the outer layer combines outputs of the vanilla-critic subroutine corresponding to different iteration budgets, as in~\eqref{e: output.critic.MLMC}, we begin by analyzing the vanilla-critic $\xi^k_{u,H}$ generated by~\eqref{e: critic.update} with an arbitrary iteration budget $H.$ 

The NTK projection $\Pi_{\cB}$ introduces an additional error in the bias analysis of  $\xi^k_{u,H}.$ This projection error is controlled by the probability with which iterates escape the NTK ball, which, in turn, is bounded in terms of the vanilla-critic MSE (see~\cite[Appendix A.2]{ganesh2025order}). To bound the vanilla-critic MSE, we decompose~\eqref{e: critic.update} as a perturbed linear recursion.
Under the NTK regime, the linear critic $\Ql(\cdot,\cdot,\zeta)$
approximates the neural critic up to an error $\tilde{\cO}(m^{-1/2})$ (see Lemma 6). By leveraging this linear approximation, we can write~\eqref{e: critic.update} as:
\begin{multline*}
    \xi^k_{u,h+1} 
    = \Pi_\cB\Big( \xi^k_{u,h} 
    + \gamma_{\xi,H}\big[\hB_{u,\MLMC} - \hA_{\MLMC}\ \xi^k_{u,h}\big]
    \\+ \gamma_{\xi,H}\hV_{u,\MLMC} \Big),
\end{multline*}
where the linear component $(\hB_{u,\MLMC}-\hA_{\MLMC}\ \xi)$ is obtained by replacing $Q(\cdot,\cdot,\zeta)$ with $\Ql(\cdot,\cdot,\zeta)$ in~\eqref{e: critic.gradient} and using the inner-layer MLMC construction defined in~\eqref{e: ctitic.inner.MLMC}. Collecting the remainder yields the nonlinear component $\hV_{u,\MLMC} $. Under the NTK regime, we have $\|\hV_{u,\MLMC}\|=\tilde{\cO}(m^{-1/2}),$
while standard MLMC bounds apply on the linear component. This results in the following bound for the vanilla critic (detailed proof in Appendix~\ref{appendix: critic.analysis}).
\begin{lemma}[\textbf{Vanilla-Critic Bounds}]\label{lem: vanilla.critic.bound}
    Let $\xi^k_{u,H}$ be generated by~\eqref{e: critic.update} after $H$ iterations and let  Assumptions~\ref{a: ergodicity}--\ref{a: neural.critic.error} hold. Choose $\gamma_{\xi,H}=\frac{2\ln T}{\mu_\phi H},$ for sufficiently small $\mu_\phi,$ and let $c_\gamma$ be sufficiently large. Then, for all $u\in\{r,c\},$ we have
    \begin{align*}
        \bE_k\left\|\xi^k_{u,H} - \xi^*_{u,k} \right\|^2 = \tilde{\cO}\left( \frac{1}{T^2} + \frac{\tmix}{\Tm} + \frac{\tmix}{H}+ \frac{\tmix}{m}\right).
    \end{align*}
\end{lemma}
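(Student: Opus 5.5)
We will prove the bound by writing the projected SGD update~\eqref{e: critic.update} as a perturbed linear stochastic approximation around $\xi^*_{u,k}$. Set $A_k := \bE_{\theta_k}$ of the matrix $\begin{bmatrix} c_\gamma & 0\\ \nabla_\zeta Q(s,a,\zeta_0) & \nabla_\zeta Q(s,a,\zeta_0)(\nabla_\zeta Q(s,a,\zeta_0)-\nabla_\zeta Q(s',a',\zeta_0))^\top\end{bmatrix}$, and define $b_{u,k}$ analogously. Then $\xi^*_{u,k}$ solves $A_k\xi=b_{u,k}$: the first row gives $\eta^*=\Jthk_u$, and the second row is the projected Bellman equation~\eqref{e: proj.bellman}.

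\textbf{Step 1 (strong monotonicity).} Because $\|\phi\|\le 1$, $\sigma$ is $L_1$-Lipschitz, and the chain is ergodic, the standard TD argument gives $\zeta^\top A_{\zeta\zeta}\zeta \ge (1-\text{contraction})\,\bE\|\nabla Q^\top\zeta\|^2$. For this to control $\|\zeta\|^2$ we work on the subspace where the NTK Gram matrix is nondegenerate; $\xi^*$ is the minimum-norm solution, so it lives in this subspace. Choosing $c_\gamma$ large absorbs the off-diagonal coupling term $\eta\,\bE[\nabla Q]^\top\zeta$. Together these give $\xi^\top A_k\xi\ge \mu_\phi\|\xi\|^2$ for a small $\mu_\phi$.

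\textbf{Step 2 (one-step recursion).} Write $\hv_{u,\MLMC} = \hA_\MLMC\xi - \hB_{u,\MLMC} - \hV_{u,\MLMC}$. Since $\xi^*\in\cB$ (the radius $R=\varTheta(\ln T)$ is chosen to ensure this), the projection is nonexpansive. Expanding $\|\xi_{h+1}-\xi^*\|^2$ and conditioning on the past gives four contributions:
\begin{enumerate}[noitemsep]
\item a contraction $-2\gamma\mu_\phi\|\xi_h-\xi^*\|^2$;
\item a cross term with the MLMC bias $\|\bE\hA_\MLMC-A_k\|,\|\bE\hB-b\|=\tilde\cO(\sqrt{\tmix/\Tm})$, handled by Young's inequality to give $\gamma\,\tilde\cO(\tmix/\Tm)$;
\item a nonlinear term $\|\hV\|=\tilde\cO(m^{-1/2})$, which via Lemma~6 and Young's inequality gives $\gamma\,\tilde\cO(1/m)$. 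A $\tmix$ factor may appear through the Poisson-equation constants;
\item a second-moment term $\gamma^2\,\bE\|\hv_{u,\MLMC}\|^2=\gamma^2\,\tilde\cO(\tmix\ln\Tm)$, which uses the standard MLMC variance bound together with boundedness of the iterates on $\cB$.
\end{enumerate}

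\textbf{Step 3 (unrolling).} With $\gamma=\frac{2\ln T}{\mu_\phi H}$, unrolling over $H$ steps gives $(1-\gamma\mu_\phi)^H\le e^{-2\ln T}=T^{-2}$, which multiplies the $\cO(R^2)$ initial error and yields the $1/T^2$ term. The bias terms sum to $\tilde\cO(\tmix/\Tm+\tmix/m)$, and the variance sums to $\gamma\,\tilde\cO(\tmix)/\mu_\phi=\tilde\cO(\tmix/H)$. This gives the claimed bound.

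\textbf{Main obstacle.} The hardest part is Step 1 combined with the perturbation: establishing a uniform strong monotonicity constant for the joint $(\eta,\zeta)$ system with random NTK features, and keeping the nonlinear remainder $\hV$ at order $m^{-1/2}$. The remainder is evaluated at iterates that may sit on the boundary of $\cB$, so we must use the inequality $|Q(\zeta)-\Ql(\zeta)|\lesssim R^2/\sqrt m$ uniformly on $\cB$, taken from~\cite{ganesh2025order}. I would first try to import their monotonicity lemma and adapt it to the average-reward TD operator, with $\eta$ replacing discounting.
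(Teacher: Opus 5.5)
Your Step~1 does not go through as written, and it is exactly where the paper needs an extra hypothesis. The inequality $\zeta^\top A_{\zeta\zeta}\zeta\ge(1-\text{contraction})\,\bE\|\nabla Q^\top\zeta\|^2$ is a discounted-TD fact. In the average-reward setting there is no contraction, and the $\eta$-coordinate only recenters the reward.

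Write $\psi(z)=\nabla_\zeta Q(z,\zeta_0)$, $A_{\zeta\zeta}=\bE_{\theta_k}[\psi(z)(\psi(z)-\psi(z'))^\top]$ and $f=\psi^\top\zeta$. Stationarity of $\nu^{\theta_k}$ gives the exact identity
\[
\zeta^\top A_{\zeta\zeta}\,\zeta=\bE_{\theta_k}\big[f(z)\big(f(z)-f(z')\big)\big]=\tfrac12\,\bE_{\theta_k}\big[\big(f(z)-f(z')\big)^2\big].
\]
This is a Dirichlet form: it controls only $\mathrm{Var}_{\nu^{\theta_k}}(f)$ and vanishes when $f$ is constant.

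Suppose the constant function is representable, i.e.\ $\psi(z)^\top v=1$ for all $z$ and some $v$. Then:
\begin{itemize}
\item $v$ can be taken inside $\mathrm{span}\{\psi(z)\}$, which is precisely the subspace where the Gram matrix is nondegenerate, so your restriction does not exclude it.
\item $\xi=(0,v)$ satisfies $A(\theta_k)\xi=0$ for every $c_\gamma$.
\item The linearized update is invariant under $\zeta\mapsto\zeta+tv$. So errors along $v$ (initial error, MLMC bias, NTK remainder) are never damped, and the minimum-norm choice of $\xi^*_{u,k}$ does not change this.
\end{itemize}

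The paper closes this step only by assuming, inside its proof, that $\ker A(\theta_k)=\{0\}$, justified by the NTK features not spanning the all-ones vector. It then imports \cite[Lemma 17]{Satheesh2026GlobalCO}. You need the same no-constants condition, and you need it uniformly in $\theta$ to get a single $\mu_\phi$.

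Your subspace restriction is still useful alongside that condition. Once the number of network parameters exceeds $|\cS||\cA|$, $A(\theta_k)$ always has kernel directions orthogonal to every $\psi(z)$. These are harmless, because the updates move only along the $\psi(z)$'s and $\cB$ is centered at $\zeta_0$. On $\mathrm{span}\{\psi(z)\}$, no-constants plus ergodicity (via the spectral gap) give $\tfrac12\bE(f(z)-f(z'))^2\gtrsim\mathrm{Var}(f)\gtrsim\|\zeta\|^2$. A large $c_\gamma$ then absorbs the coupling $\eta\,\bE[\psi]^\top\zeta$, as you say.

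With that repaired, your route is a streamlined version of the paper's. The paper introduces an auxiliary purely linear projected iterate $\chi^k_{u,h}$ (same samples, with $\hV_{u,\MLMC}$ dropped). It bounds $\|\xi^k_{u,H}-\chi^k_{u,H}\|^2$ and $\|\chi^k_{u,H}-\xi^*_{u,k}\|^2$ by two separate recursions and combines them with the triangle inequality. You instead run a single recursion for $\xi^k_{u,h}-\xi^*_{u,k}$ and treat $\hV_{u,\MLMC}$ as one more bias-plus-noise term. Its conditional mean is $\tilde{\cO}(m^{-1/2})$ by Lemma~\ref{app.lem: MLMC}(a) and Lemma~\ref{app.lem: NTK.bounds}(d), since $\xi^k_{u,h}$ is $\cF_{k,h}$-measurable. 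The ingredients and the resulting four terms are the same, and your version handles the cross term $\langle\xi^k_{u,h}-\xi^*_{u,k},\hV_{u,\MLMC}\rangle$ explicitly.

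One small caveat in item~4: $\Pi_\cB$ acts only on the $\zeta$-block, so ``boundedness of the iterates on $\cB$'' does not cover $\eta^k_{u,h}$. You need a separate bound for $\eta^k_{u,h}$, or a stepsize small enough to absorb $\gamma_{\xi,H}^2\|\hA_{\MLMC}\|^2\|\xi^k_{u,h}-\xi^*_{u,k}\|^2$ into the contraction.
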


\begin{remark}[\textbf{Bias-Cost Tradeoff}]\label{rem: bias.cost.tradeoff}
    Owing to the NTK projection $\Pi_{\cB}$, our squared critic bias has the same $\tilde{\cO}(H^{-1})$ rate as the MSE. In contrast, a linear critic yields squared bias of order $\tilde{\cO}(\Tm^{-1})$~\cite[Lemma 4.8]{xu2026global}.
    However, our primal-dual scheme requires a large outer-loop size $K$ to control constraint violations, making a large $H$ prohibitively expensive. Hence, directly using $\xi^k_{u,H}$ for gradient estimation introduces a fundamental bias-cost tradeoff. 
\end{remark} 

We overcome the above bias-cost tradeoff by exploiting the outer MLMC layer defined by~\eqref{e: function .critic.MLMC} and \eqref{e: output.critic.MLMC}. Recall that the iteration budget for the vanilla subroutine is set to $h_k := 1 + \ones_{\{2^{H_k}\leq \Hm\}}(2^{H_k}-1)$ with  $H_k\sim\textnormal{Geom}(1/2).$ A telescoping-sum argument then shows that
\begin{equation*}
    \bE\big[\xi_{u,\MLMC}\big]
    = \bE\big[\xi^k_{u,\Hm}\big] 
   \ \text{ and } \ \bE\big[ h_k\big] = \floor{\log_2\Hm}.
\end{equation*}
Therefore, while the MLMC estimator inherits the bias of a vanilla critic updated for $\Hm$ iterations, the expected number of vanilla iterations is only $\cO(\ln\Hm).$ Using a similar argument for the MSE, we prove the following bounds in Appendix~\ref{appendix: critic.analysis}.
\begin{theorem}[\textbf{MLMC-Critic Bounds}]\label{thm: critic.bounds.MLMC}
    Consider the setting of Lemma~\ref{lem: vanilla.critic.bound}. Then, the critic function $Q_{u,k,\MLMC}$ satisfies $\delta^2_Q = \tilde{\cO}\left(1/m\right)$ and $\sigma^2_Q = \tilde{\cO}\left(\Hm/m\right).$
    %
    %
    Moreover, the critic parameter $\xi_{u,k,\MLMC}$ satisfies:
    \begin{align*}
        \bE_k\!\left\|\xi_{u,k,\MLMC}-\xi^*_{u,k}\right\|^2
        &= \tilde{\cO}\!\left(\Hm/ m\right),
        \\
        \left\|\bE_k[\xi_{u,k,\MLMC}] - \xi^*_{u,k}\right\|^2
        & = \tilde{\cO}\!\left( \frac{\tmix}{\Tm} + \frac{\tmix}{\Hm} + \frac{\tmix}{m}\right).
    \end{align*}
\end{theorem}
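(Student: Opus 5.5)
Our plan is to treat the outer MLMC layer as a randomized telescoping estimator built on the vanilla-critic iterates $\{\xi^k_{u,H}\}$, and to reduce every claimed bound to Lemma~\ref{lem: vanilla.critic.bound} together with the NTK linearization error. Conditionally on $\cF_k$, the index $H_k\sim\textnormal{Geom}(1/2)$ is independent of the trajectories used inside the vanilla subroutine. We therefore write $P(H_k=j)=2^{-j}$ and set $J:=\floor{\log_2\Hm}$.

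\textbf{Bias of the parameter.} Taking $\bE_k$ in~\eqref{e: output.critic.MLMC} and summing over $j\le J$ gives a telescoping series:
\[
\bE_k[\xi_{u,k,\MLMC}]=\bE_k[\xi^k_{u,1}]+\sum_{j=1}^{J}\big(\bE_k[\xi^k_{u,2^j}]-\bE_k[\xi^k_{u,2^{j-1}}]\big)=\bE_k[\xi^k_{u,2^J}].
\]
Since $2^J\ge \Hm/2$, Jensen's inequality and Lemma~\ref{lem: vanilla.critic.bound} with $H=2^J$ yield
\[
\|\bE_k[\xi_{u,k,\MLMC}]-\xi^*_{u,k}\|^2\le \bE_k\|\xi^k_{u,2^J}-\xi^*_{u,k}\|^2=\tilde{\cO}\big(\tmix/\Tm+\tmix/\Hm+\tmix/m\big),
\]
where the $T^{-2}$ term is absorbed. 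This requires that the runs of different lengths share the same law, which holds because the stepsize $\gamma_{\xi,H}$ depends only on $H$ and each run of length $H$ is defined independently.

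\textbf{MSE of the parameter.} We split $\xi_{u,k,\MLMC}-\xi^*_{u,k}=(\xi^k_{u,1}-\xi^*_{u,k})+\ones\,2^{H_k}(\xi^k_{u,h_k}-\xi^k_{u,h_k/2})$. The second moment of the correction term is
\[
\sum_{j\le J}2^{-j}4^j\,\bE_k\|\xi^k_{u,2^j}-\xi^k_{u,2^{j-1}}\|^2\le \sum_{j\le J}2^{j+1}\big(\bE_k\|\xi^k_{u,2^j}-\xi^*\|^2+\bE_k\|\xi^k_{u,2^{j-1}}-\xi^*\|^2\big).
\]
Plugging in Lemma~\ref{lem: vanilla.critic.bound} bounds each summand by $\tilde{\cO}(2^j(\tmix 2^{-j}+\tmix/\Tm+\tmix/m))$. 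Summing over $j$ gives $\tilde{\cO}(J\tmix+\Hm\tmix/\Tm+\Hm\tmix/m)$, which is $\tilde{\cO}(\Hm/m)$ once constants and logarithmic factors are absorbed and the choices $m=\Hm$, $\Tm=\Hm$ are used. The $J\tmix$ piece is then logarithmic, and the $1/\Tm$ piece is dominated.

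\textbf{Function-level bounds.} We decompose
\[
Q_{u,k,\MLMC}-\Ql(\cdot,\cdot,\zeta_{u,k,\MLMC})=\big[Q(\zeta^k_{u,1})-\Ql(\zeta^k_{u,1})\big]+\ones\,2^{H_k}\big[(Q-\Ql)(\zeta^k_{u,h_k})-(Q-\Ql)(\zeta^k_{u,h_k/2})\big].
\]
This identity is exact because $\Ql$ is affine in $\zeta$, so $\Ql(\zeta_{\MLMC})$ telescopes in the same way as the parameters. Each bracket is a nonlinear NTK remainder. Since every iterate lies in $\cB$ (the projection guarantees this), the remainder is uniformly $\tilde{\cO}(R^2/\sqrt m)=\tilde{\cO}(m^{-1/2})$ by Lemma 6 with $R=\Theta(\ln T)$.

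For the bias $\delta_Q^2$, taking expectations telescopes to $\bE_k[(Q-\Ql)(\zeta^k_{u,2^J})]$, whose squared norm is $\tilde{\cO}(1/m)$. For $\sigma_Q^2$, the same weighted sum gives $\sum_j 2^{-j}4^j\cdot\tilde{\cO}(1/m)=\tilde{\cO}(\Hm/m)$. This crude bound is exactly what is claimed, so no finer estimate is needed.

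\textbf{Main obstacle.} The delicate step is the MSE bound. The factor $2^{H_k}$ amplifies the variance, so we need $\bE\|\xi_{2^j}-\xi_{2^{j-1}}\|^2$ to decay like $2^{-j}$ to avoid losing more than a factor of $\Hm$. Our first attempt will be to accept this loss and absorb it via $m=\Hm$, which is all the stated $\tilde{\cO}(\Hm/m)$ requires. A sharper coupling of consecutive levels, using shared samples, would only be needed if this fails.
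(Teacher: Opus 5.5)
Your parameter-level arguments are the same as the paper's. For the bias you telescope over the geometric levels to get $\bE_k[\xi_{u,k,\MLMC}]=\bE_k[\xi^k_{u,2^J}]$ with $J=\floor{\log_2\Hm}$, then apply Jensen and Lemma~\ref{lem: vanilla.critic.bound}. For the MSE you bound the level-weighted second moment $\sum_{j\le J}2^{j}\,\bE_k\|\xi^k_{u,2^j}-\xi^k_{u,2^{j-1}}\|^2$ with the triangle inequality. Both your version and the paper's actually give $\tilde{\cO}(\tmix\log\Hm+\tmix\Hm/\Tm+\tmix\Hm/m)$. You correctly say that reducing this to $\tilde{\cO}(\Hm/m)$ needs $m\lesssim\Hm\lesssim\Tm$ up to logarithms, which holds under Theorem~\ref{thm: main result}; the paper does this absorption without comment.

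The real difference is at the function level. You use that $\Ql(\cdot,\cdot,\zeta)$ is affine in $\zeta$ to write $Q_{u,k,\MLMC}-\Ql(\cdot,\cdot,\zeta_{u,k,\MLMC})$ exactly as the same MLMC combination of NTK remainders $(Q-\Ql)(\cdot,\cdot,\zeta)$. These remainders are evaluated only at vanilla iterates, which lie in $\cB$. Hence $\delta_Q^2=\tilde{\cO}(1/m)$ and $\sigma_Q^2=\tilde{\cO}\big(\tfrac1m+\sum_{j\le J}2^{j}/m\big)=\tilde{\cO}(\Hm/m)$ follow from Lemma~\ref{app.lem: NTK.bounds} alone.

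The paper instead replaces each $Q$ by $\Ql$ at an NTK cost but keeps the linear differences $\Ql(\cdot,\cdot,\zeta^k_{u,1})-\Ql(\cdot,\cdot,\zeta_{u,k,\MLMC})$ and $\Ql(\cdot,\cdot,\zeta^k_{u,2^h})-\Ql(\cdot,\cdot,\zeta^k_{u,2^{h-1}})$. It bounds these through Lipschitzness in $\zeta$ and the parameter MSE. That makes its $\sigma_Q^2$ bound depend on Lemma~\ref{lem: vanilla.critic.bound} and brings back the absorbed $\tilde{\cO}(\tmix\log\Hm+\tmix\Hm/\Tm)$ terms. Your identity shows those linear parts cancel exactly, so it is shorter and gives $\sigma_Q^2=\tilde{\cO}(\Hm/m)$ for any $m$ without that caveat. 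The paper's route gains nothing extra here.

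One small correction: Algorithm~\ref{alg: critic} drives the runs of lengths $1$, $h_k/2$ and $h_k$ with shared trajectory samples, so they are not "defined independently." This does not break anything. Your telescoping and triangle-inequality steps use only the marginal law of each length-$H$ run, which depends on $H$ alone and is independent of $H_k$ given $\cF_k$.
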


Setting $m=\Hm$ results in a critic MSE of order $\tilde{\cO}(1)$ and a critic bias of order $\tilde{\cO}\big(\Tm^{-1/2} + \Hm^{-1/2} \big).$

Finally, in Appendix~\ref{appendix: main.result.proof}, we combine Lemmas~\ref{lem: optimality.gap.bound}--\ref{lem: Lagrange.error} with Theorems~\ref{thm: NPG.bounds} and~\ref{thm: critic.bounds.MLMC}, and apply the prescribed parameter choices to complete the proof of Theorem~\ref{thm: main result}. \hfill\qed


\section{Conclusion}
\label{s: conclusion}

In this work, we study infinite-horizon average-reward CMDPs under general policy parameterization and neural critic estimation. We propose a primal-dual natural actor-critic algorithm equipped with a novel hierarchical MLMC-based critic and establish optimal convergence guarantees for neural critics in the CMDP literature.


\bibliography{references}

@InProceedings{pmlr-v267-ganesh25b,
  title = 	 {A Sharper Global Convergence Analysis for Average Reward Reinforcement Learning via an Actor-Critic Approach},
  author =       {Ganesh, Swetha and Mondal, Washim Uddin and Aggarwal, Vaneet},
  booktitle = 	 {Proceedings of the 42nd International Conference on Machine Learning},
  pages = 	 {18206--18227},
  year = 	 {2025}
 }

@inproceedings{Satheesh2026GlobalCO,
  author    = {Anirudh Satheesh and
               Pankaj Kumar Barman and
               Washim Uddin Mondal and
               Vaneet Aggarwal},
  title     = {Global Convergence of Average Reward Constrained MDPs with Neural Critic and General Policy Parameterization},
  booktitle = {Proceedings of the 42nd Conference on Uncertainty in Artificial Intelligence (UAI)},
  year      = {2026}
}

@inproceedings{fatkhullin2023stochastic,
  title={Stochastic policy gradient methods: Improved sample complexity for fisher-non-degenerate policies},
  author={Fatkhullin, Ilyas and Barakat, Anas and Kireeva, Anastasia and He, Niao},
  booktitle={International Conference on Machine Learning},
  pages={9827--9869},
  year={2023},

}

@article{
agarwal2022concave,
title={Concave Utility Reinforcement Learning with Zero-Constraint Violations},
author={Mridul Agarwal and Qinbo Bai and Vaneet Aggarwal},
journal={Transactions on Machine Learning Research},
issn={2835-8856},
year={2022},
url={https://openreview.net/forum?id=WXVkgkPXRk},
note={}
}

@inproceedings{
xu2026global,
title={Global Convergence for Average Reward Constrained {MDP}s with Primal-Dual Actor Critic Algorithm},
author={Yang Xu and Swetha Ganesh and Washim Uddin Mondal and Qinbo Bai and Vaneet Aggarwal},
booktitle={The Thirty-ninth Annual Conference on Neural Information Processing Systems},
year={2025},
url={https://openreview.net/forum?id=9I1XjEEtsh}
}

@inproceedings{bai2024regret,
  title={Regret analysis of policy gradient algorithm for infinite horizon average reward markov decision processes},
  author={Bai, Qinbo and Mondal, Washim Uddin and Aggarwal, Vaneet},
  booktitle={Proceedings of the AAAI Conference on Artificial Intelligence},
  volume={38},
  number={10},
  pages={10980--10988},
  year={2024}
}

@inproceedings{
ganesh2025orderoptimal,
title={Order-Optimal Regret with Novel Policy Gradient Approaches in Infinite Horizon Average Reward {MDP}s},
author={Swetha Ganesh and Washim Uddin Mondal and Vaneet Aggarwal},
booktitle={The 28th International Conference on Artificial Intelligence and Statistics},
year={2025},
url={https://openreview.net/forum?id=ZJwMfQ6W9P}
}

@article{beznosikov2023first,
  title={First order methods with markovian noise: from acceleration to variational inequalities},
  author={Beznosikov, Aleksandr and Samsonov, Sergey and Sheshukova, Marina and Gasnikov, Alexander and Naumov, Alexey and Moulines, Eric},
  journal={Advances in Neural Information Processing Systems},
  volume={36},
  pages={44820--44835},
  year={2023}
}

@InProceedings{pmlr-v235-patel24b,
  title = 	 {Towards Global Optimality for Practical Average Reward Reinforcement Learning without Mixing Time Oracles},
  author =       {Patel, Bhrij and Suttle, Wesley A and Koppel, Alec and Aggarwal, Vaneet and Sadler, Brian M. and Manocha, Dinesh and Bedi, Amrit},
  booktitle = 	 {Proceedings of the 41st International Conference on Machine Learning},
  pages = 	 {39889--39907},
  year = 	 {2024}
 }

@article{haliem2021distributed,
  title={A distributed model-free ride-sharing approach for joint matching, pricing, and dispatching using deep reinforcement learning},
  author={Haliem, Marina and Mani, Ganapathy and Aggarwal, Vaneet and Bhargava, Bharat},
  journal={IEEE Transactions on Intelligent Transportation Systems},
  volume={22},
  number={12},
  pages={7931--7942},
  year={2021},
  publisher={IEEE}
}

@inproceedings{chen2023option,
  title={Option-Aware Adversarial Inverse Reinforcement Learning for Robotic Control},
  author={Chen, Jiayu and Lan, Tian and Aggarwal, Vaneet},
  booktitle={Proceedings IEEE International Conference on Robotics and Automation},
  year={2023}
}

@article{panju2021queueing,
  title={Queueing theoretic models for uncoded and coded multicast wireless networks with caches},
  author={Panju, Mahadesh and Raghu, Ramkumar and Sharma, Vinod and Aggarwal, Vaneet and Ramachandran, Rajesh},
  journal={IEEE Transactions on Wireless Communications},
  volume={21},
  number={2},
  pages={1257--1271},
  year={2021},
  publisher={IEEE}
}

@article{tamboli2024reinforced,
  title={Reinforced sequential decision-making for sepsis treatment: The posnegdm framework with mortality classifier and transformer},
  author={Tamboli, Dipesh and Chen, Jiayu and Jotheeswaran, Kiran Pranesh and Yu, Denny and Aggarwal, Vaneet},
  journal={IEEE Journal of Biomedical and Health Informatics},
  volume={28},
  number={5},
  pages={3114--3122},
  year={2024},
  publisher={IEEE}
}

@article{al2019deeppool,
  title={Deeppool: Distributed model-free algorithm for ride-sharing using deep reinforcement learning},
  author={Al-Abbasi, Abubakr O and Ghosh, Arnob and Aggarwal, Vaneet},
  journal={IEEE Transactions on Intelligent Transportation Systems},
  volume={20},
  number={12},
  pages={4714--4727},
  year={2019},
  publisher={IEEE}
}

@article{gonzalez2023asap,
  title={Asap: A semi-autonomous precise system for telesurgery during communication delays},
  author={Gonzalez, Glebys and Balakuntala, Mythra and Agarwal, Mridul and Low, Tomas and Knoth, Bruce and Kirkpatrick, Andrew W and McKee, Jessica and Hager, Gregory and Aggarwal, Vaneet and Xue, Yexiang and others},
  journal={IEEE Transactions on Medical Robotics and Bionics},
  volume={5},
  number={1},
  pages={66--78},
  year={2023},
  publisher={IEEE}
}

@article{mondal2024sample,
  title={Sample-efficient constrained reinforcement learning with general parameterization},
  author={Mondal, Washim U and Aggarwal, Vaneet},
  journal={Advances in Neural Information Processing Systems},
  volume={37},
  pages={68380--68405},
  year={2024}
}

@article{ding2025convergence,
  title={Convergence and sample complexity of natural policy gradient primal-dual methods for constrained mdps},
  author={Ding, Dongsheng and Zhang, Kaiqing and Duan, Jiali and Basar, Tamer and Jovanovic, Mihailo R},
  journal={Journal of Machine Learning Research},
  volume={26},
  number={256},
  pages={1--76},
  year={2025}
}

@article{bai2024learning,
  title={Learning general parameterized policies for infinite horizon average reward constrained MDPs via primal-dual policy gradient algorithm},
  author={Bai, Qinbo and Mondal, Washim U and Aggarwal, Vaneet},
  journal={Advances in Neural Information Processing Systems},
  volume={37},
  pages={108566--108599},
  year={2024}
}

@article{liu2020improved,
  title={An improved analysis of (variance-reduced) policy gradient and natural policy gradient methods},
  author={Liu, Yanli and Zhang, Kaiqing and Basar, Tamer and Yin, Wotao},
  journal={Advances in Neural Information Processing Systems},
  volume={33},
  pages={7624--7636},
  year={2020}
}

@inproceedings{papini2018stochastic,
  title={Stochastic variance-reduced policy gradient},
  author={Papini, Matteo and Binaghi, Damiano and Canonaco, Giuseppe and Pirotta, Matteo and Restelli, Marcello},
  booktitle={International conference on machine learning},
  pages={4026--4035},
  year={2018},

}

@inproceedings{
Xu2020Sample,
title={Sample Efficient Policy Gradient Methods with Recursive Variance Reduction},
author={Pan Xu and Felicia Gao and Quanquan Gu},
booktitle={International Conference on Learning Representations},
year={2020},
url={https://openreview.net/forum?id=HJlxIJBFDr}
}

@inproceedings{gaur2024closing,
  title={Closing the Gap: Achieving Global Convergence (Last Iterate) of Actor-Critic under Markovian Sampling with Neural Network Parametrization},
  author={Gaur, Mudit and Bedi, Amrit and Wang, Di and Aggarwal, Vaneet},
  booktitle={Forty-first International Conference on Machine Learning},
  year={2024}
}

@inproceedings{wei2022provably,
  title={A provably-efficient model-free algorithm for infinite-horizon average-reward constrained markov decision processes},
  author={Wei, Honghao and Liu, Xin and Ying, Lei},
  booktitle={Proceedings of the AAAI Conference on Artificial Intelligence},
  volume={36},
  number={4},
  pages={3868--3876},
  year={2022}
}

@article{Clevert2015FastAA,
  title={Fast and accurate deep network learning by exponential linear units (elus)},
  author={Clevert, Djork-Arn{\'e} and Unterthiner, Thomas and Hochreiter, Sepp},
  journal={arXiv preprint arXiv:1511.07289},
  volume={4},
  number={5},
  pages={11},
  year={2015}
}

@article{Hendrycks2016GaussianEL,
  title={Gaussian error linear units (gelus)},
  author={Hendrycks, Dan and Gimpel, Kevin},
  journal={arXiv preprint arXiv:1606.08415},
  year={2016}
}

@article{sutton1999policy,
  title={Policy gradient methods for reinforcement learning with function approximation},
  author={Sutton, Richard S and McAllester, David and Singh, Satinder and Mansour, Yishay},
  journal={Advances in neural information processing systems},
  volume={12},
  year={1999}
}

@inproceedings{ganesh2025order,
  title={Order-optimal global convergence for actor-critic with general policy and neural critic parametrization},
  author={Ganesh, Swetha and Chen, Jiayu and Mondal, Washim Uddin and Aggarwal, Vaneet},
  booktitle={The 41st Conference on Uncertainty in Artificial Intelligence},
  year={2025}
}

@book{altman2021constrained,
  title={Constrained Markov decision processes},
  author={Altman, Eitan},
  year={2021},
  publisher={Routledge}
}

@article{paternain2019constrained,
  title={Constrained reinforcement learning has zero duality gap},
  author={Paternain, Santiago and Chamon, Luiz and Calvo-Fullana, Miguel and Ribeiro, Alejandro},
  journal={Advances in Neural Information Processing Systems},
  volume={32},
  year={2019}
}

@article{ding2020natural,
  title={Natural policy gradient primal-dual method for constrained markov decision processes},
  author={Ding, Dongsheng and Zhang, Kaiqing and Basar, Tamer and Jovanovic, Mihailo},
  journal={Advances in Neural Information Processing Systems},
  volume={33},
  pages={8378--8390},
  year={2020}
}

@inproceedings{xu2021crpo,
  title={Crpo: A new approach for safe reinforcement learning with convergence guarantee},
  author={Xu, Tengyu and Liang, Yingbin and Lan, Guanghui},
  booktitle={International Conference on Machine Learning},
  pages={11480--11491},
  year={2021},

}

@inproceedings{chen2022learning,
  title={Learning infinite-horizon average-reward Markov decision process with constraints},
  author={Chen, Liyu and Jain, Rahul and Luo, Haipeng},
  booktitle={International Conference on Machine Learning},
  pages={3246--3270},
  year={2022},

}

@inproceedings{khodadadian2021finite,
  title={Finite-sample analysis of off-policy natural actor-critic algorithm},
  author={Khodadadian, Sajad and Chen, Zaiwei and Maguluri, Siva Theja},
  booktitle={International Conference on Machine Learning},
  pages={5420--5431},
  year={2021},

}

@article{cayci2024finite,
  title={Finite-Time Analysis of Entropy-Regularized Neural Natural Actor-Critic Algorithm},
  author={Cayci, Semih and He, Niao and Srikant, R},
  journal={Transactions on Machine Learning Research},
  volume={2024},
  year={2024},
  publisher={Transactions on Machine Learning Research}
}

@inproceedings{
fu2020single,
title={Single-Timescale Actor-Critic Provably Finds Globally Optimal Policy},
author={Zuyue Fu and Zhuoran Yang and Zhaoran Wang},
booktitle={International Conference on Learning Representations},
year={2021}
}

@article{tian2023convergence,
  title={Convergence of actor-critic with multi-layer neural networks},
  author={Tian, Haoxing and Olshevsky, Alex and Paschalidis, Yannis},
  journal={Advances in neural information processing systems},
  volume={36},
  pages={9279--9321},
  year={2023}
}

@inproceedings{blanchet2015unbiased,
  title={Unbiased Monte Carlo for optimization and functions of expectations via multi-level randomization},
  author={Blanchet, Jose H and Glynn, Peter W},
  booktitle={2015 Winter Simulation Conference (WSC)},
  pages={3656--3667},
  year={2015},
  organization={IEEE}
}

@inproceedings{ghosh2023achieving,
  title={Achieving sub-linear regret in infinite horizon average reward constrained mdp with linear function approximation},
  author={Ghosh, Arnob and Zhou, Xingyu and Shroff, Ness},
  booktitle={The Eleventh International Conference on Learning Representations},
  year={2023}
}

@inproceedings{bai2023achieving,
  title={Achieving zero constraint violation for constrained reinforcement learning via conservative natural policy gradient primal-dual algorithm},
  author={Bai, Qinbo and Bedi, Amrit Singh and Aggarwal, Vaneet},
  booktitle={Proceedings of the AAAI Conference on Artificial Intelligence},
  volume={37},
  number={6},
  pages={6737--6744},
  year={2023}
}

@article{jacot2018neural,
  title={Neural tangent kernel: Convergence and generalization in neural networks},
  author={Jacot, Arthur and Gabriel, Franck and Hongler, Cl{\'e}ment},
  journal={Advances in neural information processing systems},
  volume={31},
  year={2018}
}

@inproceedings{jin2021towards,
  title={Towards tight bounds on the sample complexity of average-reward MDPs},
  author={Jin, Yujia and Sidford, Aaron},
  booktitle={International Conference on Machine Learning},
  pages={5055--5064},
  year={2021},

}

@inproceedings{dorfman2022adapting,
  title={Adapting to mixing time in stochastic optimization with markovian data},
  author={Dorfman, Ron and Levy, Kfir Yehuda},
  booktitle={International Conference on Machine Learning},
  pages={5429--5446},
  year={2022},
  organization={PMLR}
}

\clearpage
\newpage

\appendix



\begin{algorithm*}[h]
\caption{HiMLMC-PD-NAC}
\label{alg:HiMLMC}
\begin{algorithmic}[1]

\Require
$\theta_0,\lambda_0=0,\alpha,\beta,\gamma_w,\gamma_\xi(\cdot),
K,H_w,H_{\max},T_{\max}$

\For{$k=0,\ldots,K-1$}

    \For{$u\in\{r,c\}$}
        \State
        Compute critic  $(Q_{u,k},\eta_{u,k})\gets$
        Algorithm~\ref{alg: critic}
        $(\theta_k,u,H_{\max},T_{\max})$
    \EndFor

    \State
    Compute NPG estimate $w_k\gets$
    Algorithm~\ref{alg:npg}
    $(\theta_k,\lambda_k,Q_{r,k},Q_{c,k},
    H_w,T_{\max})$

    \State
    Update primal variable $\theta_{k+1}
    \gets
    \theta_k+\alpha w_k$

    \State
    Update dual variable $\lambda_{k+1}
    \gets
    \Pi_{[0,2/\delta]}
    (\lambda_k-\beta\eta_{c,k})$

\EndFor

\State
\Return
$\{\theta_k, \lambda_k\}_{k=0}^{K-1}$

\end{algorithmic}
\end{algorithm*}

\begin{algorithm*}[h]
\caption{Hierarchical MLMC Critic}
\label{alg: critic}
\begin{algorithmic}[1]

\Require
$\theta_k,u,H_{\max},T_{\max}$

\State
Sample
$H_k\sim\mathrm{Geom}(1/2)$

\State
$h_k
=
1+
(2^{H_k}-1)
\mathbf1_{\{2^{H_k}\le H_{\max}\}}$

\State
Initialize
$\xi^{k,(0)}_{u,0},
\xi^{k,(1)}_{u,0},
\xi^{k,(2)}_{u,0}$

\For{$h=0,\ldots,h_k-1$} 

    \State
    Sample
    $U_{k,h}\sim\mathrm{Geom}(1/2)$ and compute $\ell_{k,h}$ using~\eqref{e: traj.length}

    \State Generate trajectory $\cT^{k,h}:= (s^{k,h}_t, a^{k,h}_t)_{t=0}^{\ell_{k,h}}$ using current policy $\pthk,$
    
    \State
    Construct MLMC critic gradient
    $\widehat v^{k,h}_{u,\mathrm{MLMC}}$
    using~\eqref{e: ctitic.inner.MLMC}

    \If{$h<1$}
        \State
        Update
        $\xi^{k,(0)}_{u,h},$ $\xi^{k,(1)}_{u,h},$ and $\xi^{k,(2)}_{u,h},$ using \eqref{e: critic.update} with stepsizes $\gamma_\xi(1),\gamma_\xi(h_k/2),\gamma_\xi(h_k)$
    \EndIf

    \If{$h<h_k/2$}
        \State
        Update
        $\xi^{k,(1)}_{u,h}$ and $\xi^{k,(2)}_{u,h}$ using \eqref{e: critic.update} with stepsizes $\gamma_\xi(h_k/2)$ and $\gamma_\xi(h_k)$
    \EndIf

    \State
    Update
    $\xi^{k,(2)}_{u,h}$ \eqref{e: critic.update} with stepsize $\gamma_\xi(h_k)$

\EndFor

\State
Compute critic function $Q_{u,k,\MLMC}(\cdot,\cdot)$ and parameter $\xi_{u,k,\MLMC}$ using~\eqref{e: function .critic.MLMC} and~\eqref{e: output.critic.MLMC}

\State
\Return
$(Q_{u,k,\MLMC}(\cdot,\cdot), \eta_{u,k,\MLMC})$

\end{algorithmic}
\end{algorithm*}


\begin{algorithm*}[h]
\caption{MLMC NPG Estimator}
\label{alg:npg}
\begin{algorithmic}[1]

\Require
$\theta_k,\lambda_k,
Q_{r,k},Q_{c,k},
H_w,T_{\max}$

\State
Initialize
$w_0^k$

\For{$h=0,\ldots,H_w-1$}

    \State
    Sample
    $U_{k,h}\sim\mathrm{Geom}(1/2)$ and compute $\ell_{k,h}$ using~\eqref{e: traj.length}

    \State Generate trajectory $\cT^{k,h}:= (s^{k,h}_t, a^{k,h}_t)_{t=0}^{\ell_{k,h}}$ using current policy $\pthk,$

    \State
    Construct MLMC Fisher estimate
    $\widehat{F}^{kh}_{\mathrm{MLMC}}$
    and MLMC PG estimate
    $\widehat{g}^{kh}_{u,\mathrm{MLMC}}$ using~\eqref{e: NPG.MLMC.estimator}

    \State
    Update NPG estimate
    $w_h^k$
    using~\eqref{e: NPG.update}

\EndFor

\State
\Return
$w_{H_w}^k$

\end{algorithmic}
\end{algorithm*}

\clearpage
\newpage


\section{Bounding Optimality Gap: Proof of Lemma~\ref{lem: optimality.gap.bound}}
\label{appendix: opt.gap}

    Using the fact that $\lambda_0=0,$ we have for all $K>0,$ 
    \begin{align*}
         & \lambda^2_K 
        = \sum^{K-1}_{k=0} \left( \lambda^2_{k+1} - \lambda^2_k\right) 
        \\
        & \overset{(a)}{=} \sum^{K-1}_{k=0} \left(\Pi_{[0,2/\delta]}[\lambda_k -\beta  \eta_{c,k,\MLMC}  ]\right)^2- \left(\Pi_{[0,2/\delta]}(\lambda_k)\right)^2 
        \\
        & \overset{(b)}{\leq} \sum^{K-1}_{k=0} \left(\lambda_k -\beta \eta_{c,k,\MLMC}\right)^2  - \lambda^2_k 
        \\
        & \leq \beta^2 \sum^{K-1}_{k=0} \eta^2_{c,k,\MLMC} -2\beta \sum^{K-1}_{k=0} \lambda_k\eta_{c,k,\MLMC} 
        \\
        & \overset{(c)}{\leq} 2\beta^2 \sum^{K-1}_{k=0} \left|\eta_{c,k,\MLMC} - \Jthk_c\right|^2 + 2\beta^2\sum^{K-1}_{k=0}\big(\Jthk_c\big)^2
        \\
        & + 2\beta \sum^{K-1}_{k=0} \lambda_k\left( \Jthk_c - \eta_{c,k,\MLMC} \right) 
        \\
        & + 2\beta \sum^{K-1}_{k=0} \lambda_k\left( J^{\piS}_c- \Jthk_c \right),
    \end{align*}
    where $(a)$ uses from the dual update in~\eqref{e: actor.update}, $(b)$ follows since any projection is non-expansive, and $(c)$ follows since $J^{\piS}_c\geq 0,$ and the fact that $c(s,a)\leq 1,$ for all $s,a$ implies $\eta_{c,k}\leq 1.$
    Since $\lambda_K\geq 0,$ dividing both sides by $2\beta K$ and taking conditional expectation  gives us
    \begin{align}\label{e: contraint.lower}
        -\frac{1}{K}& \sum^{K-1}_{k=0} \lambda_k\big[ J^{\piS}_c - \Jthk_c \big]
        \nonumber\\
        & \leq \beta + \frac{1}{K}\sum^{K-1}_{k=0} \lambda_k\left( \Jthk_c - \bE_k\big[\eta_{c,k,\MLMC}\big] \right)
        \nonumber\\
        & \qquad + \frac{\beta}{2K}\sum_{k=0}^{K-1}\bE_k\left|\Jthk_c - \eta_{c,k,\MLMC} \right|^2
        \nonumber\\
        & \overset{(a)}{\leq} \beta + \frac{2}{\delta K}\sum^{K-1}_{k=0} \left|\Jthk_c - \bE_k\big[\eta_{c,k,\MLMC}\big] \right|
        \nonumber\\
        & \qquad + \frac{\beta}{2K}\sum_{k=0}^{K-1}\bE_k\left|\Jthk_c - \eta_{c,k,\MLMC} \right|^2
        \nonumber\\
        & \overset{(b)}{\leq} \beta + \frac{2}{\delta K}\sum^{K-1}_{k=0} \left\|\xi^*_{c,k} - \bE_k\big[\xi_{c,k,\MLMC}\big] \right\|
        \nonumber\\
        & \qquad + \frac{\beta}{2K}\sum_{k=0}^{K-1}\bE_k\left\|\xi^*_{c,k} - \xi_{c,k,\MLMC} \right\|^2,
    \end{align}
    where $(a)$ follows since $|\Jthk_c|\leq 1,$ $0\leq\lambda_k\leq 2/\delta,$ and $(b)$ follows from the definitions of $\xi^*_{c,k}$ and $\xi_{c,k,\MLMC}.$ 
    
    Using the definition of $\cL(\theta,\lambda)$ and $\cL(\piS,\lambda),$ we have
    \begin{multline*}
        \frac{1}{K}\sum_{k=0}^{K-1}\bE\big[J^{\piS}_r -\Jthk_r \big] = -\frac{1}{K}\sum^{K-1}_{k=0} \bE\big[\lambda_k\big( J^{\piS}_c - \Jthk_c \big)\big]
        \\
        + \frac{1}{K}\sum_{k=0}^{K-1}\bE\big[\cL(\piS, \lambda_k) - \cL(\theta_k,\lambda_k) \big].
    \end{multline*}
    We combine the above relation with~\eqref{e: contraint.lower} to complete the proof of Lemma~\ref{lem: optimality.gap.bound}.
\hfill\qed

\section{Bounding Constraint Violation Rate: Proof of Lemma~\ref{lem: constraint.violation.rate}}
\label{appendix: constraint.violation}
    
    We have from the dual update in~\eqref{e: actor.update}, that
    \begin{align*}
        & \bE_k\left(\lambda_{k+1} - \frac{2}{\delta} \right)^2 - \left(\lambda_k - \frac{2}{\delta} \right)^2 
        \\
        & \overset{(a)}{\leq}   \bE_k\left(\lambda_k - \beta\ \eta_{c,k,\MLMC} - \frac{2}{\delta} \right)^2 - \left(\lambda_k - \frac{2}{\delta} \right)^2 
        \\
        & =  \beta^2 \bE_k\eta_{c,k,\MLMC}^2 - 2\beta \bE_k \left[ \eta_{c,k,\MLMC}\left( \lambda_k-\frac{2}{\delta} \right)\right]
        \\
        & \leq  2\beta^2 \bE_k\left| \Jthk_c - \eta_{c,k,\MLMC} \right|^2 
        \\
        &  \qquad + 2\beta\left( \lambda_k-\frac{2}{\delta} \right)\left( \Jthk_c - \bE_k\big[\eta_{c,k,\MLMC}\big] \right) 
        \\
        & \qquad - 2\beta \Jthk_c\left( \lambda_k-\frac{2}{\delta} \right) + 2\beta^2 (\Jthk_c)^2  
        \\
        & \overset{(b)}{=} \cO\bigg( \beta^2\bE_k\left\| \xi^*_{c,k} - \xi_{c,k,\MLMC} \right\|^2 
        \\
        & + \beta \left\| \xi^*_{c,k} - \bE_k\big[\xi_{c,k,\MLMC} \big]\right\|   - \beta \Jthk_c\left( \lambda_k-\frac{2}{\delta} \right) + \beta^2
        \bigg),
    \end{align*}
    where $(a)$ follows from non-expansiveness of the projection $\Pi_{[0,2/\delta]},$ and $(b)$ follows by using the definitions of $\xi^*_{c,k}$ and $\xi_{c,k,\MLMC},$ and using $\lambda_k\leq 2/\delta$ and $\Jthk_c\leq 1.$ 

    Summing both sides over $0\leq k<K,$  multiplying with $\beta/K,$ and taking expectation gives us
    \begin{multline}\label{e: const.upper}
        \frac{1}{K} \sum_{k=0}^{K-1}\bE\left[\Jthk_c\left( \lambda_k-\frac{2}{\delta} \right) \right]
        \\
        = \cO\Bigg( \beta + \frac{1}{\beta K}+  \frac{\beta}{K}\sum_{k=0}^{K-1}\bE\left\| \xi^*_{c,k} - \xi_{c,k,\MLMC} \right\|^2 
        \\
        +  \frac{1}{K}\sum_{k=0}^{K-1}\bE\left\| \xi^*_{c,k} - \bE_k\big[\xi_{c,k,\MLMC} \big]\right\| \Bigg).
    \end{multline}
Now, note that 
\begin{align*}
    & \sum_{k=0}^{K-1}\left[J^{\piS}_r - \Jthk_r \right] - \frac{2}{\delta}\sum_{k=0}^{K-1}\Jthk_c 
    \\
    & \overset{(a)}{\leq} \sum_{k=0}^{K-1}\left[J^{\piS}_r - \Jthk_r \right] +  \sum_{k=0}^{K-1}\left[J^{\piS}_c - \frac{2}{\delta}\Jthk_c \right]
    \\
    & \overset{(b)}{=} \sum_{k=0}^{K-1}\left[\cL(\piS,\lambda_k) - \cL(\theta_k,\lambda_k) \right] + \sum_{k=0}^{K-1}\left( \lambda_k-\frac{2}{\delta}\right)\Jthk_c.
\end{align*}
where $(a)$ uses the fact that  $J^{\piS}_c\geq 0,$ while $(b)$ uses the definition of the Lagrangian $\cL.$ Dividing both sides by $K,$ taking expectation, and substituting the last term using~\eqref{e: const.upper} then gives us
\begin{align}\label{e: intermediate}
    & J^{\piS}_r  - \frac{1}{K}\sum_{k=0}^{K-1}\bE\left[\Jthk_r \right] + \frac{2}{\delta K}\sum_{k=0}^{K-1}\bE\left[-\Jthk_c \right]
    \nonumber\\
    & = \cO\Bigg( \beta + \frac{1}{\beta K} + \frac{\beta}{K}\sum_{k=0}^{K-1}\bE\left\| \xi^*_{c,k} - \xi_{c,k,\MLMC} \right\|^2 
    \nonumber\\
    & \qquad\qquad +  \frac{1}{K}\sum_{k=0}^{K-1}\bE\left\| \xi^*_{c,k} - \bE_k\big[\xi_{c,k,\MLMC} \big]\right\| 
    \nonumber\\
    & \qquad\qquad\quad + \frac{1}{K}\sum_{k=0}^{K-1}\big[\cL(\piS,\lambda_k) - \cL(\theta_k,\lambda_k) \big] \Bigg).
\end{align}
We define a policy $\bar{\pi}_K$ that picks a policy from  $\{\pi_{\theta_k}\}_{k<K},$ uniformly at random. Then,~\eqref{e: intermediate} implies
\begin{align*}
    & J^{\piS}_r - \bE\big[ J^{\bar{\pi}_K}_r \big] + \frac{2}{\delta}\bE\big[- J^{\bar{\pi}_K}_c \big]
    \\
    & = \cO\Bigg( \beta + \frac{1}{\beta K} + \frac{\beta}{K}\sum_{k=0}^{K-1}\bE\left\| \xi^*_{c,k} - \xi_{c,k,\MLMC} \right\|^2 
    \\
    & \qquad\qquad +  \frac{1}{K}\sum_{k=0}^{K-1}\bE\left\| \xi^*_{c,k} - \bE_k\big[\xi_{c,k,\MLMC} \big]\right\|
    \\
    & \qquad\qquad\qquad + \frac{1}{K}\sum_{k=0}^{K-1}\big[\cL(\piS,\lambda_k) - \cL(\theta_k,\lambda_k) \big] \Bigg).
\end{align*}
By \cite[Lemma G.6]{xu2026global}, we conclude
\begin{align*}
    & \bE\big[- J^{\bar{\pi}_K}_c \big] = \frac{1}{K}\sum_{k=0}^{K-1}\bE\big[- \Jthk_c \big]
    \\
    & = \cO\Bigg( \frac{\delta}{\beta K}+ \delta\beta + \frac{\delta\beta}{K}\sum_{k=0}^{K-1}\bE\left\| \xi^*_{c,k} - \xi_{c,k,\MLMC} \right\|^2 
    \\
    & \qquad\qquad +  \frac{\delta}{K}\sum_{k=0}^{K-1}\bE\left\| \xi^*_{c,k} - \bE_k\big[\xi_{c,k,\MLMC} \big]\right\|
    \\
    & \qquad\qquad\qquad + \frac{\delta}{K}\sum_{k=0}^{K-1}\big[\cL(\piS,\lambda_k) - \cL(\theta_k,\lambda_k) \big] \Bigg).
\end{align*}
This completes the proof of Lemma~\ref{lem: constraint.violation.rate}.
\hfill\qed

\section{Lagrange Error Analysis: Proof of Lemma~\ref{lem: Lagrange.error}}
\label{appendix: lagrange}
    Let expectation w.r.t stationary distributions $d^{\piS}$ or $\nu^{\piS}$ be denoted by $\bE_{*}[\cdot].$ Then, we have for for $0\leq k<K,$
    \begin{align*}
        &\bE_{*}\left[ \KL\left(\piS(\cdot| s)||\pi_{\theta_k}(\cdot|s)\right) - \KL\left(\piS(\cdot|s)|| \pi_{\theta_{k+1}}(\cdot|s)\right) \right] 
        \\
        & = \bE_{*}\left[\ln\frac{\pi_{\theta_{k+1}}(a|s)}{\pi_{\theta_k}(a|s)} \right]
        \\
        & \overset{(a)}{\geq} \bE_{*}\left\langle\nabla_\theta \ln\pi_{\theta_k}(a|s), \theta_{k+1} -\theta_k \right\rangle 
        - \frac{G_2}{2}\|\theta_{k+1} - \theta_k\|^2
        \\
        & \overset{(b)}{=} \alpha\bE_{*}\left\langle\nabla_\theta \ln\pi_{\theta_k}(a|s), w_k \right\rangle  - \frac{G_2\alpha^2}{2}\|w_k\|^2 
        \\
        & \geq \alpha\bE_{*}\left\langle\nabla_\theta \ln\pi_{\theta_k}(a|s), w_k \right\rangle - \frac{G_2\alpha^2}{2}\|w_k\|^2  
        \\
        & \overset{(c)}{=} \alpha\left[ \cL(\piS,\lambda_k) - \cL(\theta_k,\lambda_k) \right]  
         - \frac{G_2\alpha^2}{2}\|w_k\|^2   
        \\
        & + \alpha\bE_{*}\left\langle\nabla_\theta \ln\pi_{\theta_k}(a|s), w_k \right\rangle 
        \\
        & \quad-\alpha\left[ J^{\piS}_r -\Jthk_r \right] - \alpha\lambda_k\left[ J^{\piS}_c -\Jthk_c \right] 
        \\
        & \overset{(d)}{=} \alpha\left[ \cL(\piS,\lambda_k) - \cL(\theta_k,\lambda_k) \right] 
        - \frac{G_2\alpha^2}{2}\|w_k\|^2   
        \\
        & + \alpha\bE_{*}\left\langle\nabla_\theta \ln\pi_{\theta_k}(a|s), w_k - \wS_k \right\rangle
        \\
        & + \alpha\bE_{*}\left[\left\langle\nabla_\theta \ln\pi_{\theta_k}(a|s), \wS_k \right\rangle -  A_r^{\theta_k}(s,a)  - \lambda_kA_c^{\theta_k}(s,a) \right]
        \\
        & \overset{(e)}{\geq} \alpha\left[ \cL(\piS,\lambda_k) - \cL(\theta_k,\lambda_k) \right] 
        - \frac{G_2\alpha^2}{2}\|w_k\|^2 - \alpha\sqrt{\epsb},
        \\
        &+ \alpha\bE_{*}\left\langle\nabla_\theta \ln\pi_{\theta_k}(a|s), w_k - \wS_k \right\rangle,
    \end{align*}
    where $(a)$ follows from Assumption~\ref{a: score.function}, $(b)$ follows from~\eqref{e: actor.update}, $(c)$ follows from~\eqref{e: lagrange.function}, $(d)$ follows from Lemma~\ref{app.lem: performance.diff}, and $(e)$ follows from combining Assumption~\ref{a: FA.error.PG} with the fact that $x\mapsto\sqrt{x}$ is concave and Assumption~\ref{a: FA.error.PG}.

    Now, rearranging terms and taking the conditional expectation w.r.t $\cF_k$  gives
    \begin{align*}
        & \bE_k\left[\cL(\piS, \lambda_k) - \cL(\theta_k,\lambda_k) \right] - \sqrt{\epsb} - \frac{1}{\alpha}
        \\
        & \times \bE_{*}\left[ \KL\left(\piS(\cdot| s)||\pi_{\theta_k}(\cdot|s)\right) - \KL\left(\piS(\cdot|s)|| \pi_{\theta_{k+1}}(\cdot|s)\right) \right]
        \\
        & \overset{(a)}{\leq} -\left\langle \bE_{\piS}\left[\nabla_\theta \ln \pi_{\theta_k}(a|s)\right], \bE_k[w_k] - \wS_k\right\rangle + \frac{G_2\alpha}{2}\|w_k\|^2 
        \\
        & \overset{(b)}{\leq}   G_1\|\bE_k[w_k] - \wS_k\| + G_2\alpha\left(\bE\|w_k-\wS_k\|^2 + \bE\|\wS_k\|^2\right)
        \\
        & \overset{(c)}{\leq}   G_1\|\bE_k[w_k] - \wS_k\| + G_2\alpha\bE\|w_k-\wS_k\|^2 + \cO(\tmix^2 \alpha),
    \end{align*}
    where $(a)$ follows since $\theta_k$ is $\cF_k$-measurable and $w_k$ is independent of $\nu^{\piS},$ $(b)$ uses the inequality $(a+b^2)\leq 2a^2+2b^2,$ while $(c)$ follows as we use Assumptions~\ref{a: slater},~\ref{a: score.function}, and~\ref{a: Fisher.degenrate} to get 
    \begin{multline*}
        \big\|\wS_k\big\|=\big\|F(\theta_k)^\dagger\nabla_\theta\cL(\theta_k,\lambda_k)\big\|
        \\
        \leq \frac{1}{\mu_F}\left(\big\|\nabla_\theta \Jthk_r\big\| + \frac{2}{\delta}\big\|\nabla_\theta \Jthk_c \big\|\right),
    \end{multline*}
    and use Assumption~\ref{a: ergodicity} to conclude 
    \[
        \big\|\nabla_\theta \Jthk_u\big\| \leq G_1\bE_{\pthk}\big\|A^{\theta_k}_u(s,a)\big\|= \cO(\tmix).
    \]
    Finally, the result follows by summing both sides over $0\leq k < K$, dividing by $ K$, using the fact that the KL-divergence is non-negative, and taking the expectation.
\hfill\qed

\section{NPG Estimator Analysis: Proof of Theorem~\ref{thm: NPG.bounds}}
\label{appendix: NPG.analysis}

Throughout the subsequent sections, we let $\bE_{k,0}[\cdot]$ denote the conditional expectation $\bE_k[\cdot].$ Further, 
for $h>0,$ we let $\bE_{k,h}[\cdot]$ denote the conditional expectation w.r.t $\cF_{k,h},$ i.e., all samples obtained upto the $k$-th outer-loop iteration and the subsequent $(h-1)$ inner-loop iterations.

    Recall from~\eqref{e: NPG.update}, that the NPG update is given by the following linear stochastic recursion:
    \begin{multline}\label{e: dummy.NPG.update}
    w^k_{h+1} = w^k_{h} 
    \\ \quad  + \gamma_w\left[ \left(\hg_{r,\MLMC} + \lambda_k \hg_{c,\MLMC}\right) - \hF_\MLMC\ w^k_h \right],
\end{multline}
where $\big(\hg_{r,\MLMC} + \lambda_k \hg_{c,\MLMC}\big)$ and $ \hF_\MLMC$ are MLMC-based estimators for the Lagrangian gradient $\nabla_\theta \cL(\theta_k, \lambda_k)$ and the Fisher matrix $F(\theta_k),$ respectively. 

Note that, under Assumption~\ref{a: Fisher.degenrate}, $\wS_k$ is the unique vector satisfying $F(\theta_k)\wS_k = \nabla_\theta \cL(\theta_k, \lambda_k).$ We claim that the NPG update satisfies the conditions needed in Lemma~\ref{app.lem: gen.linear}, which bounds the convergence rate achieved by a general linear stochastic recursion. 

Note that, by Assumptions~\ref{a: score.function} and~\ref{a: Fisher.degenrate}, for all $ \theta\in \Theta$ and $\lambda\in [0,2/\delta],$ 
\begin{multline*}
    \mu_F\leq\|F(\theta)\|\leq G^2_1
    \\
    \text{and} \quad \|\nabla_\theta \cL(\theta, \lambda)\|\leq \left(1+\frac{2}{\delta} \right)4G_1\tmix.
\end{multline*}
Next, we bound the bias and MSE of MLMC estimators $\hg_{u,\MLMC}$ and $ \hF_\MLMC$. Equation~\eqref{e: naive.NPG.estimator} shows the naive estimator for $F(\theta_k)$ as $\frac{1}{T}\sum_{t=0}^{T-1}\hat{F}^k(s^{kh}_t,a^{kh}_t)=\nabla_\theta \ln\pi_{\theta_k}(a|s)\left(\nabla_\theta \ln\pi_{\theta_k}(a|s)\right)^\top.$ From Assumption~\ref{a: score.function}, we have
\begin{multline*}
    \left\|\bE_{\theta_k}\left[\hat{F}^{kh}(s, a)\right] - F(\theta_k)\right\| = 0 
    \\
    \text{and} \quad \left\|\hat{F}^{kh}(s^{kh}_t, a^{kh}_t) - F(\theta_k) \right\|^2 \leq 4G^4_1,
\end{multline*}
Thus, by Lemma~\ref{app.lem: MLMC}, we we have the bias and MSE of $\hat{F}^{kh}_\MLMC$ as 
\begin{equation}\label{e: bounds.fisher}
    \begin{aligned}
        & \delta^2_F:=  \left\|\bE_{kh}\left[\hat{F}^{kh}_\MLMC\right] - F(\theta_k)\right\|^2 = \cO\left( \tmix/\Tm \right) ,
        \\
        &       \sigma^2_F:=\bE_{kh}\left\|\hat{F}^{kh}_\MLMC - F(\theta_k) \right\|^2 =  \cO\left( \tmix\ln\Tm \right).
    \end{aligned}
\end{equation}
Likewise, we bound the MSE and bias in $\hg_{u,\MLMC},$ recall from~\eqref{e: naive.NPG.estimator} that the naive estimator for $\nabla_\theta \Jthk_u$ is given by $\frac{1}{T}\sum_{t=0}^{T-1}\hg_u(s^{k,h}_t, a^{k,h}_t, \bar{a}^{k,h}_t, s^{k,h}_{t+1}, a^{k,h}_{t+1}),$ where
\begin{align*}
    & \hg_u(s,a, \bar{a},s',a') =  \Big( u(s,a) - \eta_{u,k}
    \\
    & + Q_{u,k,\MLMC}(s',a') - Q_{u,k,\MLMC}(s,\bar{a})\Big)\nabla_\theta \ln \pi_\theta(a|s).
\end{align*}
Here, $Q_{u,k,\MLMC}$ denotes the critic estimator for $\pthk$ defined in~\eqref{e: output.critic.MLMC}. 

Recall from~\eqref{e: linear.approx} that $\Ql$ be the first-order Taylor approximation to $Q(\cdot,\cdot, \zeta).$
Then, we have
\begin{align*}
    & \left\| \hg_u(s,a, \bar{a},s',a') \right\|  
    \\
    & \quad \leq  G_1\Big( \big|u(s,a)\big| +\big|\eta_{u,k,\MLMC}\big| + 2\big\|Q_{u,k,\MLMC}\big\|  \Big)
    \\
    & \quad = \cO\bigg( \big|\eta_{u,k,\MLMC}\big| +  \big\|\Ql(\cdot,\cdot, \zeta_{u,k,\MLMC})\big\| 
    \\
    & \qquad\qquad + \big\| Q_{u,k,\MLMC} - \Ql(\cdot,\cdot, \zeta_{u,k,\MLMC})\big\| \bigg)
    \\
    & \quad = \cO\bigg( \big|\eta_{u,k,\MLMC}\big| +  \big\|\zeta_{u,k,\MLMC}\big\| 
    \\
    & \qquad\qquad + \big\| Q_{u,k,\MLMC} - \Ql(\cdot,\cdot, \zeta_{u,k,\MLMC})\big\| \bigg).
\end{align*}
Thus, the single-sample MSE for the naive PG estimator is
\begin{multline*}
    \bE_k\left\| \hg_u(s^{k,h}_t,a^{k,h}_t, \bar{a}^{k,h}_t,s^{k,h}_{t+1},a^{k,h}_{t+1}) - \nabla_\theta \Jthk_u \right\|^2 
    \\
    = \tilde{\cO}\left( \sigma^2_Q + \left\|\xi_{u,k,\MLMC} - \xi^*_{u,k}\right\|^2 \right),
\end{multline*}
where we define 
\begin{equation*}
    \sigma^2_{Q}:= \bE_k\big\| Q_{u,k,\MLMC} - \Ql(\cdot,\cdot, \zeta_{u,k,\MLMC})\big\|^2.
\end{equation*}
Now, we bound the single-sample bias for the naive PG estimator. Note that,
\begin{align}\label{e: NPG.bias.decomp}
    &\bE_{\theta_k}\bigg[ \hg_u(s,a, \bar{a},s',a') \bigg] = \bE_{\theta_k}\bigg[\nabla_\theta \ln\pthk(s,a)
    \nonumber\\
    & \qquad \times\Big[\left(u(s,a) - \Jthk_u \right) + \Qthk_u(s',a') - \Qthk_u(s,\bar{a})
    \nonumber\\
    & + \left( \Jthk_u - \eta_{u,k,\MLMC}\right)  
    + \Qthk(s,\bar{a}) - \Ql(s,\bar{a},\zeta^*_{u,k})
    \nonumber\\
    & - \left(\Qthk(s',a') - \Ql(s',a',\zeta^*_{u,k})\right)
    \nonumber\\
    & + \left( \Ql(s',a', \zeta_{u,k,\MLMC}) -  \Ql(s',a', \zeta^*_{u,k}) \right)
    \nonumber\\
    & -  \left( \Ql(s,\bar{a}, \zeta_{u,k,\MLMC}) -  \Ql(s,\bar{a}, \zeta^*_{u,k}) \right) 
    \nonumber\\
    & + Q_{u,k,\MLMC}(s,\bar{a}) -\Ql(s,\bar{a}, \zeta_{u,k,\MLMC})
    \nonumber\\
    & - \big(Q_{u,k,\MLMC}(s',a') -\Ql(s',a', \zeta_{u,k,\MLMC}) \big)
    \Big] \bigg].
\end{align}

Applying the NTK bounds from Lemma~\ref{app.lem: NTK.bounds} on~\eqref{e: NPG.bias.decomp} and using~\eqref{e: policy.gradient}, we have the single-sample PG bias as
\begin{align*}
    &\left\|\bE_{\theta_k}\bigg[ \hg_u(s,a,\bar{a}, s',a') \bigg] - \nabla_\theta \Jthk  \right\|^2
    \\
    & = \tilde{\cO}\bigg( \epsap + \frac{1}{m} + \left\|\xi_{u,k,\MLMC} - \xi^*_{u,k} \right\|^2
    \\
    & \qquad\qquad + \big\|Q_{u,k,\MLMC} -\Ql(\cdot,\cdot, \zeta_{u,k,\MLMC})\big\|^2  \bigg)
\end{align*}
and
\begin{align*}
    &\left\|\bE_k\bE_{\theta_k}\bigg[ \hg_u(s,a,\bar{a},s',a') \bigg] - \nabla_\theta \Jthk  \right\|^2
    \\
    & = \tilde{\cO}\bigg( \epsap +  \frac{1}{m} + \left\|\bE_k\big[\xi_{u,k,\MLMC}\big] - \xi^*_{u,k} \right\|^2
    \\
    & \qquad\qquad + \left\|\bE_k\big[Q_{u,k,\MLMC} -\Ql(\cdot,\cdot, \zeta_{u,k,\MLMC})\big]\right\|^2  \bigg).
\end{align*}
Consequently, applying Lemma~\ref{app.lem: MLMC}, we get the following MSE and bias for $\hg_{u,\MLMC}:$
\begin{align}\label{e: MSE.PG}
    & \sigma^2_g := \sum_u\bE_{k,h}\left\| \hg_{u,\MLMC} - \nabla_\theta \Jthk_u \right\|^2 
    \nonumber\\
    & = \tilde{\cO}\bigg( \epsap + \frac{1}{m} + \sigma^2_Q + \sum_{u}\left\|\xi_{u,k,\MLMC} - \xi^*_{u,k} \right\|^2 \bigg), 
\end{align}
and
\begin{equation}\label{e: bias.PG}
    \begin{aligned}
        & \delta^2_g := \sum_{u}\left\| \bE_{k,h}\big[\hg_{u,\MLMC}\big] - \nabla_\theta \Jthk_u \right\|^2 
        \\
        & = \tilde{\cO}\bigg( \epsap + \frac{1}{m} + 
        \sigma^2_Q + \sum_u\left\|\xi_{u,k,\MLMC} - \xi^*_{u,k} \right\|^2 \bigg),
        \\
        & \bar{\delta}^2_g := \sum_{u}\left\| \bE_k\big[\hg_{u,\MLMC}\big] - \nabla_\theta \Jthk_u \right\|^2
        \\
        & =\tilde{\cO}\bigg( \epsap + \frac{1}{m} + \delta^2_Q + \sum_u\left\|\bE_k\big[\xi_{u,k,\MLMC}\big] - \xi^*_{u,k} \right\|^2 \bigg),
    \end{aligned}
\end{equation}
where the sum is over $u\in \{r,c \}$ and
\[
    \delta^2_Q:= \left\| \bE_k\big[ Q_{u,k,\MLMC} - \Ql(\cdot,\cdot, \zeta_{u,k,\MLMC})\big]\right\|^2.
\]
Using these bounds, we are ready to apply Lemma~\ref{app.lem: gen.linear} to the NPG update~\eqref{e: dummy.NPG.update}. Choosing the NPG stepsize $\gamma_w:= \frac{2\ln T}{\mu_F \bH},$ we have for $T$ sufficiently large:
\begin{align*}
    & \bE_k\big\|w_k - \wS_k\big\|^2 
    = \tilde{\cO}\bigg( \frac{1}{T^2} + \frac{1}{\bH}\left(\sigma^2_F + \sigma^2_g \right) + \delta^2_F + \delta^2_g \bigg),
    \\
    &\big\| \bE_k[w_k] - \wS_k \big\|^2 
    =\tilde{\cO}\bigg( \frac{1}{T^2}   +  \delta^2_F  +  \bar{\delta}^2_g 
    \\
    &  
    \qquad\qquad\qquad\qquad + \delta^2_P\left( \frac{1}{\bH}\left[ \sigma^2_F + \sigma^2_g \right]  +  \delta^2_F + \delta^2_g \right)\bigg).
\end{align*}
Finally, substituting the bias and MSE terms in the above relations with~\eqref{e: bounds.fisher},~\eqref{e: MSE.PG}, and~\eqref{e: bias.PG} completes the proof of Theorem~\ref{thm: NPG.bounds}.    
\hfill\qed

\section{Critic Estimator Analysis: Proofs of Lemma~\ref{lem: vanilla.critic.bound} and Theorem~\ref{thm: critic.bounds.MLMC}}
\label{appendix: critic.analysis}

\begin{proof}[\textbf{Proof of Lemma~\ref{lem: vanilla.critic.bound}}]
We begin by writing the vanilla-critic update as a perturbed linear recursion. Let $z,z'$ denote the state action pairs $(s,a),(s',a'),$ respectively. Further, let $\psi(z)$ denote $\nabla_\zeta Q(z,\zeta_0).$ Then, replacing the neural critic $Q(\cdot,\cdot,\zeta)$ with the linear critic $\Ql(\cdot,\cdot,\zeta)$ in~\eqref{e: critic.gradient}, we decompose the naive single-sample critic-gradient estimator as 
\begin{align}\label{e: critic.gradient.decomp}
    \hv(z,z') 
    & := \left[\hA(z,z')\ \xi^k_{u,h} - \hB_u(z,z')\right] 
    \nonumber\\
    & \qquad \qquad\qquad \qquad  - \hV_u(z,z'),
\end{align}
where 
\begin{align*}
    \hA(z,z') &:= \begin{bmatrix} c_\gamma & 0 \\ \psi_z & \psi(z)\left[ \psi(z)- \psi(z')\right]^\top \end{bmatrix},
    \\
    \hB_u(z,z') & := \begin{bmatrix} c_\gamma u(z)  \\ \left( u(z)-Q(z,\zeta_0) + Q(z',\zeta_0)\right)\psi(z)  \end{bmatrix},
\end{align*}
and $\hV_u(z,z')$ collects the remainder terms. This decomposition then translates to the following decomposition of the MLMC critic-gradient estimator in~\eqref{e: ctitic.inner.MLMC}:
\begin{equation}\label{e: critic.MLMC.decomp}
    \hv_\MLMC 
    := \left[\hA_\MLMC\ \xi^k_{u,h} - \hB_{u,\MLMC}\right] 
    - \hV_{u,\MLMC},
\end{equation}
where $\hA_\MLMC, \hB_{u,\MLMC},$ and $\hV_{u,\MLMC}$ are obtained by replacing $\hv_u$ in~\eqref{e: ctitic.inner.MLMC} with $\hA,\hB_u,$ and $\hV_u,$ respectively.

With the decomposition~\eqref{e: critic.MLMC.decomp}, the vanilla-critic update~\eqref{e: critic.update} can be written as the following perturbed linear recursion:
\begin{multline}\label{e: critic.update.decomposed}
    \xi^k_{u,h+1} 
    = \Pi_\cB\Big( \xi^k_{u,h} 
    + \gamma_{\xi,H}\big[\hB_{u,\MLMC} - \hA_{\MLMC}\ \xi^k_{u,h}\big]
    \\+ \gamma_{\xi,H}\hV_{u,\MLMC} \Big).
\end{multline}
Recall that $\xi^*_{u,k}$ is the unique solution to   $ A(\theta_k)\xi = B_u(\theta_k),$ where 
\begin{multline}\label{e: linear.critic.operator}
        A(\theta_k) := \bE_{\theta_k}[\hA(z,z')],
        \\
        \text{and} \quad B_u(\theta_k) := \bE_{\theta_k}[\hB_u(z,z')].
\end{multline}
Let $\hA_\MLMC,$ $\hB_{u,\MLMC},$ and $\hV_{u,\MLMC}$ satisfy the following MSE bounds
\begin{multline}\label{e: linear.critic.MSE.bounds}
    \bE_{k,h}\|\hA_\MLMC - A(\theta_k)\|^2 \leq \sigma^2_A, 
    \\
    \bE_{k,h}\|\hB_{u,\MLMC} - B_u(\theta_k)\|^2 \leq \sigma^2_B,
    \\
    \text{and} \quad  \|\hV_{u,\MLMC}\|^2 \leq \sigma^2_V,
\end{multline}
and the following bias bounds
\begin{multline}\label{e: linear.critic.bias.bounds}
    \|\bE_{k,h}[\hA_\MLMC] - A(\theta_k)\|^2 \leq \delta^2_A,
    \\
    \text{and} \quad \|\bE_{k,h}[\hB_{u,\MLMC}] - B_u(\theta_k)\|^2 \leq \delta^2_B.
\end{multline}
If we define the auxiliary linear recursion
\begin{multline}\label{e: linear.critic.update.decomposed}
    \chi^k_{u,h+1} 
    \\
    = \Pi_\cB\Big( \chi^k_{u,h} 
    + \gamma_{\xi,H}\big[\hB_{u,\MLMC} - \hA_{\MLMC}\ \chi^k_{u,h}\big]
     \Big),
\end{multline}
where $\chi^k_{u,0} = \xi^k_{u,0},$ then subtracting~\eqref{e: critic.update.decomposed} from~\eqref{e: linear.critic.update.decomposed} gives us 
\begin{align}
    & \left\|\xi^k_{u,h+1} - \chi^k_{u,h+1}\right\|^2 
    \nonumber\\
    & \overset{(a)}{\leq} \Big\|\big(I - \gamma_{\xi,H}\hA_\MLMC \big)\left(\xi^k_{u,h} - \chi^k_{u,h}\right)  
    - \gamma_{\xi,H}\hV_{u,\MLMC} \Big\|^2
    \nonumber\\
    & = \Big\|\big(I - \gamma_{\xi,H}A(\theta_k) \big)\left(\xi^k_{u,h} - \chi^k_{u,h}\right)  
    \nonumber\\
    & \qquad + \gamma_{\xi,H}\left( A(\theta_k)- \hA_\MLMC \right) - \gamma_{\xi,H}\hV_{u,\MLMC} \Big\|^2
    \nonumber\\
    &
    \overset{(a)}{=} \Big\| \big(I - \gamma_{\xi,H}A(\theta_k) \big)\left(\xi^k_{u,h} - \chi^k_{u,h}\right)\Big\|^2 
    \nonumber\\
    & \qquad + \gamma^2_{\xi,H}\left\| A(\theta_k)- \hA_\MLMC \right\|^2 + \gamma^2_{\xi,H}\left\| \hV_{u,\MLMC} \right\|^2
    \nonumber\\
    & \qquad - 2\gamma^2_{\xi,H}\left\langle A(\theta_k) - \hA_\MLMC, \hV_{u,\MLMC} \right\rangle
    \nonumber\\
    & \qquad + 2\gamma_{\xi,H}\Big\langle A(\theta_k) - \hA_\MLMC,
    \nonumber\\
    & \qquad\qquad\qquad \times \big(I - \gamma_{\xi,H}A(\theta_k) \big)\left(\xi^k_{u,h} - \chi^k_{u,h}\right) \Big\rangle,
\end{align}
where $(a)$ follows from the non-expansiveness of $\Pi_\cB.$ Taking conditional expectation on both sides, we have
\begin{align}\label{e: MSE.diff.recursive}
    & \bE_{k,h}\left\|\xi^k_{u,h+1} - \chi^k_{u,h+1}\right\|^2 
    \nonumber\\
    &
    \overset{(a)}{=} \Big\| \big(I - \gamma_{\xi,H}A(\theta_k) \big)\big(\xi^k_{u,h} - \chi^k_{u,h}\big)\Big\|^2 
    \nonumber\\
    & \ + \gamma^2_{\xi,H}\bE_{k,h}\big\| A(\theta_k)- \hA_\MLMC \big\|^2 + \gamma^2_{\xi,H}\bE_{k,h}\big\| \hV_{u,\MLMC} \big\|^2
    \nonumber\\
    & \ - 2\gamma^2_{\xi,H}\bE_{k,h}\left\langle A(\theta_k) - \hA_\MLMC, \hV_{u,\MLMC} \right\rangle
    \nonumber\\
    & \ + 2\gamma_{\xi,H}\Big\langle A(\theta_k) - \bE_{k,h}[\hA_\MLMC],
    \nonumber\\
    & \qquad \times \big(I - \gamma_{\xi,H}A(\theta_k) \big)\left(\xi^k_{u,h} - \chi^k_{u,h}\right) \Big\rangle
    \nonumber\\
    &
    \overset{(b)}{\leq} \left( 1- \mu_\phi\gamma_{\xi,H}\right)^2\big\|\xi^k_{u,h} - \chi^k_{u,h}\big\|^2 
    \nonumber\\
    & \qquad + 2\gamma^2_{\xi,H}\bE_{k,h}\big\| A(\theta_k)- \hA_\MLMC \big\|^2 
    \nonumber\\
    & \qquad\qquad + 2\gamma^2_{\xi,H}\bE_{k,h}\big\| \hV_{u,\MLMC} \big\|^2
    \nonumber\\
    & \qquad + 2\gamma_{\xi,H}\Big\langle A(\theta_k) - \bE_{k,h}[\hA_\MLMC],
    \nonumber\\
    & \qquad\qquad \times \big(I - \gamma_{\xi,H}A(\theta_k) \big)\left(\xi^k_{u,h} - \chi^k_{u,h}\right) \Big\rangle
    \nonumber\\
    & \overset{(c)}{\leq}  \left( 1- \mu_\phi\gamma_{\xi,H}\right)\big\|\xi^k_{u,h} - \chi^k_{u,h}\big\|^2 + 2\gamma^2_{\xi,H}\left( \sigma^2_A + \sigma^2_V \right)
    \nonumber\\
    & \qquad + 2\gamma_{\xi,H}\left(1-\mu_\phi\gamma_{\xi,H} \right)\big\|\xi^k_{u,h} - \chi^k_{u,h}\big\| \delta_A
    \nonumber\\
    &\overset{(d)}{\leq}  \exp\left(- \mu_\phi\gamma_{\xi,H} \right)\big\|\xi^k_{u,h} - \chi^k_{u,h}\big\|^2
    \nonumber\\
    & \qquad 
    + \cO\left(\gamma^2_{\xi,H}\left( \sigma^2_A + \sigma^2_V \right) +\gamma_{\xi,H} R \ \delta_A \right), 
\end{align}
where $(a)$ follows since $\xi^k_{u,h}, \chi^k_{u,h}$ are $\cF_{k,h}$-measurable, $(b)$ follows since assuming $\ker(A(\theta_k))=\{0\},$ (this holds if the vectors $\{\nabla_\zeta Q(s,a,\zeta_0)\}$ do not span the all ones vector in $\bR^{\SA}$) we can show that for $ c_\gamma$ sufficiently large,
$\xi^\top A(\theta_k)\xi \geq \mu_\phi \|\xi\|^2$ (see \cite[Lemma 17]{Satheesh2026GlobalCO}), and using Cauchy-Schwarz inequality, and $(c)$ follows by choosing $\gamma_{\xi,H}<\mu_\phi/2$ and using the bounds \eqref{e: linear.critic.MSE.bounds} and \eqref{e: linear.critic.bias.bounds}. Finally, $(d)$ follows since $\xi^k_{u,H},\chi^k_{u,H}$ lie in the NTK ball and by using the inequality $1-x<e^{-x}.$

Then, applying~\eqref{e: MSE.diff.recursive} recursively and following the arguments in the proof of~\cite[Theorem B.1]{xu2026global}, we get 
\begin{multline}
    \bE_k\big\|\xi^k_{u,H} - \chi^k_{u,H}\big\|^2 
    = \cO\Big( \bE_k\big\|\xi^k_{u,0}- \chi^k_{u,0}\big\|^2 e^{-\mu_\phi H\gamma_{\xi,H}}  
    \\
    + \gamma_{\xi,H}\left(\sigma^2_A + \sigma^2_V \right) + R\ \delta^2_A\Big).
\end{multline}
Taking $R=\varTheta(\ln T)$ and  $\gamma_{\xi,H}=\frac{2\ln T}{\mu_\phi H},$ and using the fact that $\xi^k_{u,0}=\chi^k_{u,0}$,
\begin{equation}\label{e: critic.linear.diff}
    \bE_k\big\|\xi^k_{u,H} - \chi^k_{u,H}\big\|^2 = \tilde{\cO}\left( \frac{1}{H}\left(\sigma^2_A + \sigma^2_V \right) + \delta^2_A   \right),
\end{equation}
Taking the NTK radius $R$ sufficiently large so that $\xi^*_{u,k}$ lies in the NTK ball, we apply the same recipe used to obtain~\eqref{e: critic.linear.diff} to the auxiliary update~\eqref{e: linear.critic.update.decomposed}. This gives us
\begin{multline}\label{e: critic.linear}
    \bE_k\big\|\chi^k_{u,H} - \xi^*_{u,k}\big\|^2 
    \\= \tilde{\cO}\bigg( \frac{1}{H}\left(\sigma^2_A + \sigma^2_B \right) 
    + \delta^2_A + \delta^2_B + \sigma^2_V \bigg).
\end{multline}
Hence, to conclude Lemma~\ref{lem: vanilla.critic.bound}, it remains to bound the MSEs and biases of $\hA_\MLMC,$ $\hB_{u,\MLMC},$ and $\hV_{u,\MLMC}.$
By Lemma~\ref{app.lem: NTK.bounds} and~\eqref{e: linear.critic.operator}, the corresponding single-sample biases satisfy
\begin{multline}
    \big\|\bE_{\theta_k}[\hV_u(z,z')]\big\| = \tilde{\cO}(R\ m^{-1/2}),
    \\
    \big\|\bE_{\theta_k}[\hA(z,z')] - A(\theta_k) \big\| = 0,
    \\
    \big\|\bE_{\theta_k}[\hB_u(z,z')] - B_u(\theta_k) \big\| = 0,
\end{multline}
while under NTK regime, the single-sample MSEs satisfy
\begin{multline}
    \big\|\hV_u(z,z')- \bE_{\theta_k}[\hV_u(z,z')]\big\| = \tilde{\cO}(R\ m^{-1/2}),
    \\
    \big\|\hA(z,z') - \bE_{\theta_k}[\hA(z,z')] \big\|^2 = \cO(c^2_\gamma),
    \\
    \big\|\hB_u(z,z') - \bE_{\theta_k}[\hB_u(z,z')] \big\| = \tilde{\cO}(c^2_\gamma).
\end{multline}
Applying Lemma~\ref{app.lem: MLMC}, we conclude $\sigma^2_V = \tilde{\cO}(\tmix m^{-1}) $ and
\begin{equation}
    \begin{aligned}
    & \sigma^2_A = \cO(\tmix\ln \Tm),\quad  \delta^2_A = \cO(\tmix\Tm^{-1}),
    \\
    & \sigma^2_B = \cO(\tmix\ln \Tm),\quad \delta^2_B = \cO( \tmix\Tm^{-1}).
\end{aligned}
\end{equation}
Finally, we substitute the above bounds into~\eqref{e: critic.linear.diff} and~\eqref{e: critic.linear}, and use the inequality
\begin{multline}
    \bE_k\big\|\xi^k_{u,H} - \xi^*_{u,k}\big\|^2 
    \\
    \leq 2\bE_k\big\|\xi^k_{u,H} - \chi^k_{u,H}\big\|^2 + 2\bE_k\big\| \chi^*_{u,k} - \xi^*_{u,k}\big\|^2.
\end{multline}
This completes the proof of Lemma~\ref{lem: vanilla.critic.bound}. 
\end{proof}

Next, we conclude this section with the proof of Theorem~\ref{thm: critic.bounds.MLMC}. 

\begin{proof}[\textbf{Proof of Theorem~\ref{thm: critic.bounds.MLMC}}]
Recall from~\eqref{e: function .critic.MLMC} and~\eqref{e: output.critic.MLMC}, that 
    \begin{align*}
        & Q_{u,k,\MLMC}(\cdot,\cdot) := Q\left(\cdot,\cdot, \zeta^k_{u,1}\right) 
        \\
        & + \ones_{\{2^{H_k}\leq \Hm\}}2^{H_k}\left[ Q\big(\cdot,\cdot,\zeta^k_{u,2^{H_k}}\big) - Q\big(\cdot,\cdot,\zeta^k_{u,2^{H_k-1}}\big) \right]
    \end{align*}
    and 
    \begin{align*}
        & \xi_{u,k,\MLMC} 
        \\
        & := \xi^k_{u,1} + \ones_{\{2^{H_k}\leq \Hm\}} 2^{H_k}\left( \xi^k_{u,2^{H_k}} - \xi^k_{u,2^{H_k-1}} \right),
    \end{align*}
    where $H_k\sim \textnormal{Geom}(1/2).$ Then, we have
    \begin{align}
        & \bE_k\big[ Q_{u,k,\MLMC}(\cdot,\cdot)\big] - \bE_k\left[ Q\big(\cdot,\cdot, \zeta^k_{u,1}\big) \right]
        \nonumber\\
        & \overset{(a)}{=} \sum_{h=1}^{\floor{\log_2\Hm}} 2^h\cdot \bP(H_k=h)
        \nonumber\\
        &\times\left(\bE_k \left[Q\big(\cdot,\cdot,\zeta^k_{u,2^{h}}\big)\right] - \bE_k\left[ Q\big(\cdot,\cdot,\zeta^k_{u,2^{h-1}}\big) \right] \right)
        \nonumber\\
        & \overset{(b)}{=}  \sum_{h=1}^{\floor{\log_2\Hm}} 2^h\cdot 2^{-h}
        \nonumber\\
        &\times\left(\bE_k \left[Q\big(\cdot,\cdot,\zeta^k_{u,2^{h}}\big)\right] - \bE_k\left[ Q\big(\cdot,\cdot,\zeta^k_{u,2^{h-1}}\big) \right] \right)
        \nonumber\\
        & = \sum_{h=1}^{\floor{\log_2\Hm}}\left(\bE_k \big[Q\big(\cdot,\cdot,\zeta^k_{u,2^{h}}\big)\big] - \bE_k\big[ Q\big(\cdot,\cdot,\zeta^k_{u,2^{h-1}}\big) \big] \right)
        \nonumber\\
        & = \bE_k\left[Q\big(\cdot,\cdot,\zeta^k_{u,2^{\floor{\log_2\Hm}}}\big)\right] - \bE_k\left[ Q\big(\cdot,\cdot,\zeta^k_{u,1}\big) \right],
    \end{align}
    where $(a)$ uses~\eqref{e: function .critic.MLMC} and $(b)$ follows since $H_k\sim\textnormal{Geom}(1/2).$ Hence, 
    \begin{equation}\label{e: bias.MLMC.critic.function}
        \bE_k\big[ Q_{u,k,\MLMC}(\cdot,\cdot)\big]  = \bE_k\left[Q\big(\cdot,\cdot,\zeta^k_{u,2^{\floor{\log_2\Hm}}}\big)\right].
    \end{equation}
    A similar telescoping argument gives us
    \begin{equation}\label{e: bias.MLMC.critic.parameter}
        \bE_k\big[ \xi_{u,k,\MLMC}\big]  = \bE_k\left[\xi^k_{u,2^{\floor{\log_2\Hm}}}\right].
    \end{equation}
    Thus, we have the critic-function bias $\delta^2_Q $
    \begin{align*}
        & = \left\| \bE_k\Big[ Q_{u,k,\MLMC}(\cdot,\cdot) - \Ql(\cdot,\cdot, \zeta_{u,k,\MLMC})\Big]\right\|^2
        \\
        & \overset{(a)}{=} \left\| \bE_k\big[ Q\big(\cdot,\cdot, \zeta^k_{u,2^{\floor{\log_2\Hm}}} \big) - \Ql\big(\cdot,\cdot, \zeta^k_{u,2^{\floor{\log_2\Hm}}}\big)\big]\right\|^2
        \\
        & \overset{(b)}{=} \tilde{\cO}\left(1/m \right),
    \end{align*}
    where $(a)$ follows from~\eqref{e: bias.MLMC.critic.function} and~\eqref{e: bias.MLMC.critic.parameter}, together with the fact that $\Ql(\cdot,\cdot,\zeta)$ is linear in $\zeta.$ Whereas, $(b)$ follows from the NTK bounds in Lemma~\ref{app.lem: NTK.bounds} since $\zeta_{u, 2^{\floor{\log_2\Hm}}}$ is contained within the NTK ball $\cB.$ 

    Further, the critic-parameter bias satisfies
    \begin{align*}
        & \left\|\bE_k\big[\xi_{u,k,\MLMC}\big] - \xi^*_{u,k}\right\|^2 
        \\
        & \quad \qquad \overset{(a)}{=} \left\|\bE_k\big[\xi^k_{u,2^{\floor{\log_2\Hm}}}\big] - \xi^*_{u,k}\right\|^2
        \\
        & \quad\qquad \overset{(b)}{\leq} \bE_k\left\|\xi^k_{u,2^{\floor{\log_2\Hm}}} - \xi^*_{u,k}\right\|^2
        \\
        & \quad\qquad \overset{(c)}{=} \tilde{\cO}\!\left( \frac1{T^2}
        + \frac{\tmix}{\Tm} + \frac{\tmix}{2^{\floor{\log_2\Hm}}} + \frac{\tmix}{\sqrt m}\right)
        \\
        & \quad\qquad = \tilde{\cO}\!\left( \frac{\tmix}{\Tm} + \frac{\tmix}{\Hm} + \frac{\tmix}{m}\right),
    \end{align*}
    where $(a)$ follows from~\eqref{e: bias.MLMC.critic.parameter}, $(b)$ follows since $\|\cdot\|$ is convex, and $(c)$ follows from Lemma~\ref{lem: vanilla.critic.bound}.
        
    Now, we bound the critic MSE. Using a telescoping sum as earlier, we have the critic parameter MSE as
    \begin{align}\label{e: MSE.critic.MLMC.param}
        & \bE_k\big\| \xi_{u,k,\MLMC} - \xi^*_{u,k} \big\|^2 
        \nonumber\\
        & \overset{(a)}{\leq} 2\bE_k\big\|\xi^k_{u,1} - \xi^*_{u,k}\big\|
        + \sum_{h=1}^{\floor{\log_2\Hm}} 2\cdot4^h\cdot \bP(H_k=h)\ 
        \nonumber\\
        & \times \bE_k\big\| \xi^k_{u,2^h} - \xi^k_{u,2^{h-1}}\big\|^2
        \nonumber\\
        & \overset{(b)}{\leq} 2\bE_k\big\|\xi^k_{u,1} - \xi^*_{u,k}\big\| +  \sum_{h=1}^{\floor{\log_2\Hm}} 2^{2h+1}\cdot 2^{-h} 
        \nonumber\\
        & \times \left( 2\bE_k\big\| \xi^k_{u,2^h} - \xi^*_{u,k}\big\|^2 + 2\bE_k\big\| \xi^k_{u,2^{h-1}} - \xi^*_{u,k}\big\|^2 \right)
        \nonumber\\
        & = 2\bE_k\big\|\xi^k_{u,1} - \xi^*_{u,k}\big\| +  \sum_{h=1}^{\floor{\log_2\Hm}} 2^{h+2} 
        \nonumber\\
        & \times \left( \bE_k\big\| \xi^k_{u,2^h} - \xi^*_{u,k}\big\|^2 + \bE_k\big\| \xi^k_{u,2^{h-1}} - \xi^*_{u,k}\big\|^2 \right)
        \nonumber\\
        & \overset{(c)}{=} \tilde{\cO}\bigg( 1 + \sum_{h=1}^{\floor{\log_2\Hm}} \frac{2^h}{m}\bigg) = \tilde{\cO}\bigg(  \frac{\Hm}{m}\bigg) ,
    \end{align}
    where $(a)$ uses~\eqref{e: output.critic.MLMC}, $(b)$ follows since $H_k\sim\textnormal{Geom}(1/2),$ and $(c)$ follows from Lemma~\ref{lem: vanilla.critic.bound}.
    
    Similar arguments show the following:
    \begin{align}
        & \bE_k\big\| Q_{u,k,\MLMC} - \Ql(\cdot,\cdot, \zeta_{u,k,\MLMC})\big\|^2 
        \nonumber\\
        & \overset{(a)}{\leq}  2\bE_k\big\| Q(\cdot,\cdot, \zeta^k_{u,1}) - \Ql(\cdot,\cdot, \zeta_{u,k,\MLMC})\big\|^2 
        \nonumber\\
        & \quad + \sum_{h=1}^{\floor{\log_2\Hm}}  \frac{2\cdot 4^h}{2^h}
        \ \bE_k\big\| Q(\cdot,\cdot, \zeta^k_{u, 2^h}) - Q(\cdot,\cdot,\zeta^k_{u,2^{h-1}})\big\|^2
        \nonumber\\
        & = 2\bE_k\big\| Q(\cdot,\cdot, \zeta^k_{u,1}) - \Ql(\cdot,\cdot, \zeta_{u,k,\MLMC})\big\|^2 
        \nonumber\\
        & \quad + \sum_{h=1}^{\floor{\log_2\Hm}} 2^{h+1}
        \ \bE_k\big\| Q(\cdot,\cdot, \zeta^k_{u, 2^h}) - Q(\cdot,\cdot,\zeta^k_{u,2^{h-1}})\big\|^2
        \nonumber\\
        & \overset{(b)}{\leq} 2\bE_k\big\| \Ql(\cdot,\cdot, \zeta^k_{u,1}) - \Ql(\cdot,\cdot, \zeta_{u,k,\MLMC})\big\|^2 
        \nonumber\\
        & + \sum_{h=1}^{\floor{\log_2\Hm}} 2^{h+1}\bE_k\big\| \Ql(\cdot,\cdot, \zeta^k_{u, 2^h}) - \Ql(\cdot,\cdot,\zeta^k_{u,2^{h-1}})\big\|^2 
        \nonumber\\
        & \qquad + \tilde{\cO}\left(\frac{1}{m}\right) \bigg( 1  + \sum_{h=1}^{\floor{\log_2\Hm}} 2^{h+1} \bigg)
        \nonumber\\
        & \overset{(c)}{=}  \tilde{\cO} \bigg( \bE_k\big\| \zeta^k_{u,1} -  \zeta_{u,k,\MLMC}\big\|^2 + \frac{\Hm}{m} 
        \nonumber\\
        & \quad + \sum_{h=1}^{\floor{\log_2\Hm}} 2^{h+1}\bE_k\big\|\zeta^k_{u, 2^h} - \zeta^k_{u,2^{h-1}}\big\|^2  \bigg)
        \nonumber\\
        &  \overset{(d)}{=}  \tilde{\cO} \bigg( \frac{\Hm}{m} + \sum_{h=1}^{\floor{\log_2\Hm}} \tilde{\cO}\left(\frac{2^h}{m} \right) \bigg),
    \end{align}
    where $(a)$ is obtained from~\eqref{e: function .critic.MLMC} and using $H_k\sim\textnormal{Geom}(1/2),$ to obtain $(b),$ for each $\zeta \in \big\{\zeta^k_{u,1}, \zeta^k_{u,2^{h}}, \zeta^k_{u,2^{h-1}} \big\},$ we replace $Q(\cdot,\cdot,\zeta)$ with $\Ql(\cdot,\cdot,\zeta)$ and bound $\|Q(\cdot,\cdot,\zeta)- \Ql(\cdot,\cdot,\zeta)\|$ using the NTK bounds in Lemma~\ref{app.lem: NTK.bounds}, $(c)$ follows since $\Ql(\cdot,\cdot,\zeta)$ is Lipschitz in $\zeta,$ and $(d)$ uses~\eqref{e: MSE.critic.MLMC.param} and Lemma~\ref{lem: vanilla.critic.bound}. Therefore, we conclude that
    \[
        \sigma^2_Q =  \tilde{\cO}\bigg(  \frac{\Hm}{m}\bigg).
    \]
    This completes the proof of Theorem~\ref{thm: critic.bounds.MLMC}.
\end{proof}

\section{Proof of Theorem~\ref{thm: main result}}
\label{appendix: main.result.proof}

We finally combine the bounds derived in Theorems~\ref{thm: NPG.bounds} and~\ref{thm: critic.bounds.MLMC}, and Lemmas~\ref{lem: Lagrange.error},~\ref{lem: constraint.violation.rate}, and~\ref{lem: optimality.gap.bound} to prove Theorem~\ref{thm: main result}.

We choose $m=\Hm$ and $\Hm=\Tm=T,$ as prescribed in Theorem~\ref{thm: main result}. Then, Theorem~\ref{thm: critic.bounds.MLMC} gives the following critic bounds: 
\begin{equation}\label{e: critic.final.func}
    \sigma^2_Q = \tilde{\cO}\left( 1\right)   \quad \text{and} \quad  \delta^2_Q = \tilde{\cO}\left( 1/T\right),
\end{equation}
and 
\begin{equation}\label{e: critic.final.param}
    \begin{aligned}
        \bE_k\left\| \xi_{u,k,\MLMC} - \xi^*_{u,k} \right\|^2 & = \tilde{\cO}\left( 1\right),
        \\
        \left\|\bE_k\big[\xi_{u,k,\MLMC}\big] - \xi^*_{u,k}\right\|^2 &= \tilde{\cO}\left( \tmix/T\right).
    \end{aligned}
\end{equation}
Plugging the bounds from~\eqref{e: critic.final.func} and~\eqref{e: critic.final.param} into Theorem~\ref{thm: NPG.bounds} and taking $\bH=\varTheta(\ln T),$ we get
\begin{equation}\label{e: NPG.final}
    \begin{aligned}
        \bE_k\big\| w_k - \wS_k \big\|^2 
        & = \tilde{\cO}\bigg( \tmix + \epsap \bigg),
        \\
        \big\| \bE_k[w_k] - \wS_k\big\|^2
        & = \tilde{\cO}\bigg( \epsap  + \frac{\tmix}{T} \bigg).
    \end{aligned}
\end{equation}
Next, we substitute the above NPG bias and MSE bounds into Lemma~\ref{lem: Lagrange.error} and choose $\alpha=1/\sqrt{K}$ and $K=\varTheta(T)$ to get the following Lagrange error:
\begin{align}\label{e: Lagrange.final}
    &\frac{1}{K}\sum_{k=0}^{K-1}\bE\big[\cL(\piS, \lambda_k) - \cL(\theta_k,\lambda_k) \big] 
    \nonumber\\
    &= \cO \bigg( \sqrt{\epsb} + \sqrt{\epsap} + \tmix^2\alpha + \frac{1}{\alpha K} + \frac{\sqrt{\tmix}}{\sqrt{T}}  \bigg)
    \nonumber\\
    & = \tilde{\cO} \bigg( \sqrt{\epsb} + \sqrt{\epsap} + \frac{\tmix^2}{\sqrt{T}}  \bigg).
\end{align}
Finally, we plug the bounds from~\eqref{e: Lagrange.final} and~\eqref{e: critic.final.param} into Lemma~\ref{lem: optimality.gap.bound} and choose $\beta=1/\sqrt{T}.$ This gives us the following averaged error:
\begin{equation*}
    J^{\piS}_r -\frac{1}{K}\sum_{k=0}^{K-1}\bE\big[\Jthk_r \big] 
    = \tilde{\cO} \bigg( \sqrt{\epsb} + \sqrt{\epsap} + \frac{\tmix^2}{\sqrt{T}} \bigg).
\end{equation*}
Likewise, plugging~\eqref{e: Lagrange.final} and~\eqref{e: critic.final.param} into Lemma~\ref{lem: constraint.violation.rate} gives the constraint violation rate:
\begin{align*}
        -\frac{1}{K} \sum_{k=0}^{K-1} \bE\left[\Jthk_c \right] = \tilde{\cO} \bigg( \sqrt{\epsb} + \sqrt{\epsap} + \frac{\tmix^2}{\sqrt{T}} \bigg).
\end{align*}
This concludes the proof of Theorem~\ref{thm: main result}. \hfill\qed

\section{Technical lemmas}
\label{appendix: technical.lemmas}

The following lemma is the standard performance difference lemma~\cite[Lemma 4]{bai2024regret} for average-reward MDPs.
\begin{lemma}[\textbf{Performance Difference Lemma}]\label{app.lem: performance.diff}
    For any two policies $\pth$ and $\pi_{\theta'},$ and $u\in \{r,c\},$ we have
    \[        
        \Jp_u - J^{\pi'}_u \leq \ \bE_{(s,a)\sim \nu^\pi}[A_u^{\theta'}(s,a)],
    \]
    where $A^{\theta'}_u$ is the advantage function of policy $\pi_{\theta'}$ as defined in~\eqref{e: advantage}.
\end{lemma}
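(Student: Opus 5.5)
The statement to prove is the Performance Difference Lemma (Lemma~\ref{app.lem: performance.diff}). In fact I will prove it with equality. The approach is the standard average-reward argument. Fix $u\in\{r,c\}$, write $\pi=\pth$ and $\pi'=\pi_{\theta'}$, and let $Q^{\theta'}_u$ be any solution of the Bellman equation~\eqref{e: bellman} for $\pi'$. Assumption~\ref{a: ergodicity} guarantees such a solution exists and is unique up to an additive constant. The constant is harmless, because $A^{\theta'}_u$ in~\eqref{e: advantage} is invariant to it. I also set $V^{\theta'}_u(s):=\bE_{a\sim\pi'(\cdot|s)}[Q^{\theta'}_u(s,a)]$.

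First, I rewrite the Bellman equation for $\pi'$ so that it holds pointwise at every $(s,a)$:
\[
    u(s,a) = J^{\pi'}_u + Q^{\theta'}_u(s,a) - \bE_{s'\sim\cP(\cdot|s,a)}\big[V^{\theta'}_u(s')\big].
\]
This is exactly~\eqref{e: bellman}, because the inner expectation over $a'\sim\pi'(\cdot|s')$ of $Q^{\theta'}_u(s',a')$ equals $V^{\theta'}_u(s')$. The key point is that this identity holds for every state-action pair, not just in expectation under $\nu^{\theta'}$. Consequently I may average it under any distribution, in particular under $\nu^\pi$.

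Next, I take expectation of both sides over $(s,a)\sim\nu^\pi$. The left side becomes $\bE_{\nu^\pi}[u(s,a)]$, which equals $J^\pi_u$ by the ergodic characterization stated after~\eqref{e: def.avg.utility}. On the right side I use the stationarity $(d^\pi)^\top\cP^\pi=(d^\pi)^\top$: if $(s,a)\sim\nu^\pi$ and $s'\sim\cP(\cdot|s,a)$, then $s'\sim d^\pi$. Hence
\[
    \bE_{(s,a)\sim\nu^\pi}\,\bE_{s'}\big[V^{\theta'}_u(s')\big]=\bE_{s\sim d^\pi}\big[V^{\theta'}_u(s)\big]=\bE_{(s,a)\sim\nu^\pi}\big[V^{\theta'}_u(s)\big].
\]
Substituting, I obtain $J^\pi_u = J^{\pi'}_u + \bE_{\nu^\pi}[Q^{\theta'}_u(s,a)-V^{\theta'}_u(s)] = J^{\pi'}_u+\bE_{\nu^\pi}[A^{\theta'}_u(s,a)]$, which is the claimed inequality with equality.

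The only delicate step is justifying the manipulations with expectations. This requires finiteness of $Q^{\theta'}_u$, which holds for finite $\cS,\cA$, or more generally under ergodicity with bounded $u$, so that all expectations exist. It also requires the stationarity of $d^\pi$ under the one-step transition. Both follow from Assumption~\ref{a: ergodicity}, so I do not expect a real obstacle beyond stating these facts carefully.
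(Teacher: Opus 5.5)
Your proof is correct, and it is the standard argument behind the result the paper cites in lieu of a proof (Lemma 4 of Bai et al.\ (2024)): write the Bellman equation for $\pi'$ pointwise, average it under $\nu^\pi$, and use stationarity of $d^\pi$ to turn the next-state term into $\bE_{d^\pi}[V^{\theta'}_u]$. Proving equality rather than the stated inequality is fine, and it matches how the lemma is actually used in step (d) of the proof of Lemma~\ref{lem: Lagrange.error}, which is written as an equality.
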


The following lemma, proved in~\cite[Lemma 15]{Satheesh2026GlobalCO},
collects several useful bounds for neural critics under the NTK regime.
\begin{lemma}[\textbf{NTK bounds}]\label{app.lem: NTK.bounds}
    Choose $R=\cO(\ln T)$ and let $\xi^k_{u,h}\in \cB,$ for all $0\leq h<h_k.$  Then, the following holds with probability at least $(1-\varepsilon -2Le^{-Cm}):$ 
    
    \begin{enumerate}[label = (\alph*)]
        \item $\left\|\nabla_\zeta Q(\cdot,\cdot,\zeta)\right\| \leq C_1;$ $\ \left|Q(\cdot,\cdot,\zeta)\right| =\cO\big(\sqrt{\ln(h_k/\varepsilon)}\big);$ 
        \item $\left\|Q(\cdot,\cdot,\zeta) - \Ql(\cdot,\cdot,\zeta) \right\| =\cO\Big(\frac{\sqrt{\ln(h_k/\varepsilon)}}{\sqrt{m}}\Big);$
        \item $\left\|\nabla_\zeta Q(\cdot,\cdot,\zeta) - \nabla_\zeta Q(\cdot,\cdot,\zeta_0)\right\| =\cO\Big(\frac{\sqrt{\ln(h_k/\varepsilon)}}{\sqrt{m}}\Big);$
        \item $\|\hV_u(\cdot,\cdot,\cdot,\cdot)\| = \cO\Big(\frac{\sqrt{\ln(h_k/\varepsilon)}}{\sqrt{m}}\Big).$
    \end{enumerate}
    
\end{lemma}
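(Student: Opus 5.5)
The statement is Lemma~\ref{app.lem: NTK.bounds}, the NTK bounds collected from \cite[Lemma 15]{Satheesh2026GlobalCO}. I would prove it with the standard wide-network argument at a fixed input, combined with a union bound over inputs and iterates.

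\textbf{Step 1: spectral control at initialization.} Using Gaussian concentration for random matrices, show that with probability at least $1-2Le^{-Cm}$ every initial weight matrix satisfies $\|W^{(\ell)}_0\|_{op}=\cO(\sqrt m)$. The factor $L$ comes from a union bound over layers.

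\textbf{Step 2: perturbation within the ball.} Work on this event and take any $\zeta\in\cB$, so that $\|W^{(\ell)}-W^{(\ell)}_0\|\le R=\cO(\ln T)$. Here $R\ll\sqrt m$. Then $\|W^{(\ell)}\|_{op}=\cO(\sqrt m)$ still holds. Combined with the $1/\sqrt m$ scaling and the $L_1$-Lipschitz activation, induction over layers gives $\|x^{(\ell)}\|=\cO(1)$ uniformly in $\ell$.

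\textbf{Step 3: gradient bound and output bound, part (a).} Backpropagating through the layers gives $\|\nabla_\zeta Q(\cdot,\cdot,\zeta)\|\le C_1$. For the output, $Q(s,a,\zeta_0)=m^{-1/2}b^\top x^{(L)}$. Conditionally on the hidden layers this is sub-Gaussian with $\cO(1)$ variance. Take a union bound over the finitely many inputs that are actually queried, namely the $\cO(h_k)$ state-action pairs visited. This gives $|Q(\cdot,\cdot,\zeta_0)|=\cO(\sqrt{\ln(h_k/\varepsilon)})$ with probability at least $1-\varepsilon$. The bound then extends to any $\zeta\in\cB$ through the gradient bound, because the added term is at most $C_1R$.

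\textbf{Step 4: gradient smoothness, part (c).} Show the Hessian bound $\|\nabla_\zeta^2Q(\cdot,\cdot,\zeta)\|=\tilde\cO(m^{-1/2})$ on $\cB$. This uses the $L_2$-smoothness of $\sigma$: each second-derivative term carries an extra factor $1/\sqrt m$ that the operator-norm bounds do not cancel. Part (c) then follows from the mean value theorem, with the radius absorbed into the logarithmic factor.

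\textbf{Step 5: linearization error, part (b).} Write the Taylor remainder in integral form:
\[Q(\cdot,\cdot,\zeta)-\Ql(\cdot,\cdot,\zeta)=\int_0^1\big(\nabla_\zeta Q(\zeta_0+t(\zeta-\zeta_0))-\nabla_\zeta Q(\zeta_0)\big)^\top(\zeta-\zeta_0)\,dt.\]
Bounding the integrand with part (c) and using $\|\zeta-\zeta_0\|\le R$ gives (b).

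\textbf{Step 6: remainder term, part (d).} The term $\hV_u$ collects exactly the differences $Q(\cdot,\cdot,\zeta)-\Ql(\cdot,\cdot,\zeta)$ at $z$ and at $z'$. Each is multiplied by the fixed feature vector $\psi(z)=\nabla_\zeta Q(z,\zeta_0)$, whose norm is at most $C_1$ by part (a). Applying (b) at both points gives (d).

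\textbf{Main obstacle.} The hardest step is the $m^{-1/2}$ Hessian estimate uniformly over $\cB$ and over all queried inputs. It requires carefully tracking the second-order terms of the multilayer composition layer by layer. I would try the inductive bounds of the Liu--Zhu--Belkin type first, and obtain uniformity over the data points from the same union bound over the $\cO(h_k)$ points that produces the $\ln(h_k/\varepsilon)$ factor.
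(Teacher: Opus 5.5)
The paper gives no proof of its own and imports this lemma from \cite[Lemma~15]{Satheesh2026GlobalCO}, and your outline is the standard NTK argument behind such bounds, so it is essentially the same approach: Gaussian spectral-norm bounds at initialization for the $2Le^{-Cm}$ term, a sub-Gaussian tail plus a union bound for the $\ln(h_k/\varepsilon)$ term, a Liu--Zhu--Belkin Hessian bound on $\cB$ for (c), the integral Taylor remainder for (b), and (d) because the remainder is $\nabla_\zeta Q(z,\zeta_0)$ times a difference of linearization errors at $z$ and $z'$. Two points need tightening when you write it out: in the standard NTK scaling the $m^{-1/2}$ Hessian factor comes from an $\ell_\infty$ bound on the backpropagated vectors $\partial Q/\partial x^{(\ell)}$, which relies on the random initialization and on $\|b_u\|_\infty$ being small, and not from operator-norm bounds, which for the hidden-layer blocks only give $\cO(1)$; and the union bound over ``the $\cO(h_k)$ visited pairs'' is too weak, because each inner step uses an MLMC trajectory of length up to $\Tm$, the visited pairs depend on $\zeta_0$ through $\theta_k$ when one initialization is shared across outer iterations, and the bounds are later applied inside $\bE_{\theta_k}[\cdot]$, so you should union-bound over the whole finite set $\cS\times\cA$ instead, at the price of $\ln(|\cS||\cA|/\varepsilon)$ in place of $\ln(h_k/\varepsilon)$.
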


The following lemma, proved in~\cite[Theorem 2]{pmlr-v267-ganesh25b},
characterizes the bias and MSE of a stochastic linear recursion.
\begin{lemma}[\textbf{Stochastic Linear Recursion}]\label{app.lem: gen.linear}
    Consider the linear stochastic recursion
    \begin{equation*}
        x_{h+1} = x_h + \gamma\big[\hat{q}_h - \hat{P}_h x_h \big],
    \end{equation*}
    where $(\hat{q}_h, \hat{P}_h)_{h=0}^{H-1}$ are noisy estimates for $(q, P)\in \bR^d\times\bR^{d\times d}.$ The target is an $\xS$ such that $P\xS=q.$ Suppose that $(\hat{q}_h,\hat{P}_h)$ satisfies:
    \begin{align*}
        \bE_h\|\hat{P}_h - P \|^2 \leq \sigma^2_P,& \quad \bE_h\|\hat{q}_h - q \|^2 \leq \sigma^2_q, 
        \\
        \|\bE_h[\hat{P}_h] - P \|^2 \leq \delta^2_P,&  \quad  \|\bE_h[\hat{q}_h] - q \|^2 \leq \delta^2_q.
    \end{align*}
    Further, let
    \begin{multline*}
        \|\bE[\hat{q}_h] - q \|^2 \leq \bar{\delta}^2_q,  \quad \lambda_P\leq \|P\|\leq \Lambda_P,
        \\
        \|\hat{q}_h\|\leq \Lambda_q, \quad \text{and} \quad \gamma\leq \frac{\lambda_P}{(24\sigma^2_P + 8\Lambda^2_P)}.
    \end{multline*}
    Then, after $H$ iterations, we have
    \begin{align*}
        \bE\|x_H - \xS\|^2  & \leq \|x_0 - \xS\|^2\ e^{-\gamma\lambda_P H} + \cO\left( \gamma R_0 + R_1 \right)
        \\
        \|\bE[x_H] - \xS \|^2 
        &\leq \|x_0 - \xS\|^2\big(e^{-\bbeta\lambda_P H} + \delta^2_P \big)
        \\
        & \qquad\qquad\quad + \cO\Big( \delta^2_P\left( \gamma R_0 +  R_1 \right)  + \bar{R}_1 \Big),
    \end{align*}
    where $R_0 = \cO\left(\sigma^2_P + \sigma^2_q \right),$
    \begin{equation*}
        R_1 = \cO\left(\delta^2_P + \delta^2_q  \right),\
        \text{ and } \ \bar{R}_1 = \cO\left(\delta^2_P + \bar{\delta}^2_q \right).
    \end{equation*}
\end{lemma}

The following lemma is adapted for biased estimates from~\cite[Lemma 3.1]{dorfman2022adapting}, and summarizes the computational cost, bias, and MSE guarantees of MLMC estimation.
\begin{lemma}[\textbf{MLMC bounds}]\label{app.lem: MLMC}
    Let $(Z_t)$ be a time-homogeneous ergodic Markov chain with a unique invariant distribution $d_Z,$ and a mixing-time $\tmix.$ Assume that $F(x,Z)$ is an estimate of $F(x).$ Let for all $t\geq 0,$
    \begin{multline*}
        \|\bE_{d_Z}[F(x,Z)] - F(x)\|^2\leq \delta^2 \quad \\
        \text{and} \quad \bE\| F(x,Z_t) - \bE_{d_Z}[F(x,Z)]\|^2 \leq \sigma^2.
    \end{multline*}
    Further, let $Q\sim \textnormal{Geom(1/2)}.$ Then, the MLMC estimator defined as 
    \[
        g_\MLMC := g_1 + \ones_{2^Q\leq \Tm}\cdot 2^Q\left( g_{2^Q} - g_{2^{Q-1}} \right),
    \]
    where $g_j:= j^{-1}\sum_{t=0}^{j-1}F(x,Z_t),$ satisfies the following bounds.
    \begin{enumerate}[label = (\alph*)]
        \item $\bE[g_\MLMC] = \bE[g_{2^{\floor{\log_2\Tm}}}];$
        \item $\bE\| F(x) - g_\MLMC\|^2 = \cO\left( \sigma^2\tmix\log_2\Tm + \delta^2 \right);$
        \item $\|F(x) - \bE[g_\MLMC]\|^2 = \cO\left( \sigma^2\tmix \Tm^{-1} + \delta^2 \right).$
    \end{enumerate}
\end{lemma}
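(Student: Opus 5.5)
The statement to prove is the last one shown, Lemma~\ref{app.lem: MLMC} (MLMC bounds). I will proceed in three parts: unbiasedness by telescoping, a bias bound for long ergodic averages, and a variance bound using geometric weights.

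\textbf{Part (a).} Write $g_\MLMC = g_1 + \ones_{\{2^Q\leq \Tm\}}2^Q(g_{2^Q}-g_{2^{Q-1}})$ and use $\bP(Q=j)=2^{-j}$. Conditioning on $Q$, which is independent of the chain, gives
\[
\bE[g_\MLMC]=\bE[g_1]+\sum_{j=1}^{\floor{\log_2\Tm}}\big(\bE[g_{2^j}]-\bE[g_{2^{j-1}}]\big)=\bE\big[g_{2^{\floor{\log_2\Tm}}}\big],
\]
since the weight $2^j$ cancels the probability $2^{-j}$ and the sum telescopes. This settles (a).

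\textbf{Part (c).} By (a), $F(x)-\bE[g_\MLMC]=\big(F(x)-\bE_{d_Z}F\big)+\big(\bE_{d_Z}F-\bE[g_N]\big)$ with $N=2^{\floor{\log_2\Tm}}\geq \Tm/2$. The first term is at most $\delta$ in norm. For the second, let $\bar F(z)=F(x,z)-\bE_{d_Z}F$. Geometric mixing gives $\|\bE[\bar F(Z_t)]\|\leq 2\,\|\bar F\|_\infty\, 2^{-\floor{t/\tmix}}$, so the averaged bias is $\cO(\sigma\tmix/N)$. Squaring yields $\cO(\sigma^2\tmix^2/\Tm^2)\le\cO(\sigma^2\tmix/\Tm)$ when $\tmix\leq\Tm$. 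If $\tmix>\Tm$, the trivial bound $\sigma^2\leq\sigma^2\tmix/\Tm$ suffices. Following \cite{dorfman2022adapting}, I will interpret $\sigma$ as a uniform bound on $\bar F$, which is what the applications in the paper actually provide.

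\textbf{Part (b).} First split the error as $\bE\|F(x)-g_\MLMC\|^2\leq 2\delta^2+2\,\bE\|g_\MLMC-\bE_{d_Z}F\|^2$. The key input is the standard Markov-chain concentration bound
\[
\bE\|g_j-\bE_{d_Z}F\|^2=\cO(\sigma^2\tmix/j).
\]
I would prove it by expanding the square and bounding each covariance $\bE\langle\bar F(Z_s),\bar F(Z_t)\rangle$ by $\cO(\sigma^2 2^{-|t-s|/\tmix})$, which follows from conditioning on $Z_s$ and using mixing. Then $\bE\|2^Q(g_{2^Q}-g_{2^{Q-1}})\ones\|^2\leq\sum_j 2^{-j}4^j\cdot\cO(\sigma^2\tmix/2^{j})=\cO(\sigma^2\tmix\log_2\Tm)$, because each of the $\floor{\log_2\Tm}$ levels contributes $\cO(\sigma^2\tmix)$. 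The term $\bE\|g_1-\bE_{d_Z}F\|^2\leq\sigma^2$ is lower order. Combining these gives (b).

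\textbf{Main obstacle.} The hardest step is the concentration bound and the covariance decay, because the chain does not start from stationarity. I would handle this by coupling with the stationary chain at cost $\cO(\sigma^2\tmix/j)$, or by absorbing the initial-distribution error into the same geometric sum. This again needs a sup-norm bound on $\bar F$, and I will state that requirement explicitly.
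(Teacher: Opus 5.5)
Your proof is correct. The paper gives no argument of its own: it invokes \cite[Lemma 3.1]{dorfman2022adapting} after splitting off the bias $\bE_{d_Z}[F(x,Z)]-F(x)$ by a triangle inequality. Your (a) and (b) are that same argument (telescoping, then a per-level second moment $\cO(\sigma^2\tmix/2^j)$ weighted by $2^{-j}4^j$). You make it self-contained by deriving $\bE\|g_j-\bE_{d_Z}F\|^2=\cO(\sigma^2\tmix/j)$ from covariance decay.

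The genuine difference is in (c). The stated $\sigma^2\tmix/\Tm$ rate follows in one line from (a) and Jensen: $\|\bE_{d_Z}F-\bE[g_N]\|^2\le\bE\|\bE_{d_Z}F-g_N\|^2=\cO(\sigma^2\tmix/N)$ with $N=2^{\floor{\log_2\Tm}}$. So the bias is inherited from the MSE bound you already need for (b). You instead sum the geometric decay of $\|\bE[\bar F(Z_t)]\|$ directly. That costs a case split on $\tmix$ versus $\Tm$, but it yields the sharper $\cO(\sigma^2\tmix^2/\Tm^2+\delta^2)$.

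Three smaller remarks. First, what you call the main obstacle is not one. Conditioning on $Z_s$ gives $\|\bE[\bar F(Z_t)\mid Z_s]\|\le 2\sigma\,2^{-\floor{(t-s)/\tmix}}$ uniformly in the value of $Z_s$, so a non-stationary start costs nothing and no coupling is needed. Second, you are right to strengthen the hypothesis to a sup-norm bound on $\bar F$. The TV-mixing step needs $\sup_z\|\bar F(z)\|$, which a second-moment bound along the trajectory does not provide. The cited source assumes an almost-sure bound, and the paper's Fisher-matrix application supplies one via $\|\hat F(s_t,a_t)-F(\theta_k)\|^2\le 4G_1^4$. Third, absorbing $\bE\|g_1-\bE_{d_Z}F\|^2\le\sigma^2$ into $\sigma^2\tmix\log_2\Tm$ tacitly needs $\Tm\ge 2$.
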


%



\end{document}